\newcommand{\vari}{\ensuremath{i}}
\newcommand{\varj}{\ensuremath{j}}
\newcommand{\vark}{\ensuremath{k}}
\newcommand{\varl}{\ensuremath{l}}
\newcommand{\digraph}{\ensuremath{(\vertices,\arcs)}}
\newcommand{\scorearcs}{\ensuremath{c(\arcss)}}
\newcommand{\digraphname}{\ensuremath{D}}
\newcommand{\vertices}{\ensuremath{V}}
\newcommand{\paset}[2]{\ensuremath{\mathrm{Pa}(#1,#2)}}
\newcommand{\localscorevec}{\ensuremath{c}}
\newcommand{\localscore}[2]{\ensuremath{c_{#1 \leftarrow #2}}}
\newcommand{\localscoredash}[2]{\ensuremath{c'_{#1 \leftarrow #2}}}
\newcommand{\varsubsetj}{\ensuremath{J}}
\newcommand{\pps}[1]{\ensuremath{{\cal P}(#1)}}
\newcommand{\ppsv}[1]{\ensuremath{{\cal P}_{\vertices}(#1)}}
\newcommand{\ppsdash}[1]{\ensuremath{{\cal P}'(#1)}}
\newcommand{\ppsalone}{\ensuremath{{\cal P}}}
\newcommand{\ppsvalone}{\ensuremath{{\cal P}_{\vertices}}}
\newcommand{\ppsdashalone}{\ensuremath{{\cal P}'}}
\newcommand{\ppsvdashalone}{\ensuremath{{\cal P}_{\vertices'}}}
\newcommand{\cluster}{\ensuremath{C}}
\newcommand{\arcs}{\ensuremath{A}}
\newcommand{\palim}{\ensuremath{\kappa}}
\newcommand{\arcss}{\ensuremath{B}}
\newcommand{\arcsets}{\ensuremath{\mathbf{F}}}
\newcommand{\arcsetsfull}{\ensuremath{{\cal F}}}
\newcommand{\digraphh}{\ensuremath{(\vertices,\arcss)}}
\newcommand{\arcweight}[2]{\ensuremath{c(#1 \leftarrow #2)}}
\newcommand{\reals}{\ensuremath{\mathbb{R}}}
\newcommand{\integers}{\ensuremath{\mathbb{Z}}}
\newcommand{\asppoly}[1]{\ensuremath{P_{\mathrm{AC}}(#1)}}
\newcommand{\fvpoly}[1]{\ensuremath{P_{\mathrm{F}}(#1)}}
\newcommand{\fvpolyy}[1]{\ensuremath{P_{\mathrm{F}}(#1)}}
\newcommand{\fvpolyyalone}{\ensuremath{P_{\mathrm{F}}}}
\newcommand{\fvpolyyo}[1]{\ensuremath{P_{\mathrm{F}}(#1,<)}}
\newcommand{\fvpolyys}[2]{\ensuremath{P_{\mathrm{F}}(#1,#2)}}
\newcommand{\assppolyalone}{\ensuremath{P_{\mathrm{G}}}}
\newcommand{\assppoly}[1]{\ensuremath{P_{\mathrm{G}}(#1)}}
\newcommand{\ascppoly}[1]{\ensuremath{P_{\mathrm{CLUSTER}}(#1)}}
\newcommand{\ascppolyalone}{\ensuremath{P_{\mathrm{CLUSTER}}}}
\newcommand{\lpsol}{\ensuremath{x^{*}}}
\newcommand{\coeffs}{\ensuremath{\pi}}
\newcommand{\rhs}{\ensuremath{\pi_{0}}}
\newcommand{\ndags}{\ensuremath{t}}
\newcommand{\dagindex}{\ensuremath{\iota}}
\newcommand{\card}[1]{\left\lvert {#1} \right\rvert}
\newcommand{\gobnilp}{{\sc gobnilp}}
\newcommand{\indic}[1]{\ensuremath{\mathbbm{1}(#1)}}
\def\conv{\mbox{\rm conv}\,}
\newtheorem{theorem}{Theorem}
\newtheorem{lemma}[theorem]{Lemma}
\newtheorem{proposition}[theorem]{Proposition}
\newtheorem{remark}[theorem]{Remark}
\theoremstyle{definition}
\newtheorem{definition}[theorem]{Definition}
\begin{document}

\title{Bayesian Network Structure Learning with
  Integer Programming:
Polytopes, Facets, and Complexity}

\author{\name James Cussens \email james.cussens@york.ac.uk\\
       \addr Department of Computer Science\\
       \& York Centre for Complex Systems Analysis\\
       University of York, United Kingdom
       \AND
       \name Matti J\"arvisalo \email matti.jarvisalo@helsinki.fi\\
       \addr Helsinki Institute for Information Technology HIIT\\
       Department of Computer Science\\
       University of Helsinki, Finland
       \AND
       \name Janne H. Korhonen \email  janne.h.korhonen@aalto.fi\\
       \addr  Department of Computer Science\\
       Aalto University, Finland
       \AND
       \name Mark Bartlett \email mark.bartlett@york.ac.uk \\
       \addr Department of Computer Science\\
       University of York, United Kingdom
}


\maketitle


\begin{abstract}
The challenging task of learning structures of probabilistic
graphical models is an important problem within modern AI research.
Recent years have witnessed several major algorithmic advances in 
structure learning for Bayesian networks---arguably
the most central class of graphical models---especially in what is
known as the
 score-based setting.
A successful generic
approach to optimal Bayesian network
structure learning (BNSL), based on  integer programming (IP),
is implemented in the \gobnilp{} system. 
Despite the recent algorithmic advances, current understanding of
foundational aspects underlying the IP based approach to BNSL is
still somewhat lacking.
Understanding fundamental aspects of \emph{cutting planes} and the
related \emph{separation problem}
is important not only from a purely theoretical perspective, but also since it holds out the promise of further improving the efficiency of state-of-the-art approaches to solving BNSL exactly. 
In this paper, we make several theoretical contributions towards these goals:
(i) we study the computational complexity of the separation problem, 
proving that the problem is NP-hard;
(ii) we formalise and analyse the relationship between three key polytopes underlying the IP-based approach to BNSL;
(iii) we study the facets of the three polytopes both from the theoretical and practical perspective, providing, 
via exhaustive computation, a complete enumeration of facets for low-dimensional \emph{family-variable} polytopes; and, furthermore,
(iv)~we establish a tight connection of the BNSL problem to the acyclic subgraph problem.
\end{abstract}

\section{Introduction}
\label{sec:intro}

The study of probabilistic graphical models is a central topic in 
modern artificial intelligence research.
Bayesian networks~\shortcite{koller09:_probab_graph_model} form a central
class of probabilistic graphical models that finds applications in
various domains~\shortcite{a/s16:_hugin,sheehan14:_improv_maxim_likel_recon_compl_multi_pedig}.  
A central problem related to Bayesian
networks (BNs) is that of learning them from data. An essential part of this
learning problem is to aim at learning the \emph{structure} of a Bayesian
network---represented as a directed acyclic graph---that accurately represents
the (hypothetical) joint probability distribution underlying the data.

There are two principal approaches to Bayesian network learning:
\emph{constraint-based} and \emph{score-based}. In the
constraint-based approach
\shortcite{spirtes93:_causat_predic_searc,colombo12:_learn} the goal
is to learn a network which is consistent with conditional
independence relations which have been inferred from the data.
The \emph{score-based} approach to Bayesian network structure
learning (BNSL) treats the BNSL problem as a combinatorial optimization problem of finding a BN structure
that optimises a score function for given data.  

Learning an optimal
BN structure is a computationally challenging
problem:
even the restriction 
of the BNSL problem where only 
\emph{BDe} scores~\shortcite{heckerman95:_learn_bayes} are allowed
is known to  be NP-hard~\cite{chickering96:_learn_bayes_networ_np_compl}.
Due to NP-hardness,
much work on BNSL has focused on developing approximate,
local search style algorithms~\shortcite{tsamardinos06:_bayes} that 
in general cannot  guarantee that optimal structures
 in terms of the objective function are found.
Recently,
despite its complexity, several advances in \emph{exact} approaches to
BNSL have
surfaced~\shortcite{koivisto04:_exact_bayes_struc_discov_bayes_networ,DBLP:conf/uai/SilanderM06,cussens11:_bayes_networ_learn_cuttin_planes,campos11:_effic_struc_learn_bayes_networ_const,yuan13:_learn_optim_bayes_networ,DBLP:conf/cp/BeekH15}, 
ranging from problem-specific
dynamic programming branch-and-bound algorithms to approaches based on
A$^*$-style state-space search, constraint programming, and
integer linear programming (IP), which can, with certain
restrictions, learn
provably-optimal BN structures with tens to hundreds of nodes.  

As
shown in a recent
study~\shortcite{malone14:_predic_hardn_learn_bayes_networ}, perhaps
the most successful exact approach to BNSL is provided by the
\gobnilp{}
system~\cite{cussens11:_bayes_networ_learn_cuttin_planes}. \gobnilp{}
implements a \emph{branch-and-cut} approach to BNSL, using state-of-the-art
IP solving techniques together with specialised BNSL cutting
planes. The focus of this work is on providing further understanding
of the IP approach to BNSL from the theoretical perspective.

Viewed as a constrained optimization problem, a central
source of intractability of BNSL is the \emph{acyclicity} constraint imposed on
BN structures. In the IP approach to BNSL---as implemented by \gobnilp---the acyclicity
constraint is handled in the branch-and-cut framework via 
deriving specialised cutting planes called \emph{cluster constraints}.
These cutting planes
 are found by solving a sequence of so-called \emph{sub-IPs} arising
from solutions to linear relaxations of the underlying IP formulation of BNSL without the
acyclicity constraint. Finding these cutting planes is an example of a
\emph{separation problem} for a linear relaxation solution, so called
since the cutting plane will separate that solution from the set of
feasible solutions to the original (unrelaxed) problem.
Understanding fundamental aspects of these cutting planes and the
sub-IPs used to find them is important not only from a purely theoretical perspective, 
but also since it holds out the promise
of further improving the efficiency of state-of-the-art approaches to solving BNSL exactly.
This is the focus of and underlying motivation for this article.

The main contributions of this article are the following.
\begin{itemize}
\item  We study the computational complexity of the separation problem
  solved via sub-IPs
 with connections to the general separation problem for integer programs.
As a main result, in Section~\ref{sec:complexity} we establish that the sub-IPs are themselves NP-hard to solve. From the practical
perspective, this both gives a theoretical justification for applying an exact IP solver to solve
the sub-IPs within \gobnilp, as well as motivates further work on improving the efficiency of the sub-IP solving
via either improved exact techniques and/or further approximate algorithms.
\item We formalise and analyse the relationship between three key polytopes underlying the
IP-based approach to BNSL in Section~\ref{sec:polytopes}.
Stated in generic abstract terms, starting from the \emph{digraph polytope}
defined by a linear relaxation of the  IP formulation without the acyclicity constraint,
the search progresses towards an optimal BN structure via refining the digraph polytope towards
the  \emph{family-variable polytope}, i.e., the convex hull of acyclic digraphs over the set of nodes in question.
The complete set of cluster constraints gives rise
to the \emph{cluster polytope} as an intermediate.
\item We study the \emph{facets} of the three  polytopes both from the theoretical and practical perspective (Section~\ref{sec:facets}). 
As a key theoretical result, we show that  cluster constraints are in
fact facet-defining inequalities of the family-variable polytope.
From the more practical perspective, achieved via exhaustive computation, we provide a complete enumeration of
facets for low-dimensional family-variable polytopes. Mapping to practice, explicit knowledge of 
such facets has the potential for providing further speed-ups in state-of-the-art BNSL solving by integrating
(some of) these facets explicitly into search.
\item In
Section~\ref{sec:faces} we derive facets of polytopes corresponding to
(i) BNs consistent with a given node ordering and (ii) BNs with
specified sink nodes. We then use the results on sink nodes to show
how a family-variable polytope for $p$ nodes 
can be constructed from a family-variable polytope for $p-1$ nodes
using the technique of \emph{lift-and-project}. 
\item Finally, in Section~\ref{sec:asp} we provide a tight connection of the BNSL problem to the  \emph{acyclic subgraph problem}, as well as discussing the connection of the polytope underlying this problem to the three central polytopes underlying BNSL.
\end{itemize}

Before detailing the main contributions, we recall the BNSL problem in Section~\ref{sec:bnsl}
and discuss the integer programming based approach to BNSL, central to this work, in Section~\ref{sec:ip}.

\section{Bayesian Network Structure Learning}
\label{sec:bnsl}

In this section, we recall the problem of learning optimal Bayesian network structures in the central score-based setting.

\subsection{Bayesian Networks}

A Bayesian network represents a joint probability distribution over a
set of random variables $Z = (Z_{\vari})_{\vari \in \vertices}$. A Bayesian network consists of a
\emph{structure} and \emph{parameters}:
\begin{itemize}
    \item The \emph{structure} is an acyclic digraph $\digraphh$ over
      the node set \vertices. For edge $\vari
\leftarrow \varj \in \arcss$ we say that
\vari{} is a \emph{child} of \varj{} and \varj{} is a \emph{parent} of
\vari, and for a variable $\vari \in \vertices$, we denote the set of parents of \vari{} by \paset{\vari}{\arcss}.
    \item The \emph{parameters} define a distribution for each of the random variables $Z_\vari$ for $\vari \in V$ conditional on the values of the parents, that is, the values \[\operatorname{Pr}\bigl( Z_{\vari} = z_{\vari} \mid Z_\varj = z_\varj \text{ for } \varj \in \paset{\vari}{\arcss}\bigr)\,.\]
\end{itemize}
The joint probability distribution of the Bayesian network is defined in terms of the structure and the parameters as
\[\operatorname{Pr}(Z_\vari = z_\vari \text{ for } \vari \in \vertices) = \prod_{\vari \in \vertices} \operatorname{Pr}\bigl( Z_{\vari} = z_{\vari} \mid Z_\varj = z_\varj \text{ for } \varj \in \paset{\vari}{\arcss}\bigr)\,.\]

As mentioned before, our focus is on learning Bayesian networks from
data. Specifically, we focus on the Bayesian network structure
learning (BNSL) problem. Once a BN structure has been decided, its
parameters can be learned from the data. See, for example,
\citeA{koller09:_probab_graph_model} on techniques for parameter
estimation for a given BN structure.

\subsection{Score-based BNSL}

In the integer programming based approach to BNSL which is the focus of this
work, the learning problem is cast as a constrained optimisation
problem: each candidate BN structure has a score measuring how well it
`explains' the given data and the task is to find a BN structure which maximises
that score. This score function is defined in terms of the data, but for
our purposes, it is sufficient to abstract away the details, see e.g.\ \citeA{koller09:_probab_graph_model}.

Specifically, in this paper we restrict attention to \emph{decomposable} score
functions, where the score is defined locally by the parent set choices for
each $\vari \in \vertices$. Specifically, for $\vari \in V$ and
$\varsubsetj \subseteq \vertices \setminus \{\vari\}$, let $\vari \leftarrow \varsubsetj$
denote the the pair $(\vari, \varsubsetj)$, called a
\emph{family}.
In our framework, we assume that the score function gives a \emph{local score} $\localscore{\vari}{\varsubsetj}$
for each family $\vari \leftarrow \varsubsetj$. A global score $\scorearcs$ for each candidate
structure \digraphh{} is then defined as
\begin{equation}
  \label{eq:dagscore}
\scorearcs = \sum_{\vari \in \vertices} \localscore{\vari}{\paset{\vari}{\arcss}},
\end{equation}
and the task to find an acyclic digraph \digraphh{} maximising $\scorearcs$
over all acyclic digraphs over $V$.

In practice, one may  want to restrict the set of parent sets in some way,
given the large number of possible parents sets and the NP-hardness of BNSL.
Typically this is done by limiting the cardinality of each candidate parent set, although other
restrictions, perhaps reflecting prior knowledge, can also be used. To facilitate this,
we assume that a BNSL instance also defines a set of permissible parent sets
$\pps{\vari} \subseteq 2^{\vertices \setminus \{\vari\}}$ for each node
$\vari$. For simplicity we shall only consider BNSL problems where
$\emptyset \in \pps{\vari}$ for all nodes. This also ensures that the empty
graph, at least, is a permitted BN structure.
Thus, the full formulation of the BNSL problem is as follows.

\begin{definition}[BNSL] A \emph{BNSL instance} is a tuple $(V,\ppsalone, c)$, where
\begin{enumerate}
\item $\vertices$ is a set of nodes;
\item $\ppsalone \colon \vertices \rightarrow 2^{2^{\vertices}}$ is a function
  where, for each vertex $\vari \in \vertices$, $\pps{\vari} \subseteq
  2^{\vertices \setminus \{\vari\}}$ is the set of permissible
  parent sets for that vertex, and $\emptyset \in
  \pps{\vari}$; and 
\item $c$ is a function giving the local score
  $\localscore{\vari}{\varsubsetj}$ for each $\vari \in \vertices$ and
  $\varsubsetj \in \pps{\vari}$.\end{enumerate}
Given a BNSL instance $(V,\ppsalone, c)$, the \emph{BNSL problem} is to find an edge set $\arcss \subseteq \vertices \times \vertices$ which
maximises (\ref{eq:dagscore})
subject to the following two conditions.
\begin{enumerate}
\item $\paset{\vari}{\arcss} \in \pps{\vari}$ for all $\vari \in
  \vertices$.
\item $\digraphh$ is acyclic.
\end{enumerate}
\end{definition}

\subsection{BNSL with Small Parent Sets}
\label{sec:palim}

As mentioned, it is common to put an upper bound on the
cardinality of permitted parent sets. More precisely, a common setting is that
 we have a constant $\palim$
and the BNSL instances we consider are restricted so that all $\varsubsetj \in \pps{\vari}$
satisfy $\card{\varsubsetj} \le \palim$. For the rest of the paper we use the convention 
that $\palim$ denotes this upper bound on parent set size.

In practice, BNSL instances
with large node set size can often be solved to optimality fairly quickly when $\palim$ is small.
For example, with $\palim=2$, 
\shortciteA{sheehan14:_improv_maxim_likel_recon_compl_multi_pedig} were able
to solve BNSL instances with $\card{\vertices} = 1614$ in between 3 and 42 minutes.
Even though BNSL remains NP-hard unless $\palim = 1$ \shortcite{chickering96:_learn_bayes_networ_np_compl}, such results suggest that \emph{in practice} the value of $\palim$ is an important
determining factor of the hardness of a BNSL instance.

However, we will show in the following that the situation is somewhat more subtle:
we show that any BNSL instance can be converted to a BNSL instance with
$\palim=2$ and the same set of optimal solutions without significantly
increasing the total size $\card{\vertices} + \sum_{\vari \in \vertices} \card{\pps{\vari}}$ of the instance.
This suggests, to a degree, that this total instance size is an important control parameter for 
the hardness of BNSL instances; naturally, with
larger $\palim$, a smaller number of nodes is required for a large total size.\footnote{The conversion to
a BNSL instance with $\palim=2$ presented here may influence
the  runtime performance of BNSL solvers in practice. For example,
we have observed through experimentation that the 
runtime performance of the \textsc{gobnilp} system
often degrades if the conversion is applied before
search.} 

We first introduce some useful notation identifying the set of
families in a BNSL instance.
For a given set $\vertices$ of
nodes and permitted parent sets $\pps{\vari}$, let
\[
  \arcsetsfull(\vertices,\ppsalone)  :=  \{\vari 
\leftarrow \varsubsetj \mid \vari \in \vertices, \varsubsetj \in \pps{\vari} \},\]
so that $\sum_{\vari \in \vertices} \card{\pps{\vari}} =
\card{\arcsetsfull(\vertices,\ppsalone)}$ and total instance size is
$\card{\vertices} + \card{\arcsetsfull(\vertices,\ppsalone)}$.

\begin{theorem}\label{thm:bnslp-bounded-parent-sets}
Given a BNSL instance $(\vertices, \ppsalone, \localscorevec)$ with
the property that for each $\vari \in \vertices$, $\pps{\vari}$ is downwards-closed, that is, $I \subseteq \varsubsetj \in \pps{\vari}$ implies $I \in \pps{\vari}$, we can construct another BNSL instance $(\vertices', \ppsalone', \localscorevec')$ in time $\operatorname{poly}\bigl(\card{\vertices} + \card{\arcsetsfull(\vertices,\ppsalone)}\bigr)$ such that 
\begin{enumerate}
    \item $\card{\vertices'} = O\bigl(\card{\vertices} + \card{\arcsetsfull(\vertices,\ppsalone)}\bigr)$ and $\card{\arcsetsfull(\vertices',\ppsalone')} = O\bigl(\card{\arcsetsfull(\vertices,\ppsalone)}\bigr)$,
    \item $\card{J} \le 2$ for all $\varsubsetj \in \ppsdash{i}$ and $\vari \in \vertices'$, and
    \item there is one-to-one correspondence between the optimal solutions of $(\vertices, \ppsalone, \localscorevec)$ and $(\vertices', \ppsalone', \localscorevec')$. 
\end{enumerate}
Moreover, the claim holds even when $(\vertices, \ppsalone,
\localscorevec)$ does not satisfy the downwards-closed property, with
bounds $\card{\vertices'} = O\bigl(\card{\vertices} +
\kappa\card{\arcsetsfull(\vertices,\ppsalone)}\bigr)$ and
$\card{\arcsetsfull(\vertices',\ppsalone')} =
O\bigl(\kappa\card{\arcsetsfull(\vertices,\ppsalone)}\bigr)$, where
$\kappa$ is the size of the largest parent set permitted by 
$\ppsalone$.
\end{theorem}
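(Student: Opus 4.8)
The plan is to replace every family of large cardinality by a short chain of fresh ``accumulator'' nodes, each with a parent set of size at most two, whose role in the new instance is to re-create the directed paths that a parent set would induce in the original digraph; downwards-closedness is what allows these chains to be \emph{shared} between families, keeping the number of new nodes at $O(\card{\arcsetsfull(\vertices,\ppsalone)})$.  Fix a total order on $\vertices$.  For each $\vari\in\vertices$ and each nonempty $S\in\pps{\vari}$ with largest element $j$, introduce a node $\tau_{\vari,S}$ whose permissible parent sets are $\emptyset$ and one ``active'' set, namely $\{j,\tau_{\vari,S\setminus\{j\}}\}$ when $\card{S}\ge2$ and $\{j\}$ when $S=\{j\}$; the prefix $S\setminus\{j\}$ is again in $\pps{\vari}$ because $\pps{\vari}$ is downwards-closed, so $\tau_{\vari,S\setminus\{j\}}$ is already available.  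Let $\vertices'=\vertices\cup\{\tau_{\vari,S}\}$; for $\vari\in\vertices$ set $\ppsdash{\vari}=\{\emptyset\}\cup\{\{\tau_{\vari,S}\}\mid\emptyset\ne S\in\pps{\vari}\}$ with $\localscoredash{\vari}{\{\tau_{\vari,S}\}}=\localscore{\vari}{S}$ and $\localscoredash{\vari}{\emptyset}=\localscore{\vari}{\emptyset}$; and let every accumulator node receive a bonus $M$ for choosing its active set and $0$ for choosing $\emptyset$, where $M$ is any value exceeding twice the sum of the absolute values of all original local scores.  Each $\pps{\vari}$ spawns $\card{\pps{\vari}}-1$ accumulator nodes, so the size bounds of item~1, the arity bound of item~2, and polynomiality of the construction are immediate.

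For item~3, call the \emph{lift} of a feasible acyclic structure $\digraphname$ over $\vertices$ the structure over $\vertices'$ in which the parent of $\vari$ is $\tau_{\vari,\paset{\vari}{\digraphname}}$ (or $\emptyset$ when $\paset{\vari}{\digraphname}=\emptyset$) and \emph{every} accumulator node takes its active set, and the \emph{decoding} of a feasible structure $\digraphname'$ over $\vertices'$ the structure over $\vertices$ in which $\paset{\vari}{\digraphname}=S$ whenever the parent of $\vari$ in $\digraphname'$ is $\tau_{\vari,S}$ and $\paset{\vari}{\digraphname}=\emptyset$ otherwise.  The argument rests on three claims.  (i) If $\digraphname$ is acyclic then its lift is acyclic: in the lift the arcs within the block $\{\vari\}\cup\{\tau_{\vari,S}\}$ of a node $\vari$ only increase $\card{S}$ along accumulator nodes and $\vari$ is a sink of its block, and the only arc that leaves a block towards another block is $\tau_{\vari,\paset{\vari}{\digraphname}}\to\vari$; reading off a hypothetical cycle at the $\vertices$-nodes it visits therefore produces a closed walk $\varj_1\to\varj_2\to\cdots\to\varj_1$ in which each step is realised by a path $\varj_t\to\tau_{\varj_{t+1},S}\to\cdots\to\tau_{\varj_{t+1},\paset{\varj_{t+1}}{\digraphname}}\to\varj_{t+1}$ with $S\subseteq\paset{\varj_{t+1}}{\digraphname}$, hence $\varj_t\in\paset{\varj_{t+1}}{\digraphname}$, i.e.\ a cycle of $\digraphname$ --- a contradiction.  (ii) If $\digraphname'$ is acyclic and all accumulator nodes are active, then its decoding is acyclic: an edge $j\to\vari$ of the decoding means $j\in S$ for the set $S$ with $\tau_{\vari,S}$ the parent of $\vari$ in $\digraphname'$, and chasing the sorted prefixes of $S$ up to $j$ through the (active) accumulator chain gives a path $j\rightsquigarrow\vari$ in $\digraphname'$, so a cycle of the decoding yields a cycle of $\digraphname'$.  (iii) Any feasible acyclic $\digraphname'$ has total score $c(\mathrm{decode}(\digraphname'))+M\cdot(\text{number of active accumulator nodes})$, whereas the lift of an optimal original solution is feasible and acyclic (by (i)) with \emph{all} accumulator nodes active, hence total score $\mathrm{OPT}+M\cdot N$ where $N$ is the number of accumulator nodes; by the choice of $M$, any $\digraphname'$ with a non-active accumulator node scores strictly below $\mathrm{OPT}+M\cdot N$, so every optimal $\digraphname'$ has all accumulator nodes active.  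Combining: the decoding of an optimal $\digraphname'$ is feasible and acyclic by (ii) and optimal by the score identity, the lift of an optimal original solution is an optimal $\digraphname'$, and lift and decode are mutually inverse on these sets --- the required correspondence.

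For the last sentence of the theorem, simply refrain from sharing: give each family $\vari\leftarrow J$ a private chain of $\card{J}$ accumulator nodes, one per prefix of $J$.  Then no prefix is required to be permissible, so the downwards-closed hypothesis is not needed, the same correctness argument goes through unchanged, and the number of new nodes and new families is $O\bigl(\sum_{\vari\leftarrow J}\card{J}\bigr)=O\bigl(\palim\,\card{\arcsetsfull(\vertices,\ppsalone)}\bigr)$.

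The main obstacle I expect is not any one estimate but shaping the gadget so that three things hold simultaneously: the acyclicity encoding is faithful in \emph{both} directions, which is what pins down the orientation of the chain arcs and makes (i) and (ii) work; there is a genuine bijection between \emph{optimal} solutions rather than merely equal optimal values, which is exactly what the activity bonus $M$ buys, by destroying the otherwise-free choice at accumulator nodes that lie on no selected chain; and the blow-up stays linear in $\card{\arcsetsfull(\vertices,\ppsalone)}$ in the downwards-closed case instead of gaining a factor $\palim$ --- which is the one point where downwards-closedness enters, via sharing accumulator nodes for common sorted prefixes.
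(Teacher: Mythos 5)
Your proof is correct, and it follows the same overall strategy as the paper's: introduce auxiliary nodes standing for (pieces of) large parent sets, force their parent-set choice by an extreme score offset, and check that acyclicity and scores are preserved in both directions. The gadget itself is different, though. The paper splits each large parent set $J$ into two roughly equal halves and recurses, producing a balanced binary tree of new nodes indexed by subsets of $\vertices$ that are shared globally across all children, and it forces the intended configuration by a heavy \emph{penalty} on the empty parent set; you instead build a sorted-prefix chain of accumulators $\tau_{\vari,S}$ that are private to each child $\vari$ and shared only among that child's permissible parent sets, forced by a positive \emph{bonus} on the active choice (an equivalent device). Both gadgets yield the stated bounds: in each case downwards-closedness is exactly what guarantees that the intermediate pieces (the two halves, respectively the max-removed prefix $S\setminus\{\max S\}$) are already available for re-use, and abandoning sharing costs the factor $\kappa$ in the general case. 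Where your write-up adds something beyond the paper is in the correspondence argument: the paper compresses the bidirectional acyclicity check and the bijection on optimal solutions into ``it is not difficult to see,'' whereas your claims (i)--(iii) make explicit the block structure of the lifted digraph, the path-lifting argument showing that a cycle in the decoded graph would lift to a cycle in $D'$, and the score identity $c(D') = c(\mathrm{decode}(D')) + M\cdot(\text{number of active accumulators})$ that forces every accumulator to be active in any optimal solution. The only (cosmetic) trade-off is that your chains have depth up to $\kappa$ rather than the paper's $O(\log \kappa)$, which is immaterial for the theorem as stated.
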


\begin{figure}
  \includegraphics[width=\linewidth]{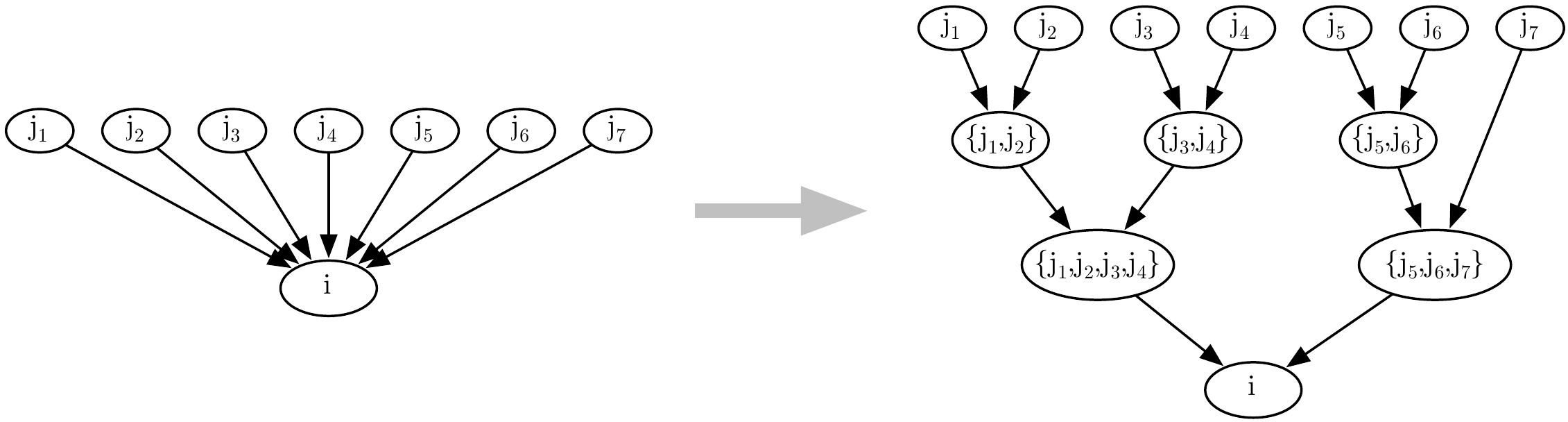}
  \caption{The basic idea of the reduction in Theorem~\ref{thm:bnslp-bounded-parent-sets}. Selecting the parent set $\{ j_1, j_2, j_3, j_4, j_5, j_6, j_7 \}$ for node $i$ in the original instance corresponds to selecting the parent set $\bigl\{ \{ j_1, j_2, j_3, j_4 \}, \{ j_5, j_6, j_7 \}\bigr\}$ in the transformed instance. Note that the parent sets for the nodes labelled with sets are fixed.}
  \label{fig:thm2}
\end{figure}

\begin{proof}
Given $(V,\ppsalone,c)$, we construct a new instance $(V',\ppsalone',c')$ as follows.
As a first step, we iteratively go through the permissible parent sets $\varsubsetj \in \pps{\vari}$ for each $\vari \in \vertices$ and add the corresponding new parent set to $\ppsdash{\vari}$ using the following rules; Figure~\ref{fig:thm2} illustrates the basic idea.
\begin{itemize}
    \item If $\card{\varsubsetj} \le 2$, we add $\varsubsetj$ to $\ppsdash{\vari}$ with score $\localscoredash{\vari}{\varsubsetj} = \localscore{\vari}{\varsubsetj}$.
    \item If $\varsubsetj = \{ \varj, \vark, \varl \}$, then we create a new node $I \in \vertices'$ corresponding to the subset $I = \{ \vark, \varl \}$, and add the set $\varsubsetj' = \{ \varj, I \}$ to $\ppsdash{\vari}$ with score $\localscoredash{\vari}{\varsubsetj'} = \localscore{\vari}{\varsubsetj}$.
    \item If $\card{\varsubsetj} \ge 4$, we partition $\varsubsetj$ into two sets $\varsubsetj_1$ and
  $\varsubsetj_2$ with $\card{|\varsubsetj_{1}| - |\varsubsetj_{2}|} \leq 1$ and create new corresponding nodes $\varsubsetj_1, \varsubsetj_2 \in \vertices'$. We then add $\varsubsetj' = \{ \varsubsetj_1, \varsubsetj_2 \}$ to $\ppsdash{\vari}$ with score $\localscoredash{\vari}{\varsubsetj'} = \localscore{\vari}{\varsubsetj}$.
\end{itemize}
In the above steps, new nodes corresponding to subsets of $\vertices$
will be created only once, re-using the same node if it is required multiple times.

Unless all original parent sets have size at most two, this process will create new nodes $\varsubsetj \in \vertices'$ corresponding to subsets $\varsubsetj \subseteq \vertices$ with $\card{\varsubsetj} \geq 2$. For each 
such new node $\varsubsetj$, we allow exactly one permissible parent set (of size $2$) besides the empty set, as follows.
\begin{itemize}
  \item If $\varsubsetj = \{\varj, \vark\}$, then set $\ppsdash{\varsubsetj}
    = \{ \emptyset, \{\varj, \vark\} \}$.
  \item If $\varsubsetj = \{\varj, \vark, \varl\}$, then set $\ppsdash{\varsubsetj}
    = \bigl\{ \emptyset, \{\varj, \{\vark,\varl\} \} \bigr\}$, choosing $\varj$ arbitrarily and creating a new node $\{\vark,\varl\}$ if necessary.
  \item If $\card{\varsubsetj} \ge 4$, then we partition $\varsubsetj$ into some $\varsubsetj_1$ and
    $\varsubsetj_2$ where $\card{\card{\varsubsetj_{1}} - \card{\varsubsetj_{2}}} \leq 1$
    and set $\ppsdash{\varsubsetj} = \{ \emptyset, \{ \varsubsetj_{1}, \varsubsetj_{2} \} \}$, again creating new nodes $\varsubsetj_1$ and $\varsubsetj_2$ if necessary.
  \end{itemize}
However, we want to disallow the choice of $\emptyset$ for all new
nodes in all optimal solutions, so we will set $\localscoredash{\varsubsetj}{\emptyset} =  \min\bigl({-\card{\vertices}}, M\card{\vertices}\bigr)$, where $M$ is the minimum score given to any family by $c$, and set the local score for the other parent set choices to $0$.
  
The creation of these parent sets may require the creation of yet further new nodes. If so, we create the permissible parent sets for each of them in the same way, iterating the process as long as necessary. This will clearly terminate, and if $(\vertices, \ppsalone, \localscorevec)$ satisfies the downwards-closed property, this will create exactly one new node in $\vertices'$ for each original permissible parent set, implying the bounds for $\card{\vertices'}$ and $\card{\arcsetsfull(\vertices',\ppsalone')}$. If the original instance does not have the downwards-closed property, the process may create up to $\card{\varsubsetj}$ new nodes for each original $\varsubsetj \in \pps{\vari}$, which in turn implies the weaker bound.

Finally, note that any optimal solution to $(\vertices', \ppsalone',
\localscorevec')$ cannot pick the empty set as a parent set for a node corresponding to a subset of $\vertices$. It is now not difficult to see that, from any optimal solution to our newly created BNSL instance, we can `read off' an optimal solution to the original  instance.
\end{proof}

\section{An Integer Programming Approach to Bayesian Network Structure Learning}
\label{sec:ip}

In this section we discuss integer programming based approaches to BNSL, focusing on the
branch-and-cut approach implemented by the 
\textsc{gobnilp} system for BNSL which motivates the theoretical results presented in this article.

\subsection{An Integer Programming Formulation of BNSL}

Recall, from Section~\ref{sec:bnsl}, that we refer to a node $\vari$
together with its parent set $\varsubsetj$ as a \emph{family}. In the
IP formulation of BNSL we create a \emph{family variable} $x_{\vari
  \leftarrow \varsubsetj}$ for each potential family. A family
variable is a binary indicator variable: $x_{\vari \leftarrow  \varsubsetj} = 1$ if $\varsubsetj$ is 
 the parent set for
$\vari$ and $x_{\vari \leftarrow  \varsubsetj} = 0$ otherwise. It is not difficult to see that any digraph
(acyclic or otherwise) with $|\vertices|$ nodes can be encoded by a
zero-one vector whose components are family variables and where
exactly $|\vertices|$ family variables are set to
1. Figure~\ref{fig:bnex} and Table~\ref{tab:bnex} show an example graph
and its family variable encoding, respectively.

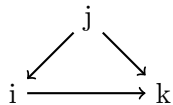
\begin{figure}
  \centering
   \begin{tikzpicture}
      \node (a) at (0,2) {i};
      \node (b) at (1,3) {j};
      \node (c) at (2,2) {k};
      \draw[->,thick] (a) -- (c);
      \draw[->,thick] (b) -- (c);
      \draw[->,thick] (b) -- (a);
    \end{tikzpicture}
  \caption{A digraph with 3 nodes.}
  \label{fig:bnex}
\end{figure}

\begin{table}
  \centering
\begin{tabular}{|l|l|l|l|}
    \hline
    $i\leftarrow\{\}$ & $i\leftarrow\{j\}$ & $i\leftarrow\{k\}$ &
    $i\leftarrow\{j,k\}$\\ \hline
    0 & 1 & 0 & 0  \\ \hline\hline
    $j\leftarrow\{\}$ & $j\leftarrow\{i\}$ & $j\leftarrow\{k\}$ &
    $j\leftarrow\{i,k\}$\\ \hline
    1 & 0 & 0 & 0  \\ \hline \hline
    $k\leftarrow\{\}$ & $k\leftarrow\{i\}$ & $k\leftarrow\{j\}$ &
    $k\leftarrow\{i,j\}$\\ \hline\hline
    0 & 0 & 0 & 1  \\
    \hline 
  \end{tabular}
  \caption{A vector in $\reals^{12}$ which is the family variable
    encoding of the digraph in Figure~\protect\ref{fig:bnex} where all
    possible parent sets are permitted. Here each of the
  12 components is labelled with the appropriate family and the
  vector is displayed in three rows.}
  \label{tab:bnex}
\end{table}

Although every digraph can thus be encoded as a zero-one vector, it is
clearly not the case that each zero-one vector encodes a digraph. The
key to the IP approach to BNSL is to add appropriate linear
constraints so that all and only zero-one vectors representing acyclic digraphs
satisfy all the constraints.

The most basic constraints are illustrated by the arrangement of the example vector in
Table~\ref{tab:bnex} into three rows, one for each
node. It is clear that exactly one family variable for each child
node must equal one.  So we have $|\vertices|$ \emph{convexity constraints}
\begin{equation}
  \label{eq:convexitya}
  \sum_{\varsubsetj \in \pps{\vari}} x_{\vari \leftarrow
  \varsubsetj} = 1 \quad \forall \vari \in \vertices,
\end{equation}
each of which may have an exponential number of terms.
It is not difficult to see that any vector $x$ that satisfies all convexity
constraints encodes a digraph. However, without further constraints,
the digraph need not be acyclic. There are a number of ways of ruling
out cycles
\shortcite{cussens10:_maxim,peharz12:_exact_maxim_margin_struc_learn_bayes_networ,cussens13:_maxim_likel_pedig_recon_integ_linear_progr}.
In this paper we focus on \emph{cluster constraints} first
introduced by 
\shortciteA{jaakkola10:_learn_bayes_networ_struc_lp_relax}. A \emph{cluster}
is simply a subset of nodes with at least 2 elements. For each cluster $\cluster \subseteq
\vertices$ ($|\cluster|>1$) the associated cluster inequality is
\begin{equation}
  \label{eq:clusterineq}
  \sum_{\vari\in\cluster} \ \sum_{\varsubsetj \in \pps{\vari}:\varsubsetj
    \cap \cluster = \emptyset}
  x_{\vari \leftarrow \varsubsetj} \geq 1.
\end{equation}
An alternative formulation, which exploits the convexity constraints,
is
\begin{equation}
  \label{eq:clusterineqalt}
  \sum_{\vari\in\cluster}  \ \sum_{\varsubsetj \in \pps{\vari}:\varsubsetj
    \cap \cluster \neq \emptyset}
  x_{\vari \leftarrow \varsubsetj} \leq |\cluster| - 1.
\end{equation}

To see that cluster inequalities suffice to rule out cycles, note
that, for any cluster $\cluster$ and digraph $x$, the left-hand side (LHS) of
\eqref{eq:clusterineq} is a count
of the number of vertices in $\cluster$ that in $x$ have no parents in
$\cluster$. Now suppose that the nodes in some cluster $\cluster$
formed a cycle; it is clear that in that case the LHS
of \eqref{eq:clusterineq} would be 0, violating the cluster
constraint. On the other hand, suppose that $x$ encodes an acyclic
digraph. Since the digraph is acyclic, there is an associated total
ordering in which parents precede their children.  Let
$\cluster \subseteq \vertices$ be an arbitrary cluster. Then the
earliest element of $\cluster$ in this ordering will have no parents
in $\cluster$ and so the LHS of \eqref{eq:clusterineq} is at least 1
and the cluster constraint is satisfied. An illustration of how acyclic
graphs satisfy all cluster constraints and cyclic graphs do not is
given in Figure~\ref{fig:clusterex}.

\begin{figure}
  \centering
     \begin{tikzpicture}
      \node (c) at (0,0) {c};
      \node (d) at (1,0) {d};
      \node (a) at (0,1) {a};
      \node (b) at (1,1) {b};
      \draw[->,thick] (a) -- (c);
      \draw[->,thick] (b) -- (a);
      \draw[->,thick] (b) -- (c);
      \draw[->,thick] (c) -- (d);
    \end{tikzpicture}
    \hspace{2cm}
    \begin{tikzpicture}
      \node (c) at (0,0) {c};
      \node (d) at (1,0) {d};
      \node (a) at (0,1) {a};
      \node (b) at (1,1) {b};
      \draw[->,thick] (a) -- (c);
      \draw[->,thick] (b) -- (a);
      \draw[->,thick] (c) -- (d);
      \draw[->,thick] (d) -- (b);
    \end{tikzpicture}
    \caption{An acyclic and a cyclic graph for vertex set
      $\{a,b,c,d\}$. For each cluster of vertices $C$ where $|C|>1$
      let $f(C)$ be the number of vertices in $C$ who have no parents
      in $C$ (i.e.\ the LHS of
      \protect{\eqref{eq:clusterineq}}). Abbreviating e.g.\ $\{a,b\}$
      to $ab$, for the left-hand graph we have: $f(ab) = 1$,
      $f(ac) = 1$, $f(ad) = 2$, $f(bc) = 1$, $f(bd) = 2$,
      $f(cd) = 1$, $f(abc) = 1$, $f(abd) = 2$, $f(acd) =1$, $f(bcd) = 1$ and
      $f(abcd) = 1$. For the right-hand graph we have: $f(ab) = 1$,
      $f(ac) = 1$, $f(ad) = 2$, $f(bc) = 2$, $f(bd) = 1$, $f(cd) = 1$,
      $f(abc) = 1$, $f(abd) = 1$, $f(acd) =1$, $f(bcd) = 1$ and
      $f(abcd) = 0$.  The cluster constraint for cluster $\{a,b,c,d\}$
      is violated by the right-hand graph since these vertices form a
      cycle. }
  \label{fig:clusterex}
\end{figure}
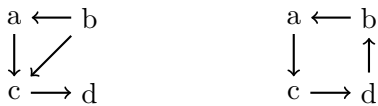

It follows that any zero-one vector $x$ that satisfies the convexity
constraints (\ref{eq:convexitya}) and cluster constraints
(\ref{eq:clusterineq}) encodes an acyclic digraph. The final ingredient
in the IP approach to BNSL is to specify objective coefficients for
each family variable. These are simply the local scores
$\localscore{\vari}{\varsubsetj}$ introduced in Section~\ref{sec:bnsl}.
Collecting these elements together, we can define the IP formulation of
the BNSL as follows.

\begin{align}
\mbox{\textsc{Maximise} }\quad  
& \sum_{\vari \in \vertices, \varsubsetj \in \pps{\vari}}
\localscore{\vari}{\varsubsetj} x_{\vari \leftarrow \varsubsetj}
& \label{eq:obj}\\
\mbox{\textsc{subject to}} \quad  
& \ \ \ \ \ \ \ \ \ \ \ \ \ \sum_{\varsubsetj \in \pps{\vari}} x_{\vari \leftarrow  \varsubsetj} = 1 
& \forall \vari \in \vertices \label{eq:convex} \\
& \sum_{\vari\in\cluster} \sum_{\varsubsetj \in \pps{\vari}:\varsubsetj \cap \cluster = \emptyset}
   x_{\vari \leftarrow \varsubsetj} \geq 1 
&\forall \cluster \subseteq \vertices,\ |\cluster|>1 \label{eq:cluster} 
 \\
& \ \ \  \ \ \ \ \ \quad \quad  \quad  \quad  \   x_{\vari \leftarrow \varsubsetj} \in \{0,1\}
& \forall \vari \in \vertices,\ \varsubsetj \in \pps{\vari} \label{eq:int}
\end{align}

\subsection{The \gobnilp{} System}
\label{sec:gobnilp}

The \gobnilp{} system (\url{https://www.cs.york.ac.uk/aig/sw/gobnilp/})
solves the IP problem defined by (\ref{eq:obj}--\ref{eq:int}) for a given set of objective
coefficients $\localscore{\vari}{\varsubsetj}$. These coefficients are
either given as input to \gobnilp{} or computed by \gobnilp{} from a
discrete dataset with no missing values. The \gobnilp{} approach to solving
this IP is fully detailed  by \shortciteA{Bartlett2015}; here we overview
the essential ideas.

Since there are only $|\vertices|$ convexity constraints
\eqref{eq:convex}, these are added as initial constraints to the
IP.
Initially, no cluster constraints \eqref{eq:cluster} are in the IP, so
we have a relaxed version of the original problem. Moreover, in its
initial phase \gobnilp{} relaxes the integrality condition
\eqref{eq:int} on the family variables into
$x_{\vari \leftarrow \varsubsetj} \in [0,1]$
$\forall \vari \in \vertices,\ \varsubsetj \in \pps{\vari}$, so that
only linear relaxations of IPs are solved. So \gobnilp{} starts with a
`doubly' relaxed problem: the constraints ruling out cycles are missing
and the integrality condition is also dropped.

A linear relaxation of an IP is a linear program (LP). \gobnilp{} uses
an external LP solver such as SoPlex or CPLEX to solve linear
relaxations.  The solution (call it $\lpsol$) to the initial LP will
be a digraph where a highest scoring parent set for each node is
chosen, a digraph which will almost certainly contain cycles. Note
that this initial solution happens to be integral, even though it is
the solution to an LP not an IP. \gobnilp{} then attempts to find
clusters $\cluster$ such that the associated cluster constraint is
violated by $\lpsol$. Since the cluster constraints are added in this
way they are called \emph{cutting planes}: each one \emph{cuts} off an
(infeasible) solution $\lpsol$ and since they are linear each one
defines a (high-dimensional) \emph{plane}. These cluster constraints
are added to the LP, producing a new LP which is then solved,
generating a new solution $\lpsol$. This process is illustrated in
Figure~\ref{fig:cuts}. However, since the cutting planes found by
\gobnilp{} are rather hard to visualise, we use a (non-BNSL) IP
problem with only two variables to illustrate the basic ideas behind
the cutting plane approach. Note that the relaxation in
Figure~\ref{fig:cuts} contains (infeasible) integer solutions. Many of
the relaxations solved by GOBNILP (notably the initial one) also allow
infeasible integer solutions---which correspond to cyclic digraphs.

\begin{figure}
  \centering

\begin{tikzpicture}


  \draw  (1,0) -- (2,2) -- (3,2) -- (3,0) -- (1,0);

  \draw [->] (-0.5,0) -- (-0.5,0.5) node[above] {$y$};
  \draw [->] (0,-0.5) -- (0.5,-0.5) node[right] {$x$};

  \fill[green,opacity=0.5] (0,0) -- (2.5,3) -- (4.8,0);



    \draw [color=blue] (0,2.6) -- (5,2.1);
    \draw [thick,->] (4.3,2.4) -- (3.8,2.9);

  \foreach \x in {0,1,2,3,4,5} 
  \foreach \y in {0,1,2,3} 
  {\filldraw [black] (\x,\y) circle (0.02cm);}

    \filldraw (1,0) circle (0.05cm);
  \filldraw (2,0) circle (0.05cm);
  \filldraw (3,0) circle (0.05cm);
  \filldraw (2,1) circle (0.05cm);
  \filldraw (3,1) circle (0.05cm);
  \filldraw[color=red] (2,2) circle (0.05cm);
  \filldraw  (3,2) circle (0.05cm);

  {\filldraw [color=orange] (2.5,3) circle (0.04cm);}


  \filldraw [color=yellow](2,2.4) circle (0.05cm);
  
\end{tikzpicture}\hspace{2cm}
\begin{tikzpicture}


      \draw  (1,0) -- (2,2) -- (3,2) -- (3,0) -- (1,0);

  \draw [->] (-0.5,0) -- (-0.5,0.5) node[above] {$y$};
  \draw [->] (0,-0.5) -- (0.5,-0.5) node[right] {$x$};

      \fill[green,opacity=0.5] (0,0) -- (2.5,3) -- (4.8,0);

    \draw [color=blue] (0,2) -- (5,2);
    \draw [thick,->] (4.3,2.4) -- (3.8,2.9);
    



  \foreach \x in {0,1,2,3,4,5} 
  \foreach \y in {0,1,2,3} 
  {\filldraw [black] (\x,\y) circle (0.02cm);}

  \filldraw (1,0) circle (0.05cm);
  \filldraw (2,0) circle (0.05cm);
  \filldraw (3,0) circle (0.05cm);
  \filldraw (2,1) circle (0.05cm);
  \filldraw (3,1) circle (0.05cm);
  \filldraw[color=red] (2,2) circle (0.05cm);
  \filldraw  (3,2) circle (0.05cm);

  {\filldraw [color=orange] (2.5,3) circle (0.04cm);}

  \filldraw [color=yellow](1.6666,2) circle (0.05cm);
  

\end{tikzpicture}
\caption{Illustration of the cutting plane technique for a problem
  with 2 integer-valued variables $x$ and $y$. \textbf{In both
    figures}: The 7 large dots indicate the 7 feasible solutions
  $(x=1,y=0)$, $(x=2,y=0)$, $(x=3,y=0)$, $(x=2,y=1)$, $(x=3,y=1)$,
  $(x=2,y=2)$ and $(x=3,y=2)$. The red dot indicates the optimal
  solution $(x=2,y=2)$. The objective function is $-x+y$ and is
  indicated by the arrow. The boundary of the convex hull of feasible
  solutions is shown. A relaxation of the problem is indicated by the
  green region with the orange dot indicating the optimal solution to
  the relaxed problem.  \textbf{In the left-hand figure}: The blue
  line represents a cutting plane---a linear inequality---which
  separates the solution to the relaxed problem from the convex hull
  of feasible solutions. The yellow dot indicates the optimal solution
  to the relaxed problem once this cut is added. \textbf{In the
    right-hand figure}: Similar to the left-hand figure except that a
  better cut has been added. The right-hand yellow dot has a lower
  objective value than that on the left and thus provides a better
  upper bound.}
  \label{fig:cuts}
\end{figure}
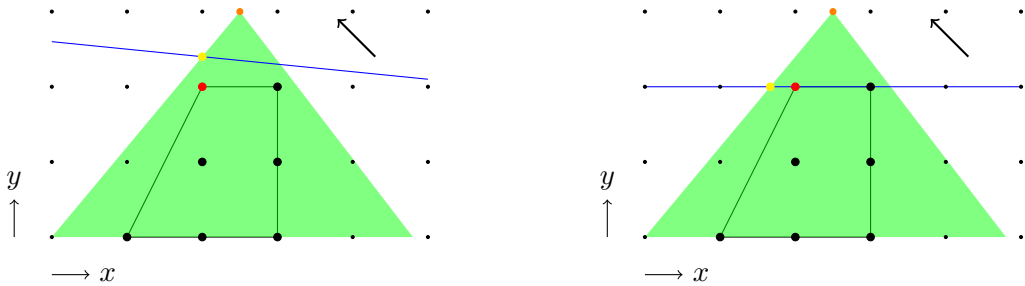

The process of LP solving and adding cluster constraint cutting planes
is continued until either (i) an LP solution is produced which
corresponds to an acyclic digraph, or (ii) this is not the case, but no
further cluster constraint cutting planes can be found.  In the first
(rare) case, the BNSL instance has been solved. The objective value of
each $\lpsol$ that is produced is an upper bound on the objective
value of an optimal digraph (since it is an exact solution to a
relaxed version of the original BNSL instance), so if $\lpsol$
corresponds to an acyclic digraph it must be optimal.

The second (typical) case can occur since even if we were to
add all (exponentially-many) cluster constraints to the LP there is no
guarantee that the solution to that LP would be integral. (This
hypothetical LP including all cluster constraints defines what we call the \emph{cluster
  polytope} which will be discussed in
Section~\ref{sec:clusterpoly}.) However, since we only add those cluster
constraints which are \emph{cutting planes} (i.e., which cut off the
solution $\lpsol$ to some linear relaxation) in practice only a small
fraction of cluster constraints are actually added.\footnote{We have
  yet to explore the interesting question of how large this fraction might be.}

Once no further cluster constraint cutting planes can be found
\gobnilp{} stops ignoring the integrality constraint \eqref{eq:int} on
family variables and exploits it to make progress. If no cluster
constraint cutting planes can be found, and the problem has not been
solved, then $\lpsol$, the solution to the current linear relaxation,
must be fractional, i.e., there must be at least one family variable
$x_{\vari \leftarrow \varsubsetj}$ such that
$0 < \lpsol_{\vari \leftarrow \varsubsetj} < 1$. One option is then to
\emph{branch} on such a variable to create two sub-problems: one where
$x_{\vari \leftarrow \varsubsetj}$ is fixed to 0 and one where it is
fixed to 1. Note that $\lpsol$ is infeasible in both sub-problems but
there is an optimal solution in at least one of the
sub-problems. \gobnilp{} also has the option of branching on sums of
mutually exclusive family variables. For example, given 
nodes $i$, $j$, and $k$, \gobnilp{} has the option of branching on
$x_{i\leftarrow\{j\}} + x_{i\leftarrow\{j,k\}} + x_{j\leftarrow\{i\}}
+ x_{j\leftarrow\{i,k\}}$, a quantity which is either 0 or 1 in an
acyclic digraph. \gobnilp{} then recursively applies the cutting plane
approach to both sub-problems. \gobnilp{} is thus a
\emph{branch-and-cut} approach to IP solving.

These are the essentials of the \gobnilp{} system, although the current
implementation has many other aspects.
In particular, under default parameter values, \gobnilp{} switches to
branching on a fractional variable if 
the search for cluster constraint
cutting planes is taking too long.
\gobnilp{} is implemented with the
help of the SCIP system (\url{http://scip.zib.de})
\cite{achterberg07:_const_integ_progr} and it uses SCIP to
generate many other cutting planes in addition to cluster
constraints. \gobnilp{} also adds in other initial inequalities in
addition to the convexity constraints. For example, if we had three
nodes $i$, $j$, and $k$, the inequality $x_{i\leftarrow\{j,k\}} +
x_{j\leftarrow\{i,k\}} + x_{k\leftarrow\{i,j\}} \leq 1$ would be
added. All these extra constraints are redundant in the sense that
they do not alter the set of optimal solutions to the IP
(\ref{eq:obj}--\ref{eq:int}). They do, however, have a great effect in the time
taken to identify a provably optimal solution.

\subsection{BNSL Cutting Planes via Sub-IPs}

The \emph{separation problem} for an IP is the problem of finding a
cutting plane which is violated by the current linear relaxation of
the IP, or to show that none exists. In this paper we focus on the
special case of finding a \emph{cluster constraint} cutting plane for
an LP solution $\lpsol$, or showing none exists. We call this the
\emph{weak separation problem}. We call it the `weak'
separation problem since cluster constraints are not the only possible
cutting planes.

In \gobnilp, this problem is solved via a sub-IP, as described earlier e.g.~by 
\citeA{Bartlett2015}.
 Given an LP solution $\lpsol$ to separate, the
variables of the sub-IP include binary variables $y_{\vari
  \leftarrow \varsubsetj}$ for each family such that $\lpsol_{\vari
  \leftarrow \varsubsetj} > 0$. In addition, binary variables
$y_{\vari}$ for each $\vari \in \vertices$ are created. The constraints of the sub-IP are such that $y_{\vari}=1$ 
indicates that $\vari$ is a member of some cluster whose associated
cluster constraint is a cutting plane for $\lpsol$. $y_{\vari
  \leftarrow \varsubsetj}=1$ indicates that the family variable $x_{\vari
  \leftarrow \varsubsetj}$ appears in the cluster constraint. The
sub-IP is given by

\begin{align}
\mbox{\textsc{Maximise}} \quad
& \sum_{\vari, \varsubsetj \ : \ \lpsol_{\vari\leftarrow \varsubsetj} > 0}
 \lpsol_{\vari \leftarrow \varsubsetj} \cdot y_{\vari \leftarrow \varsubsetj} -\sum_{\vari \in \vertices} y_{\vari} 
&\label{eq:subobj} \\
\mbox{\textsc{subject to}}  \quad
&  \quad \quad \quad \quad \quad \quad \quad y_{\vari \leftarrow \varsubsetj} \Rightarrow y_{\vari}
&  \forall y_{\vari \leftarrow \varsubsetj} \label{eq:subip1} \\
&  \quad \quad \quad \quad \quad \quad \quad y_{\vari \leftarrow \varsubsetj} \Rightarrow \bigvee_{\varj \in \varsubsetj} y_{\varj} 
& \forall y_{\vari \leftarrow \varsubsetj}  \label{eq:subip2} \\
& \label{eq:objcons} \sum_{\vari, \varsubsetj \ : \ 
\lpsol_{\vari \leftarrow \varsubsetj} > 0} \lpsol_{\vari \leftarrow \varsubsetj} \cdot
y_{\vari \leftarrow \varsubsetj} -\sum_{\vari \in \vertices} y_{\vari} > -1 \\
&  \ \ \ \ \quad \quad \quad \quad \quad \quad \quad \quad \quad   \label{eq:subint} y_{\vari \leftarrow \varsubsetj}, y_{\vari} \in \{0,1\}
\end{align}

The sub-IP constraints (\ref{eq:subip1}--\ref{eq:subip2}) are
displayed as propositional clauses for brevity, but note that these are linear
constraints. They can be written as $(1-y_{\vari \leftarrow
  \varsubsetj}) + y_{\vari} \geq 1$ and $(1-y_{\vari \leftarrow
  \varsubsetj}) + \sum_{\varj \in \varsubsetj} y_{\varj} \geq 1$,
respectively. The constraint \eqref{eq:objcons} dictates that only
solutions with objective value strictly greater than -1 are
allowed. In the \gobnilp{} implementation this constraint is effected by
directly placing a lower bound on the objective rather than posting
the linear constraint \eqref{eq:objcons}, since the former is more
efficient.

It is not difficult to show---\citeA{Bartlett2015} provide the
detail---that any feasible solution to sub-IP
(\ref{eq:subobj}--\ref{eq:subint}) determines a cutting plane for
$\lpsol$ and that a proof of the sub-IP's infeasibility establishes
that there is no such cutting plane. Since \gobnilp{} spends much of its
time solving sub-IPs in the hunt for cluster constraint cutting
planes, the issue of whether there is a better approach is
important. Is it really a good idea to set up a sub-IP each time a
cutting plane is sought? Is there some algorithm (perhaps a
polynomial-time one) that can be directly implemented to provide a
faster search for cutting planes? In Section~\ref{sec:complexity}
we make progress towards answering these questions. We show that the
weak separation problem is \emph{NP-hard} and so (assuming
$\mathrm{P}\neq \mathrm{NP}$) there is no polynomial-time algorithm
for weak separation.

\section{Three Polytopes related to the BNSL IP}
\label{sec:polytopes}


As explained in Section~\ref{sec:gobnilp}, in the basic \gobnilp{}
algorithm one first (i)~uses only the convexity constraints, then
(ii)~ adds cluster constraints, and, if necessary, (iii)~ branches on
variables to solve the IP. These three  stages correspond to three different
\emph{polytopes} which will be defined and analyzed in
Sections~\ref{sec:digraphpoly}--\ref{sec:fvpoly}.
Before providing this analysis we first give essential background on
linear inequalities, polytopes and polyhedra
\shortcite{conforti14:integ-progr}. We follow the notation of a
\citeA{conforti14:integ-progr}, which is standard throughout the
mathematical programming literature: for $x,y \in                            
\reals^{n}$, (1) ``$x \leq y$'' 
 means that $x_{i} \leq y_{i}$ for all $i = 1, \dots, n$.
and (2) ``$xy$'' where $x,y \in \reals^{n}$ is the scalar or `dot'
product (i.e.\ $x^{T}y$).

\subsection{Linear inequalities, polytopes and polyhedra}

\begin{definition}
  A point $x \in \reals^{n}$ is a \emph{convex combination} of points
  in $S \subseteq \reals^{n}$ if there exists a finite set of points
  $x^{1}, \dots, x^{p} \in S$ and scalars $\lambda_{1}, \dots,
  \lambda_{p}$ such that
\[
x = \sum_{j=1}^{p} \lambda_{j}x^{j}, \hspace*{10mm} \sum_{j=1}^{p} \lambda_{j} =
1, \hspace*{10mm} \lambda_{1}, \dots, \lambda_{p} \geq 0.
\]
\end{definition}

\begin{definition}
  The \emph{convex hull} $\mathrm{conv}(S)$ of a set $S \subseteq
  \reals^n$ is the inclusion-wise minimal convex set containing
  $S$, i.e.,  $\mathrm{conv}(S) = \{ x \in \reals^{n} \mid \mbox{
    $x$ is a convex combination of points in $S$} \}$.
\end{definition}

\begin{definition}
  A subset $P$ of $\reals^n$ is a \emph{polyhedron} if there exists a
  positive integer $m$, an $m \times n$ matrix $A$, and a vector $b \in
  \reals^{m}$ such that
\[
P = \{ x \in \reals^{n} \mid Ax \leq b \}.
\]
\end{definition}

\begin{definition}
  A subset $Q$ of $\reals^n$ is a \emph{polytope} if $Q$ is the convex
  hull of a finite set of vectors in $\reals^n$.
\end{definition}

\begin{theorem}[Minkowski-Weyl Theorem for Polytopes]
  \label{thm:mw}
    A subset $Q$ of $\reals^n$ is a \emph{polytope} if and only if $Q$
    is a bounded polyhedron.
\end{theorem}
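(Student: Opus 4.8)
The plan is to derive both implications from a single lemma about cones, via \emph{homogenization}. The lemma, which I regard as the real content of the statement, is: a cone $C \subseteq \reals^{n+1}$ is \emph{polyhedral} (of the form $\{y \mid My \le 0\}$ for some matrix $M$) if and only if it is \emph{finitely generated} (of the form $\{R\lambda \mid \lambda \ge 0\}$ for some matrix $R$). Granting this lemma, both directions of the theorem are short.

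First, polytope $\Rightarrow$ bounded polyhedron. Let $Q = \conv\{v^1,\dots,v^k\}$; boundedness is immediate since every convex combination of the $v^i$ lies in the ball of radius $\max_i \lVert v^i\rVert$. Lift the generators to $(v^i,1) \in \reals^{n+1}$ and let $C$ be the cone they generate. A nonnegative combination $\sum_i \lambda_i (v^i,1)$ has last coordinate $\sum_i \lambda_i$, so $(x,1)\in C$ forces $\sum_i \lambda_i = 1$; hence $x \in Q$ iff $(x,1) \in C$. By the lemma $C = \{y \mid My \le 0\}$, and substituting $y = (x,1)$ exhibits $Q$ as the solution set of finitely many linear inequalities in $x$. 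For the converse, bounded polyhedron $\Rightarrow$ polytope, let $P = \{x \mid Ax \le b\}$ be bounded and set $\widehat C = \{(x,t) \mid Ax - bt \le 0,\ t \ge 0\}$, so that $x \in P$ iff $(x,1)\in\widehat C$. By the lemma $\widehat C = \{R\lambda \mid \lambda \ge 0\}$ with generators $r^j = (u^j, t_j)$. If $t_j = 0$ then $A u^j \le 0$, so $u^j$ is a recession direction of $P$; since $P$ is bounded its recession cone is $\{0\}$, forcing $u^j = 0$, and that generator is discarded. Rescaling the rest so that $t_j = 1$ gives points $u^j \in P$, and then $(x,1)\in\widehat C$ says precisely $x = \sum_j \lambda_j u^j$ with $\lambda_j \ge 0$, $\sum_j \lambda_j = 1$; thus $P = \conv\{u^1,\dots,u^m\}$.

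It remains to prove the lemma, and this is where the main obstacle lies. ``Finitely generated $\Rightarrow$ polyhedral'' follows from \emph{Fourier--Motzkin elimination}: $\{R\lambda \mid \lambda \ge 0\}$ is the projection onto the $y$-coordinates of the polyhedron $\{(y,\lambda)\mid y - R\lambda = 0,\ \lambda\ge 0\}$, and eliminating the $\lambda$ variables one at a time leaves a finite inequality system in $y$ alone. ``Polyhedral $\Rightarrow$ finitely generated'' then goes through the polar cone: Farkas' lemma (itself a consequence of Fourier--Motzkin) identifies the polar of $\{y\mid My\le 0\}$ with the cone generated by the rows of $M$, which is finitely generated, hence polyhedral by the first direction; applying Farkas once more to that polyhedral presentation shows its polar---which, being the bipolar of the original closed convex cone, equals the original cone---is finitely generated. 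The obstacle is thus entirely in the careful bookkeeping of Fourier--Motzkin elimination and the polarity correspondence; the homogenization steps above, and the reduction itself, are routine. One can alternatively avoid cones by proving bounded polyhedron $\Rightarrow$ polytope directly by induction on dimension---sliding any non-vertex point of $P$ along a line in $\operatorname{aff}(P)$ to the relative boundary, which lies in lower-dimensional faces---and the reverse direction by showing a polytope has only finitely many facet-defining hyperplanes and equals the intersection of the corresponding half-spaces; but homogenization handles both directions uniformly and is the route I would take.
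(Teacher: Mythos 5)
The paper states this theorem as classical background and gives no proof of its own, citing the textbook of Conforti, Cornu\'ejols, and Zambelli; so there is nothing internal to compare against. Your argument is correct and is essentially the standard treatment in that reference: reduce both implications to the Minkowski--Weyl theorem for cones by homogenization (lifting generators to height $1$ in one direction, and using boundedness to kill the recession directions, i.e.\ the generators with last coordinate $0$, in the other), and prove the cone version by Fourier--Motzkin elimination for ``finitely generated $\Rightarrow$ polyhedral'' and by Farkas/polarity for the converse. The only point worth flagging is the degenerate case $P=\emptyset$ in the ``bounded polyhedron $\Rightarrow$ polytope'' direction, where the recession-cone step as you phrase it presupposes a point of $P$ to translate along; this is handled trivially (an empty $P$ is the convex hull of the empty generator set, or one excludes it by convention), but it deserves a sentence.
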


What the Minkowski-Weyl Theorem for Polytopes states is that a
polytope can either be described as the convex hull of a finite set of points
or as the set of feasible solutions to some linear program. It follows
that, for a given linear objective, an optimal point can be found by
solving the linear program. This is a superficially attractive
prospect since linear programs can be solved in polynomial time.

Unfortunately, for NP-hard problems (such as BNSL) it is impractical
to create, let alone solve, the linear program due to the size of $A$
and $b$. Fully characterising the inequalities $Ax \leq b$ is
also typically difficult. However, it is useful to identify at least some
of these inequalities. These inequalities define \emph{facets} of the
polytope. A facet is a special kind of \emph{face} defined as
follows.

\begin{definition}
  \label{def:face}
  A \emph{face} of a polyhedron $P \subseteq \reals^{n}$ is a set of the form
\[
F := P \cap \{x \in \reals^{n} \mid cx = \delta \},
\]
where $cx \leq \delta$ ($c \in \reals^{n}, \delta \in \reals$) is a
\emph{valid inequality} for $P$, i.e., all points in $P$ satisfy
it. We say the inequality $cx \leq \delta$ \emph{defines} the face. A
face is \emph{proper} if it is non-empty and properly contained in
$P$. An inclusion-wise maximal proper face of $P$ is called a
\emph{facet}.
\end{definition}

So, for example, a cube is a 3-dimensional polytope (it is also a
polyhedron) with 6 2-dimensional faces, 12 1-dimensional faces and 6
0-dimensional faces (the vertices). The 2-dimensional faces are facets
since each of them is proper and not contained in any other face. The
convex hull of the 7 points $(x=1,y=0)$, $(x=2,y=0)$, $(x=3,y=0)$,
$(x=2,y=1)$, $(x=3,y=1)$, $(x=2,y=2)$ and $(x=3,y=2)$, whose boundary
is represented in Figure~\ref{fig:cuts}, is 2-dimensional and has 4
1-dimensional facets (shown in Figure~\ref{fig:cuts}) and 4 0-dimensional
faces. Note that the `good' cut in the right-hand figure of
Figure~\ref{fig:cuts} is a facet-defining inequality.

Facets are important since they are given by the `strongest'
inequalities defining a polyhedron. The set of all facet-defining
inequalities of a polyhedron provides a minimal representation
$Ax \leq b$ of that polyhedron, so any cutting plane which is not
facet-defining is thus `redundant' (see
\citeA[p.141]{wolsey98:_integ_progr} for the formal definition of
redundancy). Practically, facet-defining inequalities are good
inequalities to add as cutting planes since they, and they alone, are
guaranteed not to be dominated by any other valid inequality and
also not by any linear combination of other valid
inequalities. Identifying facets is thus an important step in
improving the computational efficiency of an IP approach.

A face of an $n$-dimensional polytope is a facet if and only if it has
dimension $n-1$. (Note that the 6 facets of a 3-dimensional cube are
indeed 2-dimensional.) To prove that a face $F$ has dimension $n-1$ it
is enough to find $n$ \emph{affinely independent} points in
$F$. Affine independence is defined as follows
\shortcite{wolsey98:_integ_progr}.
\begin{definition}
  The points $x^{1}, \dots x^{k} \in \reals^{n}$ are \emph{affinely
    independent} if the $k-1$ directions $x^{2}-x^{1}, \dots,
  x^{k}-x^{1}$ are linearly independent, or alternatively the $k$
  vectors $(x^{1},1), \dots (x^{k},1) \in \reals^{n+1}$ are linearly
  independent. 
\end{definition} 
\noindent
Note that if $x^{1}, \dots x^{k} \in \reals^{n}$ are
  linearly independent they are also affinely independent.



Having provided these basic definitions we now move on to consider three polytopes of increasing complexity:
the \emph{digraph polytope} (Section~\ref{sec:digraphpoly}), the \emph{cluster
polytope} (Section~\ref{sec:clusterpoly}) and finally, our main object of
interest, the \emph{family variable polytope}
(Section~\ref{sec:fvpoly}). 

\subsection{The Digraph Polytope}
\label{sec:digraphpoly}

The digraph polytope is simply the convex hull of all digraphs
permitted by \ppsalone. Before providing a formal account of
this polytope  we define some notation. For a given set of
nodes $\vertices$ and permitted parent sets $\pps{\vari}$, recall from
Section~\ref{sec:palim} that the set of families is defined as 
\[
  \arcsetsfull(\vertices,\ppsalone)  :=  \{\vari 
\leftarrow \varsubsetj \mid \vari \in \vertices, \varsubsetj \in
\pps{\vari} \}.
\]
Furthermore, we  notate the set of families that remain once the
empty parent set for each vertex is removed by
\[
\arcsets(\vertices,\ppsalone)  :=  \arcsetsfull(\vertices,\ppsalone) \setminus \{\vari \leftarrow \emptyset \mid
\vari \in \vertices\}.
\]
In this and subsequent sections  $\arcsets(\vertices,\ppsalone)$ 
will serve as an index set. We will abbreviate $\arcsetsfull(\vertices,\ppsalone)$ and
$\arcsets(\vertices,\ppsalone)$ to $\arcsetsfull$ and $\arcsets$ unless
it is necessary or useful to identify the node set $\vertices$ and
permitted parent sets $\pps{\vari}$. 

For any edge set $\arcs \subseteq \vertices \times
\vertices$, it is clear that any 0-1 vector in $\reals^{\arcs}$
corresponds to a (possibly cyclic) subgraph of $\digraphname =
\digraph$. However, there are many 0-1 vectors in $\reals^{\arcsetsfull}$
(or $\reals^{\arcsets}$) which do not
correspond to digraphs, namely those where $x_{\vari \leftarrow
  \varsubsetj} = x_{\vari \leftarrow \varsubsetj'} = 1$ for some
$\vari \leftarrow \varsubsetj, \vari \leftarrow \varsubsetj' \in
\arcsetsfull$ with $\varsubsetj \neq \varsubsetj'$. So clearly
inequalities other than simple variable bounds are required to
define the digraph polytope.

Since any digraph (cyclic or acyclic) satisfies the $|\vertices|$
convexity constraints \eqref{eq:convexitya}, the digraph polytope if
expressed using the variables in $\arcsetsfull$ will not be
\emph{full-dimensional}---the dimension of the polytope will be less than
the number of variables. This is inconvenient since only
full-dimensional polytopes have a unique minimal description in terms
of their facets.

To arrive at a full-dimensional polytope we remove the $|\vertices|$
family variables with empty parent sets and define the digraph
polytope using index set $\arcsets(\vertices,\ppsalone)$. Let
$\assppoly{\vertices,\ppsalone}$ be the \emph{digraph polytope} which
is the convex hull of all points in $\reals^{\arcsets(\vertices,\ppsalone)}$ that
correspond to digraphs (cyclic and acyclic).
\begin{align}
  \label{eq:psac}
  \assppoly{\vertices,\ppsalone} := &\mathrm{conv}\Big\{x \in
  \reals^{\arcsets(\vertices,\ppsalone)} \ \Bigm| \
\exists \arcss \subseteq \vertices  \times \vertices  \mbox{ s.t. } \\
& \ \ \ \ \ \ \ \ \ \   
  \ \paset{\vari}{\arcss} \in \pps{\vari} \ \forall \vari \in \vertices \mbox{ and } 
  x_{\vari\leftarrow\varsubsetj} =
  \indic{\varsubsetj = \paset{\vari}{\arcss}} \ \forall \varsubsetj \in \pps{\vari} \ \setminus \emptyset\Big\}.\nonumber
\end{align}
We will abbreviate $\assppoly{\vertices,\ppsalone}$ to
$\assppolyalone$ where this will not cause confusion.
\begin{proposition}
\label{prop:dim}
   $\assppolyalone$ is full-dimensional.
\end{proposition}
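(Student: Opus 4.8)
The plan is to exhibit, directly, a collection of $\card{\arcsets} + 1$ affinely independent digraphs whose family-variable encodings lie in $\assppolyalone$, which by the standard criterion shows $\dim \assppolyalone \ge \card{\arcsets}$; since $\assppolyalone \subseteq \reals^{\arcsets}$ this forces $\dim \assppolyalone = \card{\arcsets}$, i.e.\ full-dimensionality. The natural candidates are the empty digraph $\arcss_{\emptyset}$ (all parent sets empty), whose encoding is the zero vector $\mathbf{0} \in \reals^{\arcsets}$, together with, for each family $\vari \leftarrow \varsubsetj \in \arcsets$, the digraph $\arcss_{\vari \leftarrow \varsubsetj}$ in which $\vari$ has parent set $\varsubsetj$ and every other node has the empty parent set. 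The encoding of $\arcss_{\vari \leftarrow \varsubsetj}$ is the unit vector $e_{\vari \leftarrow \varsubsetj}$.

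First I would check that all of these are genuinely points of $\assppolyalone$: each is a digraph (possibly cyclic, which is allowed here) in which $\paset{\vark}{\arcss} \in \pps{\vark}$ for every $\vark$, using the standing assumption that $\emptyset \in \pps{\vark}$ for all $\vark$ and that $\varsubsetj \in \pps{\vari}$ by hypothesis; hence each lies in the set whose convex hull defines $\assppolyalone$. Next I would verify affine independence: the $\card{\arcsets}$ difference vectors $e_{\vari \leftarrow \varsubsetj} - \mathbf{0} = e_{\vari \leftarrow \varsubsetj}$ are precisely the standard basis vectors of $\reals^{\arcsets}$, which are linearly independent; equivalently, the vectors $(\mathbf{0},1)$ and $(e_{\vari \leftarrow \varsubsetj},1)$ in $\reals^{\arcsets \cup \{\ast\}}$ are linearly independent. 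This gives $\card{\arcsets}+1$ affinely independent points in $\assppolyalone$, so $\dim \assppolyalone = \card{\arcsets}$ and the polytope is full-dimensional.

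There is essentially no hard step here — the only thing to be careful about is that the removal of the $\card{\vertices}$ empty-parent-set coordinates, which was done precisely to kill the convexity equations, really does leave no further affine dependency; the explicit basis above makes this transparent, since every coordinate direction of $\reals^{\arcsets}$ is realised as a difference of two feasible digraph encodings. It is also worth noting in passing why we could not have argued this in $\reals^{\arcsetsfull}$: there the $\card{\vertices}$ convexity constraints \eqref{eq:convexitya} are satisfied with equality by every digraph, so the convex hull is trapped in an affine subspace of codimension $\card{\vertices}$; projecting them out is exactly what restores full dimension.
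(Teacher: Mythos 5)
Your proof is correct and follows essentially the same route as the paper's: both take the zero vector (empty digraph) together with the unit vectors $e^{\vari \leftarrow \varsubsetj}$ for each family in $\arcsets$, and observe these $\card{\arcsets}+1$ points are affinely independent. Your added remarks on why the construction stays inside the polytope (via $\emptyset \in \pps{\vari}$) and why the index set $\arcsetsfull$ would fail are accurate but not needed beyond what the paper states.
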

\begin{proof}
  The digraph with no edges is represented by the zero vector in
  $\reals^{\arcsets}$. Each vector in $\reals^{\arcsets}$ with only
  one component $x_{\vari \leftarrow \varsubsetj}$ set to 1 and all
  others set to 0 represents an acyclic digraph (denoted $e^{\vari
    \leftarrow \varsubsetj}$) and so is in $\assppolyalone$.
  These vectors together with the zero vector are clearly a set of
  $|\arcsets|+1$ affinely independent vectors from which it follows
  that $\assppolyalone$ is full-dimensional in
  $\reals^{\arcsets}$.
\end{proof}
$\assppolyalone$ is a simple polytope: it is easy to identify
all its facets.
\begin{proposition}
  \label{prop:asscvx}
  The facet-defining inequalities of $\assppolyalone$ are
  \begin{enumerate}
  \item  $\forall \vari \leftarrow \varsubsetj
    \in \arcsets: x_{\vari \leftarrow \varsubsetj} \geq 0$
    (variable lower bounds), and 
  \item $\forall \vari                                                
    \in \vertices :
    \sum_{\vari \leftarrow \varsubsetj \in \arcsets(\vertices,\ppsalone)}
    x_{\vari \leftarrow \varsubsetj} \leq 1$ (`modified' convexity constraints).
  \end{enumerate}
\end{proposition}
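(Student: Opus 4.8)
The plan is to show that the inequalities listed are valid, that each defines a face of dimension $\card{\arcsets}-1$, and that together they suffice to describe $\assppolyalone$; since $\assppolyalone$ is full-dimensional by Proposition~\ref{prop:dim}, this identifies exactly the facet-defining inequalities. Validity is immediate: the lower bounds $x_{\vari\leftarrow\varsubsetj}\ge 0$ hold because every vertex of $\assppolyalone$ is a $0$-$1$ vector, and the modified convexity constraint $\sum_{\vari\leftarrow\varsubsetj\in\arcsets} x_{\vari\leftarrow\varsubsetj}\le 1$ holds because in any digraph a vertex $\vari$ has exactly one parent set in $\pps{\vari}$, so at most one of the non-empty-parent-set family variables for $\vari$ is $1$ (it is $0$ precisely when the parent set is $\emptyset$).

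Next I would exhibit, for each inequality, enough affinely independent points of $\assppolyalone$ lying on it. For the lower bound $x_{\vari\leftarrow\varsubsetj}\ge 0$: take the zero vector together with all unit vectors $e^{\vari'\leftarrow\varsubsetj'}$ for $(\vari'\leftarrow\varsubsetj')\neq(\vari\leftarrow\varsubsetj)$ — these are $\card{\arcsets}$ points, all in $\assppolyalone$ (each is an acyclic digraph, as in the proof of Proposition~\ref{prop:dim}), all satisfy $x_{\vari\leftarrow\varsubsetj}=0$, and they are affinely independent. For the modified convexity constraint for a fixed $\vari$: take all unit vectors $e^{\vari\leftarrow\varsubsetj}$ with $\varsubsetj\in\pps{\vari}\setminus\{\emptyset\}$ — each satisfies the constraint with equality — and then augment with points that add a single edge into some other vertex $\vari'\neq \vari$ to one of these: a vector with $x_{\vari\leftarrow\varsubsetj}=1$ and $x_{\vari'\leftarrow\varsubsetj'}=1$ (all others $0$) again corresponds to an acyclic digraph and still has $\sum_{\varsubsetj}x_{\vari\leftarrow\varsubsetj}=1$. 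Counting carefully, picking one family variable for $\vari$ and then, for each family variable indexed by a vertex other than $\vari$, a point that switches exactly that variable on (on top of a fixed one for $\vari$), yields $\card{\arcsets}$ affinely independent points on the face. So each listed inequality is facet-defining.

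Finally I would argue these are \emph{all} the facets, i.e.\ that $\assppolyalone$ equals the polyhedron $P$ cut out by these inequalities. One direction is validity. For the reverse, the cleanest route is to observe that $P$ is an integral polytope whose vertices are exactly the $0$-$1$ points satisfying the constraints: $P\subseteq\{x : 0\le x_{\vari\leftarrow\varsubsetj}, \sum_{\varsubsetj} x_{\vari\leftarrow\varsubsetj}\le 1\}$ is contained in the unit cube, and the constraint matrix — a nonnegativity block together with, for each $\vari$, a $0$-$1$ row summing the variables indexed by $\vari$, with disjoint supports across distinct $\vari$ — is an interval/network matrix, hence totally unimodular, so all vertices of $P$ are integral. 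An integral vertex of $P$ is a $0$-$1$ vector with at most one $1$ among the family variables of each $\vari$, which is precisely the family-variable encoding of some digraph with $\paset{\vari}{\arcss}\in\pps{\vari}$; hence every vertex of $P$ lies in $\assppolyalone$, giving $P\subseteq\assppolyalone$, and combined with validity $P=\assppolyalone$. Since $\assppolyalone$ is full-dimensional, its facet-defining inequalities are unique up to scaling and are exactly the irredundant ones among those describing $P$; a quick check shows none of the listed inequalities is implied by the others, so the list is complete. The main obstacle is the bookkeeping in the affine-independence counts — in particular making sure that when a parent set $\varsubsetj$ for $\vari$ contains the vertex $\vari'$ whose edge we are adding, the augmented vector still encodes a genuine (possibly cyclic, which is fine) digraph in $\assppolyalone$ and that the chosen points remain affinely independent; everything else is routine.
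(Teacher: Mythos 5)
Your proof is correct, but it takes a genuinely different route from the paper's. The paper invokes Wolsey's ``third approach'': it argues that for an arbitrary objective $c$ the LP over the proposed inequality description has an integral optimum obtained by greedily picking a best parent set per node, that every permitted digraph is optimal for some $c$, and that each inequality is necessary; completeness and facet-ness then follow together from full-dimensionality. You instead split the claim into two independent certificates: (i) explicit affine-independence witnesses showing each listed inequality is facet-defining (exploiting the fact that $\assppolyalone$ contains cyclic digraphs, so the two-edge points $e^{\vari\leftarrow\varsubsetj}+e^{\vari'\leftarrow\varsubsetj'}$ need no acyclicity check — this is exactly why your construction is simpler than the analogous one for Proposition~\ref{prop:mc}), and (ii) a total-unimodularity argument (each family variable occurs in exactly one convexity row, so the constraint matrix is trivially TU and the bounded polyhedron $P$ is integral with vertices precisely the digraph encodings) showing the description is complete. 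The paper's proof is shorter; yours is more explicit and yields reusable ingredients — the affine-independence certificates foreshadow the technique used throughout Section~\ref{sec:facets}, and once you have both (i) and (ii) the final ``irredundancy check'' you mention is actually superfluous, since an inequality that defines a facet of a full-dimensional polytope cannot be omitted from any description.
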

\begin{proof}
  We use Wolsey's third approach to establishing that a set of linear
  inequalities define a convex hull
  \cite[p.145]{wolsey98:_integ_progr}.  Let $c \in \reals^{\arcsets}$
  be an arbitrary objective coefficient vector.  It is clear that the
  linear program maximising $cx$ subject to the given linear
  inequalities has an optimal solution which is an integer vector
  representing a digraph: simply choose a `best' parent set for each
  $\vari \in \vertices$. (If all coefficients are non-positive choose
  the empty parent set.) Moreover for any digraph $x$, it easy to see
  that there is a $c$ such that $x$ is an optimal solution to the
  LP. It is also easy to see that each of the given linear
  inequalities is \emph{necessary}---removing any one of them results
  in a different polytope.  The result follows.
\end{proof}

Proposition~\ref{prop:asscvx} establishes the  unsurprising
fact that the polytope defined by \gobnilp's initial constraints is
$\assppoly{\vertices,\ppsalone}$, the convex hull of all digraphs
permitted by $\ppsalone$. It follows
that we will have $\lpsol \in \assppolyalone$ for any LP
solution $\lpsol$ produced by \gobnilp{} after adding cutting planes.

\subsection{The Cluster Polytope}
\label{sec:clusterpoly}

Although \gobnilp{} only adds those cluster constraints which are
needed to separate LP solutions $\lpsol$, it is useful to consider the
polytope which would be produced if all were added. The cluster
polytope $\ascppoly{\vertices,\ppsalone}$ is defined by adding all cluster
constraints to the facet-defining inequalities of the digraph
polytope $\assppoly{\vertices,\ppsalone}$, thus ruling out (family variable encodings of) cyclic
digraphs.
\begin{align}
  \label{eq:ascppoly}
\ascppoly{\vertices,\ppsalone} := &
\Big\{ x \in
  \reals^{\arcsets(\vertices,\ppsalone)} \ \Bigm|  \
x_{\vari \leftarrow \varsubsetj} \geq 0 \ \ \ \forall \vari \leftarrow
\varsubsetj \in \arcsets(\vertices,\ppsalone) \mbox{, and}\nonumber \\
    & \ \ \ \ \  \   
\sum_{\vari \leftarrow \varsubsetj \in \arcsets(\vertices,\ppsalone)}
    x_{\vari \leftarrow \varsubsetj} \leq 1 \ \ \ \  \forall \vari
    \mbox{, and} \nonumber \\
&  \ \ \ \ \  \ 
 \sum_{\vari\in\cluster}
\sum_{\varsubsetj \in \pps{\vari} : \varsubsetj
    \cap \cluster \neq \emptyset}
  x_{\vari \leftarrow \varsubsetj} \leq |\cluster| - 1 
\ \ \   \forall   \cluster \subseteq \vertices,\  |\cluster| > 1
\ \Big\}.\nonumber
\end{align}
We will abbreviate $\ascppoly{\vertices,\ppsalone}$ to
$\ascppolyalone$ where this will not cause confusion.
\begin{proposition}
   $\ascppolyalone$ is full-dimensional.
\end{proposition}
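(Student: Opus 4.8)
The plan is to reuse, essentially verbatim, the set of affinely independent points exhibited in the proof of Proposition~\ref{prop:dim}. Recall that the cluster polytope $\ascppolyalone$ is obtained from the digraph polytope $\assppolyalone$ merely by adding the cluster inequalities, so $\ascppolyalone \subseteq \assppolyalone \subseteq \reals^{\arcsets}$; hence to establish full-dimensionality it suffices to exhibit $\card{\arcsets}+1$ affinely independent points that all lie in $\ascppolyalone$.

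First I would observe that the zero vector represents the empty digraph, and for each family $\vari \leftarrow \varsubsetj \in \arcsets$ the unit vector $e^{\vari\leftarrow\varsubsetj}$ (the vector with a single $1$ in coordinate $\vari \leftarrow \varsubsetj$ and $0$ elsewhere) represents the digraph in which $\vari$ has parent set $\varsubsetj$ and every other node has the empty parent set. Each of these digraphs is acyclic: at most one node has any incoming edges, so there can be no directed cycle. As shown in Section~\ref{sec:ip}, every acyclic digraph satisfies all cluster inequalities; such a vector also trivially satisfies nonnegativity and the modified convexity constraints. Hence each of these $\card{\arcsets}+1$ points lies in $\ascppolyalone$.

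Finally, exactly as in the proof of Proposition~\ref{prop:dim}, the zero vector together with the $\card{\arcsets}$ distinct unit vectors forms an affinely independent set, so $\ascppolyalone$ is full-dimensional in $\reals^{\arcsets}$. There is essentially no obstacle here: the only point requiring verification beyond Proposition~\ref{prop:dim} is that the single-family digraphs are acyclic and therefore survive the newly added cluster inequalities, which is immediate.
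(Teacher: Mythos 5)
Your proof is correct and takes exactly the route the paper intends: the paper's own proof simply states that it is ``essentially the same as that for Proposition~\ref{prop:dim},'' i.e., the zero vector together with the unit vectors $e^{\vari\leftarrow\varsubsetj}$ gives $\card{\arcsets}+1$ affinely independent points in the polytope. Your added verification that each single-family digraph is acyclic and hence satisfies the cluster inequalities is the one detail the paper leaves implicit, and you have supplied it correctly.
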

\begin{proof}
  Proof is essentially the same as that for Proposition~\ref{prop:dim}.
\end{proof}

As with the digraph polytope, we use the index set
$\arcsets$ to ensure full-dimensionality, and consequently
have to use formulation \eqref{eq:clusterineqalt} for cluster constraints.
Clearly $\ascppolyalone \subseteq \assppolyalone$ (and the
inclusion is proper if $|\vertices|>1$). Since
\gobnilp{} only adds some cluster constraints, the feasible set for each
LP that is solved during its cutting plane phase is a polytope
$P$ where $\ascppolyalone \subseteq P \subseteq
\assppolyalone$. More important is the connection between
$\ascppolyalone$ and the family variable polytope which we now
introduce.

\subsection{The Family Variable Polytope}
\label{sec:fvpoly}

The \emph{family variable polytope} $\fvpolyy{\vertices,\ppsalone}$ is
the convex hull of acyclic digraphs with node set $\vertices$ which
are permitted by \ppsalone. To define $\fvpolyy{\vertices,\ppsalone}$
it is first useful to introduce notation for the set of acyclic
subgraphs of some digraph. Let $\digraphname = \digraph$ be a digraph, and
\begin{equation}
  \label{eq:acyclic}
  {\cal \arcs}(\digraphname) := \{ \arcss \subseteq \arcs \mid \mbox{$\arcss$ is acyclic in $\digraphname$} \}.
\end{equation}
Now consider the case where $\digraphname = (\vertices,\vertices
\times \vertices)$. The
\emph{family variable polytope} $\fvpoly{\vertices,\ppsalone}$ is
\begin{align}
  \label{eq:pasac}
  \fvpolyy{\vertices,\ppsalone} := \mathrm{conv}\Big\{x \in
  \reals^{\arcsets(\vertices,\ppsalone)} \ \Bigm| \ &  \exists \arcss \in {\cal
    \arcs}(\digraphname)
\mbox{ s.t. } \paset{\vari}{\arcss} \in \pps{\vari} \  \forall \vari \in \vertices \mbox{ and}\\
& \ \ \ \ \ \ \ \ \ \ \ \ \quad \ \ \ \ \ \ \ x_{\vari\leftarrow\varsubsetj} =
  \indic{\varsubsetj = \paset{\vari}{\arcss}} \ \forall \varsubsetj \in \pps{\vari} \setminus \emptyset \Big\}.\nonumber
\end{align}
We will abbreviate $\fvpolyy{\vertices,\ppsalone}$ to
$\fvpolyyalone$ where this will not cause confusion.
\begin{proposition}
   $\fvpolyyalone$ is full-dimensional.
\end{proposition}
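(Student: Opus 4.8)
The plan is to reuse, almost verbatim, the argument for Proposition~\ref{prop:dim}. The key observation is that the empty digraph and each single-edge digraph $e^{\vari \leftarrow \varsubsetj}$ are not merely digraphs but \emph{acyclic} digraphs: a graph with a single arc cannot contain a directed cycle. Since $\emptyset \in \pps{\vari}$ for every $\vari$ by assumption, each such single-edge digraph is also permitted by $\ppsalone$ (the node $\vari$ takes parent set $\varsubsetj$ and every other node takes the empty parent set). Hence the zero vector together with the $|\arcsets|$ unit vectors $e^{\vari \leftarrow \varsubsetj}$, one for each family $\vari \leftarrow \varsubsetj \in \arcsets$, all lie in $\fvpolyyalone$.

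First I would note that these $|\arcsets| + 1$ vectors are affinely independent: subtracting the zero vector leaves the $|\arcsets|$ unit vectors, which are linearly (hence affinely) independent. Since $\fvpolyyalone$ lives in $\reals^{\arcsets(\vertices,\ppsalone)}$, a space of dimension $|\arcsets|$, and contains $|\arcsets|+1$ affinely independent points, its dimension is exactly $|\arcsets|$, i.e.\ it is full-dimensional. This is precisely where the use of the reduced index set $\arcsets$ (rather than $\arcsetsfull$) pays off, exactly as discussed for the digraph polytope.

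There is no real obstacle here; the only thing to be careful about is to confirm that removing the empty-parent-set variables does not lose any of the points we need — but the zero vector in $\reals^{\arcsets}$ is the image of the empty digraph, and $e^{\vari \leftarrow \varsubsetj}$ is the image of the acyclic digraph in which $\vari$ alone has a nonempty parent set, so all the required encodings survive the projection. One could also simply write ``The proof is essentially the same as that for Proposition~\ref{prop:dim}, noting that the empty digraph and each single-edge digraph are acyclic,'' since $\fvpolyyalone \subseteq \assppolyalone$ and the witnessing affinely independent set used there consists entirely of acyclic digraphs.
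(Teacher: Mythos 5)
Your proof is correct and follows exactly the paper's route, which simply reuses the affinely independent set from Proposition~\ref{prop:dim} (the zero vector plus the unit vectors $e^{\vari\leftarrow\varsubsetj}$) after observing that each of those digraphs is acyclic. One small terminological slip: $e^{\vari\leftarrow\varsubsetj}$ encodes a single \emph{family}, not a single arc (it has $|\varsubsetj|$ arcs, all directed into $\vari$), but such a digraph is still obviously acyclic, so the argument stands.
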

\begin{proof}
  Proof is essentially the same as that for Proposition~\ref{prop:dim}.
\end{proof}
It is clear that $\fvpolyyalone \subseteq \ascppolyalone
\subseteq \assppolyalone$. We will see in
Section~\ref{sec:facets} that although cluster constraints turn out to
be facet-defining inequalities of $\fvpolyyalone$, they are not the
only facet-defining inequalities, and so
(if $|\vertices|>2$) $\fvpolyyalone \subsetneq
\ascppolyalone$. We do, however, have that $\integers^{|\arcsets|}
\cap \fvpolyyalone
= \integers^{|\arcsets|} \cap \ascppolyalone$, since
acyclic digraphs are the only zero-one vectors to satisfy all cluster
and modified convexity
constraints. These facts have important consequences for the IP
approach to BNSL. They show that (i) cluster constraints are a good
way of ruling out cycles (since they are facet-defining inequalities of
$\fvpolyyalone$) and that (ii) one can solve a BNSL by just using
cluster constraints and branching on variables (to enforce an integral
solution). That $\fvpolyyalone \subsetneq
\ascppolyalone$ also implies that it may be worth searching for
facet-defining cuts which are not cluster inequalities, for example those discovered
by \citeA{studeny15:_how_bayes}.

\section{Computational Complexity of the BNSL Sub-IPs}
\label{sec:complexity}

In this section we focus on the computational complexity of the BNSL sub-IPs,
formalized as the weak separation problem for BNSL. As the main result of this section,
we show that this problem is NP-hard.

The \emph{weak separation problem} for BNSL is as follows: given
a $\lpsol \in \assppolyalone$, find a \emph{separating cluster} $\cluster
\subseteq \vertices$, $\card{\cluster} > 1$, for which
\begin{equation}
  \label{eq:violated}
 \sum_{\vari\in\cluster} \
\sum_{\varsubsetj \in \pps{\vari}: \varsubsetj
    \cap \cluster \neq \emptyset}
  \lpsol_{\vari \leftarrow \varsubsetj} > |\cluster| - 1,
\end{equation}
or establish that no such $\cluster$ exists.
We first give a simple necessary condition on separating clusters.

\begin{definition}
  Given $\lpsol \in \assppolyalone$ define $\lceil D
  \rceil(\lpsol)$, the \emph{rounding-up digraph} for $\lpsol$, as
  follows: $\vari \leftarrow \varj$ is an edge in $\lceil D
  \rceil(\lpsol)$ iff there is a family $\vari \leftarrow \varsubsetj$
  such that $\varj \in \varsubsetj$ and $\lpsol_{\vari \leftarrow 
    \varsubsetj} >0$.
\end{definition}

\begin{proposition}
  \label{prop:roundup}
  If $\cluster$ is a separating cluster for $\lpsol$, then $\lceil D
  \rceil(\lpsol)_{\cluster}$, the subgraph of the rounding-up digraph
  restricted to the nodes $\cluster$, is cyclic.
\end{proposition}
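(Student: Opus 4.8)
The plan is to argue by contradiction: assume $\cluster$ is a separating cluster for $\lpsol$, so that \eqref{eq:violated} holds with strict inequality, yet $\lceil D \rceil(\lpsol)_{\cluster}$ is acyclic. The single structural fact I would use is that every finite acyclic digraph has a \emph{source}, i.e.\ a node with no incoming arc. Applying this to the acyclic digraph $\lceil D \rceil(\lpsol)_{\cluster}$ on the node set $\cluster$ (which is nonempty since $\card{\cluster} > 1$), I get a node $\vari^{*} \in \cluster$ that has no incoming arc within $\lceil D \rceil(\lpsol)_{\cluster}$.

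Next I would translate ``$\vari^{*}$ is a source in $\lceil D \rceil(\lpsol)_{\cluster}$'' back through the definition of the rounding-up digraph. An arc $\vari^{*} \leftarrow \varj$ belongs to $\lceil D \rceil(\lpsol)$ exactly when some family $\vari^{*} \leftarrow \varsubsetj$ with $\varj \in \varsubsetj$ has $\lpsol_{\vari^{*} \leftarrow \varsubsetj} > 0$; so $\vari^{*}$ having no incoming arc from $\cluster$ means that every family $\vari^{*} \leftarrow \varsubsetj$ with $\lpsol_{\vari^{*} \leftarrow \varsubsetj} > 0$ satisfies $\varsubsetj \cap \cluster = \emptyset$. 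Consequently the inner sum for $\vari^{*}$ in \eqref{eq:violated} vanishes: $\sum_{\varsubsetj \in \pps{\vari^{*}} : \varsubsetj \cap \cluster \neq \emptyset} \lpsol_{\vari^{*} \leftarrow \varsubsetj} = 0$. For every other $\vari \in \cluster$ I would use the crude bound that the corresponding inner sum is at most $1$: since $\lpsol \in \assppolyalone$ it satisfies the modified convexity constraint of Proposition~\ref{prop:asscvx} (the sum of $\lpsol_{\vari \leftarrow \varsubsetj}$ over all $\varsubsetj \in \pps{\vari} \setminus \{\emptyset\}$ is $\le 1$) and has $\lpsol \geq 0$; as $\varsubsetj \cap \cluster \neq \emptyset$ forces $\varsubsetj \neq \emptyset$, the sum in question is a subsum of that and hence $\le 1$. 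Adding the contributions over $\cluster$ gives LHS of \eqref{eq:violated} $\le 0 + (\card{\cluster} - 1) \cdot 1 = \card{\cluster} - 1$, contradicting the strict inequality that defines a separating cluster. Hence $\lceil D \rceil(\lpsol)_{\cluster}$ must be cyclic.

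The main obstacle is not conceptual but a matter of careful bookkeeping around the empty parent set: one must check that, because the digraph polytope is described on the index set $\arcsets$ (empty parent sets removed), ``$\lpsol \in \assppolyalone$'' still delivers the needed $\le 1$ bound on the partial sums $\sum_{\varsubsetj : \varsubsetj \cap \cluster \neq \emptyset} \lpsol_{\vari \leftarrow \varsubsetj}$, and that the condition $\varsubsetj \cap \cluster \neq \emptyset$ automatically excludes $\varsubsetj = \emptyset$ so no implicit empty-set mass is being double counted. Once that is pinned down, the argument is just the source-vertex fact plus the arithmetic above.
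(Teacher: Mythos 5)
Your proof is correct, but it takes a genuinely different route from the paper's. The paper writes $\lpsol$ as a convex combination $\sum_k \alpha_k x^k$ of integral extreme points of $\assppolyalone$, observes that if the rounding-up digraph restricted to $\cluster$ is acyclic then every $x^k_\cluster$ is acyclic (since every positively-weighted family contributes its arcs to $\lceil D\rceil(\lpsol)$), and then uses the fact that each acyclic $x^k_\cluster$ satisfies the cluster inequality to conclude that the convex combination does too. You instead work directly with the fractional point: you take a source $\vari^*$ of the (assumed acyclic) restricted rounding-up digraph, note that its inner sum in \eqref{eq:violated} vanishes by the definition of $\lceil D\rceil(\lpsol)$, and bound every other node's inner sum by $1$ using nonnegativity and the modified convexity constraints of Proposition~\ref{prop:asscvx}. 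Both arguments ultimately rest on the same combinatorial fact---a cluster inducing an acyclic subgraph contains a node with no parents in the cluster---but the paper applies it to each integral summand while you apply it once, at the fractional level. Your version avoids invoking the extreme-point decomposition and makes the quantitative bound $\sum_{\vari}\sum_{\varsubsetj\cap\cluster\neq\emptyset}\lpsol_{\vari\leftarrow\varsubsetj}\leq\card{\cluster}-1$ explicit; the paper's version makes more transparent why the statement is really a fact about acyclic digraphs being preserved under convex combination. Your bookkeeping around the empty parent set is also handled correctly: $\varsubsetj\cap\cluster\neq\emptyset$ indeed forces $\varsubsetj\neq\emptyset$, so the partial sums you bound are genuine subsums of the modified convexity constraints.
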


\begin{proof}
  Since $\lpsol \in \assppolyalone$, $\lpsol$ is a convex
  combination of extreme points of $\assppolyalone$. So we can
  write $\lpsol = \sum_{k=1}^{K} \alpha_{k}x^{k}$ where each $x^k$
  represents a graph and $\sum_{k=1}^{K} \alpha_{k} = 1$. For each
  graph $x^k$, let $x^{k}_{\cluster}$ be the subgraph restricted to
  the nodes $\cluster$. It is easy to see that if
  $x^{k}_{\cluster}$ is acyclic, then $\sum_{\vari\in\cluster}
  \sum_{\varsubsetj \in \pps{\vari}:\varsubsetj \cap \cluster \neq \emptyset}
  x^{k}_{\cluster_{\vari \leftarrow \varsubsetj}} \leq |\cluster| - 1 
  $. So if $x^{k}_{\cluster}$ is acyclic for all $k = 1,\dots,K$, then
  $\sum_{\vari\in\cluster} \sum_{\varsubsetj \in \pps{\vari}:\varsubsetj \cap \cluster \neq
    \emptyset} \lpsol_{\cluster_{\vari \leftarrow \varsubsetj}} \leq
  |\cluster| - 1 $. But if $\lceil D \rceil(\lpsol)_{\cluster}$ is
  acyclic, then so are all the $x^{k}_{\cluster}$. The result follows.
\end{proof}

Proposition~\ref{prop:roundup} leads to a heuristic algorithm for the
weak separation problem (which is available as an option in
\gobnilp). Given an LP solution $\lpsol$, the rounding up digraph
$\lceil D \rceil(\lpsol)$ is constructed and cycles in that digraph
are searched for using standard techniques. For each cycle found, the
corresponding cluster is checked to see whether it is a separating
cluster for $\lpsol$. We now consider the central result on weak separation.

\begin{theorem}\label{thm:subip-hardness}
The weak separation problem for BNSL is NP-hard, even when restricted
to instances $(V,\ppsalone, c)$ where 
$\varsubsetj \in \pps{\vari}$ for all $\vari \in \vertices$ only if $|\varsubsetj| \leq 2$. 
\end{theorem}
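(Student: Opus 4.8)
The plan is to establish NP-hardness of the weak separation problem by a polynomial-time reduction from a known NP-hard problem. The natural candidate is a problem about cycles in digraphs, since Proposition~\ref{prop:roundup} already tells us that separating clusters must induce cyclic subgraphs of the rounding-up digraph. A particularly clean source problem would be deciding whether a digraph contains a cycle through a specified vertex, or —- more promisingly, given the ``$\geq 1$ violation'' structure of~\eqref{eq:violated} -— a problem whose feasible solutions are exactly clusters whose induced subgraph is ``sufficiently cyclic''. I would aim to reduce from an NP-hard covering/packing-style problem; a strong candidate is the problem of deciding whether a digraph has a vertex set of size $\le k$ hitting all cycles (directed feedback vertex set) or, dually, whether there is a large set of nodes inducing a subgraph with no large acyclic sub-part. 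Alternatively, one can reduce from an independent-set or clique type problem, since the quantity $|\cluster|-1$ on the right-hand side of~\eqref{eq:violated} behaves linearly in $|\cluster|$ while the left-hand side counts, roughly, nodes of $\cluster$ with a parent in $\cluster$.

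The key technical idea is to design, for a given instance of the source problem, an LP solution $\lpsol \in \assppolyalone$ whose family variables encode the combinatorial structure of that instance. Since the theorem restricts to parent sets of size $\le 2$, I would use only singleton and doubleton parent sets: a fractional value on $x_{\vari \leftarrow \{\varj\}}$ or $x_{\vari \leftarrow \{\varj,\vark\}}$ lets me ``place'' fractional mass on the potential arcs $\vari \leftarrow \varj$ (and $\vari \leftarrow \vark$) while respecting the modified convexity constraint $\sum_{\varsubsetj} \lpsol_{\vari \leftarrow \varsubsetj} \le 1$. The separation condition~\eqref{eq:violated} for a candidate cluster $\cluster$ then reads: the total fractional mass on families of nodes in $\cluster$ that have at least one parent inside $\cluster$ must exceed $|\cluster|-1$. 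By choosing the fractional values carefully —- e.g.\ assigning each node a small ``slack'' $\epsilon$ to the empty parent set and the rest to families pointing within a gadget —- I can arrange that this inequality holds if and only if $\cluster$ is (or contains/corresponds to) a solution to the source instance, with the $|\cluster|-1$ threshold forcing $\cluster$ to be ``dense'' in the relevant sense. I would also need to verify membership $\lpsol \in \assppolyalone$, which by Proposition~\ref{prop:asscvx} just means checking nonnegativity and the $|\vertices|$ modified convexity constraints —- routine given the construction.

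The steps, in order: (1) fix the source NP-hard problem and recall its hardness; (2) given a source instance, build the node set $\vertices$, the permissible parent sets $\pps{\vari}$ (all of size $\le 2$, each containing $\emptyset$), and an explicit rational point $\lpsol$; (3) verify $\lpsol \in \assppolyalone$ via Proposition~\ref{prop:asscvx}; (4) prove the forward direction: a solution to the source instance yields a cluster $\cluster$ satisfying~\eqref{eq:violated}; (5) prove the converse: any separating cluster, possibly after a trimming/cleanup argument using Proposition~\ref{prop:roundup}, yields a solution to the source instance; (6) observe the reduction is polynomial-time. The main obstacle I anticipate is step~(5) —- controlling \emph{arbitrary} separating clusters. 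Nothing forces a separating cluster to respect the gadget boundaries, so I expect to need an exchange or cleanup argument showing that any cluster violating~\eqref{eq:violated} can be modified (without decreasing the violation, or while preserving it above the threshold) into one of the canonical form the reduction produces; getting the arithmetic of the $\epsilon$-slacks and the $|\cluster|-1$ threshold to make this robust —- so that ``spurious'' clusters cannot accidentally violate the inequality —- is the delicate part, and it will dictate the precise numerical choices made in step~(2).
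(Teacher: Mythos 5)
Your outline has the right overall shape---it matches the paper's proof skeleton of constructing a fractional point $\lpsol \in \assppolyalone$ using only parent sets of size at most two and establishing a two-way correspondence between separating clusters and solutions of an NP-hard source problem---but it stops short of being a proof. Everything that actually constitutes the argument is deferred: you do not commit to a source problem (you float directed feedback vertex set, independent set, and clique as candidates), you do not specify the gadget or any concrete weights, and you explicitly leave the converse direction (your step~(5)) as an open difficulty whose resolution ``will dictate the precise numerical choices.'' Those numerical choices \emph{are} the proof; without them there is no way to check that the $|\cluster|-1$ threshold does what you want.

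For comparison, the paper reduces from \textsc{Vertex Cover}. Given $G=(V,E)$ with $|E|=m$ and budget $k$, it adds $m$ fresh nodes $S$, gives each $s\in S$ every edge $\{u,v\}\in E$ as a permissible parent set with mass $\lpsol_{s\leftarrow\{u,v\}}=1/m$ (so each such node's convexity constraint is tight), and gives each original vertex $v$ the singleton parent sets $\{s\}$ for $s\in S$, each with mass $k/\bigl(m(k+1)\bigr)$. Writing $w(C)$ for the left-hand side of \eqref{eq:violated} minus $|C|$, a vertex cover $U$ with $|U|\le k$ yields $w(U\cup S)=-|U|/(k+1)>-1$, while the converse is a short direct case analysis: if $|C\cap V|\ge k+1$ then $w(C)\le -1$, and if $|C\cap V|\le k$ but some edge is uncovered then each node of $C\cap S$ loses at least $1/m$ of its mass and $w(C)<-1$. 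In particular, no exchange or cleanup argument over arbitrary clusters is needed---the bounds hold for every cluster as is, which is exactly the robustness you correctly identify as the crux but do not supply. Your plan is compatible with the paper's proof, but as written it locates the difficulty without resolving it.
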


\begin{proof}
We prove the claim by reduction from vertex cover; that is, given a
graph $G = (V,E)$ and an integer $k$, we construct $\lpsol \in
\assppoly{V',\ppsdashalone}$ over a vertex set $V'$ and permitted
parent sets \ppsdashalone{} such that there is a cluster $C \subseteq V'$ with $|C| > 1$ and
\[ \sum_{\vari\in\cluster}  \ \sum_{\varsubsetj \in \ppsdash{\vari}:\varsubsetj
    \cap \cluster \neq \emptyset}
  \lpsol_{ \vari \leftarrow \varsubsetj} > |\cluster| - 1 \]
if and only if there is a vertex cover of size at most $k$ for $G$.

Specifically, let us denote $n = |V|$ and $m = |E|$. We construct $\lpsol \in \assppoly{V',\ppsdashalone}$ as follows;  Figure~\ref{fig:subip-hardness} illustrates the basic idea.
\begin{enumerate}
    \item The vertex set is $V' = V \cup S$, where $S$ is disjoint from $V$ and $|S| = m$.
    \item For $s \in S$ and $\{ u, v \} \in E$, we set $\lpsol_{s
        \leftarrow \{ u,v \}} = 1/m$; in particular, $\sum_{ \{u,v\} \in E} \lpsol_{s \leftarrow \{ u,v \}} = 1$ for all $s \in S$.
    \item For $s \in S$ and $v \in V$, we set
    \[ \lpsol_{v \leftarrow \{ s \}} = \frac{k}{m(k+1)}\,. \]
    \item $\lpsol_{i \leftarrow \emptyset} = 0$ for all   $i \in V'$. 
    \item For all other choices of $i \in V'$ and $J \subseteq V'
      \setminus \{i\}$: $J \not \in \ppsdash{i}$.
\end{enumerate}
Finally, for a cluster $C \subseteq V'$, we define the score $w(C)$ as
\[ w(C) = \sum_{\vari\in C }  \sum_{\varsubsetj \in \ppsdash{\vari}:\varsubsetj
    \cap C \neq \emptyset} \lpsol_{\vari \leftarrow \varsubsetj} - |C|\,. \]
Now we claim that there is a set $C \subseteq V'$ with $w(C) > - 1$ if and only if $G$ has a vertex cover of size at most $k$; this suffices to prove the claim.

First, we observe that if $U \subseteq V$ is a vertex cover in $G$, then
\begin{equation*}
\begin{aligned}
    w(U \cup S)  & = - |U| + \sum_{v \in U}\sum_{s \in S}x_{v \leftarrow \{ s \}} - |S| + \sum_{s \in S}\sum_{e \in E}\frac{1}{m} \\
                 & = - |U| + |U|\frac{m k}{m(k+1)} - |S| + |S|\frac{m}{m} \\
                 & = - |U|\Bigl( 1 - \frac{k}{k+1} \Bigr) = - \frac{|U|}{k+1}\,,
\end{aligned}
\end{equation*}
which implies that $w(U \cup S) > -1$ if $|U| \le k$.

Now let $C \subseteq V'$, and let us denote $C_{V} = C \cap V$ and $C_S = C \cap S$. If $|C_{V}| \ge k + 1$, then we have
\begin{equation*}
\begin{aligned}
    w(C)  & \le - |C_{V}| + |C_{V}|\frac{|C_S|k}{m(k+1)} \\
          & \le - |C_{V}| + |C_{V}|\frac{k}{k+1} \\
          & = - |C_{V}|\Bigl( 1- \frac{k}{k+1} \Bigr) = \frac{|C_{V}|}{k+1} \le - 1\,.
\end{aligned}
\end{equation*}

\begin{figure}
  \begin{center}
  \includegraphics[width=\linewidth]{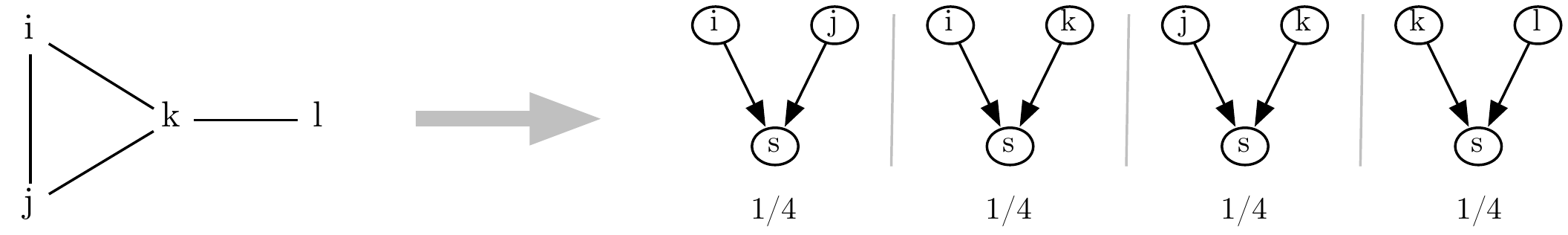}
  \end{center}
  \caption{The basic gadget of the reduction in Theorem~\ref{thm:subip-hardness}. Each edge ${u,v} \in V$ in the original instance $G =(V,E)$ is represented by assigning weight $\lpsol_{s \leftarrow \{ u,v \}} = 1/\card{E}$ in the new instance, where $s$ is a new node. Clearly, $U \subseteq V$ is a vertex cover in $G$ if and only if total weight of terms $\lpsol_{s \leftarrow \{ u,v \}}$ such that $U$ intersects the parent set is $1$.}
  \label{fig:subip-hardness} 
\end{figure}

On the other hand, let us consider the case where $|C_{V}| \le k$ but $C_{V}$ is not a vertex cover for $G$; we may assume that $C_{V} \not= \emptyset$, as otherwise we would have $w(C) = - |C| \le -1$. Let us write $H = \{ e \in E \mid C_{V} \cap e \not= \emptyset \}$ for the set of edges covered by $C_V$. Since we assume that $C_V$ is not a vertex cover, we have $|H| \le m-1$. Thus, it holds that
\begin{equation*}
\begin{aligned}
    w(C)  & = - |C_{V}| + |C_{V}|\frac{|C_S|k}{m(k+1)} - |C_S| + |C_S|\frac{|H|}{m} \\
          & \le - |C_{V}| + |C_{V}|\frac{|C_S| k}{m(k+1)} - |C_S| + |C_S|\frac{m-1}{m} \\
          & = - |C_{V}|\Bigl( 1 - \frac{|C_S| k}{m(k+1)}\Bigr) - \frac{|C_S|}{m} \\
          & \le - \Bigl( 1 - \frac{|C_S| k}{m(k+1)}\Bigr) - \frac{|C_S|}{m} \\
          & = - 1 - |C_S|\Bigl(\frac{1}{m} - \frac{k}{m(k+1)}\Bigr) = - 1 - \frac{|C_S|}{m(k+1)} < -1\,.
\end{aligned}
\end{equation*}
Thus, if $C_{V}$ is not a vertex cover of size at most $k$, then $w(C) \le -1$.

\end{proof}

\section{Facets of the Family Variable Polytope}
\label{sec:facets}

In this section a number of facets of the family variable polytope are
identified and certain properties of facets are
given. Section~\ref{sec:simple} provides simpler results, and
Sections~\ref{sec:clustersarefacets}--\ref{sec:restricting} 
more substantial ones, including a tight connection between facets and
cluster constraints, liftings of facets, and the influence of
restricting parent sets on facets. In Appendix~\ref{sec:lowdim} we provide a
complete enumeration of the facet-defining inequalities over
2--4 nodes and confirm the enumeration is consistent with the theoretical results presented
here.

\subsection{Simple Results on Facets}
\label{sec:simple}

We start by showing that the full-dimensional family variable polytope
$\fvpolyyalone$ is \emph{monotone} via a series of lemmas. Once
we have proved this result, we will use it to establish elementary
properties of facets of $\fvpolyyalone$ and find the simple
facets of the polytope.

\begin{definition}
  A nonempty polyhedron $P \subseteq \reals^{n}_{\geq 0}$ is \emph{monotone}
if $x \in P$ and $0 \leq y \leq x$ imply $y \in P$.
\end{definition}
\begin{lemma}
  \label{lem:zerocomp}
  Let $x \in \fvpolyyalone$ and let the vector $y$ be such
  that $y_{\vari' \leftarrow \varsubsetj'} = 0$ for some $\vari'
  \leftarrow \varsubsetj'$ and $y_{\vari \leftarrow \varsubsetj} =
  x_{\vari \leftarrow \varsubsetj}$ if $\vari \leftarrow \varsubsetj
  \neq \vari' \leftarrow \varsubsetj'$. Then $y \in
  \fvpolyyalone$.
\end{lemma}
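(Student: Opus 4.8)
The plan is to use the fact that $\fvpolyyalone$ is, by definition, the convex hull of the family-variable encodings of the permitted acyclic digraphs over $\vertices$, together with the observation that forcing the component $x_{\vari' \leftarrow \varsubsetj'}$ to $0$ corresponds, at the level of digraphs, to replacing the parent set of $\vari'$ by $\emptyset$ --- an operation that preserves both acyclicity and permissibility, the latter because $\emptyset \in \pps{\vari'}$ by the standing assumption on BNSL instances.

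Concretely, I would first write $x = \sum_{k=1}^{K} \alpha_{k}\chi^{k}$ as a convex combination of such encodings, where each $\chi^{k}$ encodes a permitted acyclic digraph $(\vertices,\arcss^{k})$. For each $k$, let $\tilde{\arcss}^{k}$ be obtained from $\arcss^{k}$ by deleting every edge into $\vari'$, so that $\paset{\vari'}{\tilde{\arcss}^{k}} = \emptyset$ while $\paset{\vari}{\tilde{\arcss}^{k}} = \paset{\vari}{\arcss^{k}}$ for all $\vari \neq \vari'$. Then $\tilde{\arcss}^{k}$ is acyclic, being a subgraph of the acyclic $\arcss^{k}$, and permitted, since $\emptyset \in \pps{\vari'}$ and the parent sets of all other nodes are unchanged; hence its encoding $\tilde\chi^{k}$ lies in $\fvpolyyalone$.

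Next I would compare $\tilde\chi^{k}$ with $\chi^{k}$ on the coordinates indexed by $\arcsets$ (nonempty parent sets only). Since the two digraphs differ only in the parent set of $\vari'$, and each node carries exactly one parent set, the only $\arcsets$-coordinate that can change is $x_{\vari'\leftarrow\varsubsetj'}$, which drops from $1$ to $0$ precisely when $\paset{\vari'}{\arcss^{k}} = \varsubsetj'$ and is otherwise unchanged (and already equal to $0$). Hence $\tilde\chi^{k}_{\vari'\leftarrow\varsubsetj'} = 0$ for every $k$, while $\tilde\chi^{k}_{\vari\leftarrow\varsubsetj} = \chi^{k}_{\vari\leftarrow\varsubsetj}$ for every other family. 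Setting $\tilde x := \sum_{k}\alpha_{k}\tilde\chi^{k}$, which lies in $\fvpolyyalone$ as a convex combination of its generating points, we obtain $\tilde x_{\vari'\leftarrow\varsubsetj'} = 0$ and $\tilde x_{\vari\leftarrow\varsubsetj} = x_{\vari\leftarrow\varsubsetj}$ for all other families, i.e.\ $\tilde x = y$; therefore $y \in \fvpolyyalone$.

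I do not anticipate a genuine obstacle here. The only points that need a line of justification are that deleting all in-edges of a single node preserves acyclicity and permissibility (the latter using $\emptyset \in \pps{\vari'}$), and that this digraph-level modification touches exactly one $\arcsets$-coordinate because each node has a unique parent set. An equivalent, decomposition-free phrasing would be to observe that the coordinate-zeroing map on $\reals^{\arcsets}$ is linear and sends every generating point of $\fvpolyyalone$ into $\fvpolyyalone$, hence maps the whole polytope into itself; this is the same argument repackaged, and either version will do.
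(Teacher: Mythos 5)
Your main construction has a genuine error. You obtain $\tilde{\arcss}^{k}$ from $\arcss^{k}$ by deleting \emph{every} edge into $\vari'$, for \emph{every} $k$. At the level of coordinates this zeroes the whole block $\{x_{\vari'\leftarrow \varsubsetj''} \mid \varsubsetj'' \in \pps{\vari'},\ \varsubsetj'' \neq \emptyset\}$, not just the single coordinate $x_{\vari'\leftarrow\varsubsetj'}$: if $\paset{\vari'}{\arcss^{k}} = \varsubsetj''$ for some nonempty $\varsubsetj'' \neq \varsubsetj'$, then $\chi^{k}_{\vari'\leftarrow\varsubsetj''} = 1$ while $\tilde\chi^{k}_{\vari'\leftarrow\varsubsetj''} = 0$, contradicting your claim that the only $\arcsets$-coordinate that can change is $x_{\vari'\leftarrow\varsubsetj'}$. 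Consequently $\tilde{x} = \sum_{k}\alpha_{k}\tilde\chi^{k}$ need not equal $y$. Concretely, with $\vertices = \{i,j,k\}$, let $x$ encode the digraph in which $i$ has parent set $\{j\}$ and all other parent sets are empty, and take $\vari'\leftarrow\varsubsetj'$ to be $i\leftarrow\{k\}$. Then $y = x$, but your $\tilde{x}$ satisfies $\tilde{x}_{i\leftarrow\{j\}} = 0 \neq 1 = y_{i\leftarrow\{j\}}$.

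The fix is exactly the ``repackaged'' version in your last paragraph, which is \emph{not} equivalent to your main construction: apply to each generator $\chi^{k}$ the linear map that zeroes the single coordinate $\vari'\leftarrow\varsubsetj'$ and fixes all others (equivalently, modify only those $\arcss^{k}$ with $\paset{\vari'}{\arcss^{k}} = \varsubsetj'$, and leave the rest alone). The image of a generator is then either the generator itself or the encoding of the digraph obtained by replacing the parent set $\varsubsetj'$ of $\vari'$ with $\emptyset$, which is acyclic (a subgraph of an acyclic digraph) and permitted since $\emptyset \in \pps{\vari'}$; the resulting convex combination is $y$. This corrected argument is precisely the paper's proof. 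All your supporting observations (acyclicity of subgraphs, permissibility of the empty parent set, uniqueness of the parent set of each node) are fine; the sole problem is deleting the in-edges of $\vari'$ unconditionally rather than only when its parent set is $\varsubsetj'$.
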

\begin{proof}
  Since $x \in \fvpolyyalone$, $x = \sum \alpha_{k}x^{k}$ where
  each $x^k$ is an extreme point of $\fvpolyyalone$ corresponding
  to an acyclic digraph. For each $x^k$ define the vector $y^k$ where
  $y^{k}_{\vari' \leftarrow \varsubsetj'} = 0$ and all other
  components of $y^k$ are equal to those of $x^k$.  Each $y^k$ is also an
  extreme point corresponding to an acyclic digraph (a subgraph of
  $x^k$). We clearly have that $y = \sum \alpha_{k}y^{k}$ and so $y
  \in \fvpolyyalone$.
\end{proof}

\begin{lemma}
  \label{lem:fraccomp}
  Let $x \in \fvpolyyalone$ and let $y$ be any vector such
  that $0 \leq y_{\vari' \leftarrow \varsubsetj'} \leq x_{\vari' \leftarrow \varsubsetj'}$ for some $\vari'
  \leftarrow \varsubsetj'$ and $y_{\vari \leftarrow \varsubsetj} =
  x_{\vari \leftarrow \varsubsetj}$ if $\vari \leftarrow \varsubsetj
  \neq \vari' \leftarrow \varsubsetj'$. Then $y \in
  \fvpolyyalone$.
\end{lemma}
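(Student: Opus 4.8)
The plan is to reduce Lemma~\ref{lem:fraccomp} to Lemma~\ref{lem:zerocomp} together with convexity of $\fvpolyyalone$. The key observation is that the vector $y$ described in the statement lies on the line segment joining $x$ itself and the vector obtained from $x$ by zeroing out the single coordinate $\vari' \leftarrow \varsubsetj'$. Since both endpoints of that segment belong to $\fvpolyyalone$ (the first trivially, the second by Lemma~\ref{lem:zerocomp}), and $\fvpolyyalone$ is convex (being a polytope), every point on the segment, including $y$, belongs to $\fvpolyyalone$.

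Concretely, first I would dispose of the degenerate case: if $x_{\vari' \leftarrow \varsubsetj'} = 0$, then the hypothesis $0 \le y_{\vari' \leftarrow \varsubsetj'} \le x_{\vari' \leftarrow \varsubsetj'}$ forces $y_{\vari' \leftarrow \varsubsetj'} = 0$ and hence $y = x \in \fvpolyyalone$. Otherwise $x_{\vari' \leftarrow \varsubsetj'} > 0$, so we may set $\lambda = y_{\vari' \leftarrow \varsubsetj'} / x_{\vari' \leftarrow \varsubsetj'} \in [0,1]$. Let $z$ be the vector agreeing with $x$ on every coordinate except $z_{\vari' \leftarrow \varsubsetj'} = 0$; by Lemma~\ref{lem:zerocomp}, $z \in \fvpolyyalone$. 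Then one checks coordinatewise that $y = \lambda x + (1-\lambda) z$: on the coordinate $\vari' \leftarrow \varsubsetj'$ this gives $\lambda x_{\vari' \leftarrow \varsubsetj'} = y_{\vari' \leftarrow \varsubsetj'}$, and on every other coordinate both $x$ and $z$ equal $y$ there, so the convex combination does too. Hence $y$ is a convex combination of two points of $\fvpolyyalone$, and therefore $y \in \fvpolyyalone$.

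There is essentially no obstacle here; the only point requiring a moment of care is the division by $x_{\vari' \leftarrow \varsubsetj'}$, which is why the degenerate case is split off first. (One could alternatively argue directly from the extreme-point decomposition $x = \sum_k \alpha_k x^k$ as in the proof of Lemma~\ref{lem:zerocomp}, keeping the $\vari' \leftarrow \varsubsetj'$ coordinate in a $\lambda$-fraction of the mass and zeroing it elsewhere, but the convex-combination-of-two-points argument is cleaner and reuses Lemma~\ref{lem:zerocomp} as a black box.)
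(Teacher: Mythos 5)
Your proposal is correct and is essentially the same argument as the paper's proof: both handle the degenerate case $x_{\vari' \leftarrow \varsubsetj'}=0$ separately, then write $y$ as the convex combination $\frac{y_{\vari' \leftarrow \varsubsetj'}}{x_{\vari' \leftarrow \varsubsetj'}}x + \bigl(1-\frac{y_{\vari' \leftarrow \varsubsetj'}}{x_{\vari' \leftarrow \varsubsetj'}}\bigr)z$ with $z$ the zeroed-out vector supplied by Lemma~\ref{lem:zerocomp}. No differences worth noting.
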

\begin{proof}
  If $x_{\vari' \leftarrow \varsubsetj'}=0$ then $y=x$ and the result
  is immediate, so assume that $x_{\vari' \leftarrow \varsubsetj'}>0$.
  Consider $z$ which is identical to $y$ except that
  $z_{\vari' \leftarrow \varsubsetj'}=0$.  We have
  $y = \frac{y_{\vari' \leftarrow \varsubsetj'}}{x_{\vari' \leftarrow
      \varsubsetj'}}x + \left(1-\frac{y_{\vari' \leftarrow
      \varsubsetj'}}{x_{\vari' \leftarrow \varsubsetj'}}\right)z$.  By
  Lemma~\ref{lem:zerocomp} $z \in \fvpolyyalone$. Since $x$ is also in
  $\fvpolyyalone$ and $y$ is a convex combination of $x$ and $z$ it
  follows that $y \in \fvpolyyalone$.
\end{proof}

\begin{proposition}
  \label{prop:monotone}
  $\fvpolyy{\vertices}$ is monotone.
\end{proposition}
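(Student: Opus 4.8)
The plan is to show that $\fvpolyyalone$ is monotone by combining the two preceding lemmas with a simple induction on the number of coordinates in which $x$ and $y$ differ. Recall that a nonempty polyhedron $P \subseteq \reals^{n}_{\geq 0}$ is monotone if $x \in P$ and $0 \leq y \leq x$ imply $y \in P$. So suppose $x \in \fvpolyyalone$ and $y$ is any vector with $0 \leq y \leq x$ (coordinate-wise, over the index set $\arcsets$); we must show $y \in \fvpolyyalone$.

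First I would enumerate the (finitely many) coordinates $\vari_{1} \leftarrow \varsubsetj_{1}, \dots, \vari_{r} \leftarrow \varsubsetj_{r}$ on which $y$ and $x$ actually differ; by assumption $0 \leq y_{\vari_{t} \leftarrow \varsubsetj_{t}} < x_{\vari_{t} \leftarrow \varsubsetj_{t}}$ for each such $t$. Then I would build a chain of intermediate vectors $x = z^{0}, z^{1}, \dots, z^{r} = y$, where $z^{t}$ agrees with $y$ on the first $t$ of these coordinates and with $x$ on all remaining coordinates. Consecutive vectors $z^{t-1}$ and $z^{t}$ differ in exactly one coordinate, namely $\vari_{t} \leftarrow \varsubsetj_{t}$, and there we have $0 \leq z^{t}_{\vari_{t} \leftarrow \varsubsetj_{t}} \leq z^{t-1}_{\vari_{t} \leftarrow \varsubsetj_{t}}$ (since $z^{t-1}$ still carries the value $x_{\vari_{t} \leftarrow \varsubsetj_{t}}$ in that coordinate). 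Hence Lemma~\ref{lem:fraccomp} applies: if $z^{t-1} \in \fvpolyyalone$ then $z^{t} \in \fvpolyyalone$. Starting from $z^{0} = x \in \fvpolyyalone$ and applying this $r$ times gives $y = z^{r} \in \fvpolyyalone$, as required. Nonemptiness of $\fvpolyyalone$ is immediate since the empty graph (the zero vector) is a permitted acyclic structure.

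There is essentially no hard part here: the work has already been done in Lemmas~\ref{lem:zerocomp} and~\ref{lem:fraccomp}, and the only remaining ingredient is the bookkeeping of an induction on the number of differing coordinates. The one point to be slightly careful about is that each $z^{t}$ genuinely lies between $0$ and $z^{t-1}$ coordinate-wise and differs from it in only a single coordinate, so that Lemma~\ref{lem:fraccomp} is literally applicable at each step; this is ensured by changing coordinates one at a time and by the fact that in the coordinate being modified $z^{t-1}$ still holds $x$'s (larger) value while $z^t$ holds $y$'s (smaller, nonnegative) value. I would also remark that an alternative one-line argument is available directly from Lemma~\ref{lem:fraccomp} by noting that the relation ``obtainable by decreasing one coordinate'' generates, under composition, the relation $0 \leq y \leq x$; but spelling out the induction as above is cleaner for the writeup.
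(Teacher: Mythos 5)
Your proof is correct and follows essentially the same route as the paper's: both construct a chain of intermediate vectors that replace one coordinate of $x$ by the corresponding coordinate of $y$ at a time, and apply Lemma~\ref{lem:fraccomp} at each step. The only cosmetic difference is that you restrict the chain to the coordinates where $x$ and $y$ actually differ, whereas the paper iterates over all of $\arcsets$; this changes nothing of substance.
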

\begin{proof}
  Suppose $x \in \fvpolyyalone$ and $0 \leq y \leq x$. Construct a sequence of
  vectors $x = y^{0}, y^{1}, \dots,$ $y^{k}, \dots, y^{|\arcsets|} = y$ by replacing
  each component $x_{\vari \leftarrow \varsubsetj}$ by $y_{\vari
    \leftarrow \varsubsetj}$ one at a time (in any order). By
  Lemma~\ref{lem:fraccomp} each $y^{k} \in \fvpolyyalone$,
  so $y \in \fvpolyyalone$.
\end{proof}

\shortciteA{hammer75:_facet} showed that a polytope is
monotone if and only if it can be described by a system $x\geq 0$, $Ax
\leq b$ with $A,b \geq 0$. This gives the following result for
$\fvpolyyalone$.

\begin{theorem}
  \label{thm:monofacets}
  Each facet-defining inequality of $\fvpolyy{\vertices}$ is either (i)~a lower bound (of
  zero) on a family variable,  or (ii)~an inequality of the form $\pi x \leq
  \pi_{0}$, where $\pi \geq 0$ and $\pi_{0} > 0$.
\end{theorem}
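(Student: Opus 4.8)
The plan is to combine Proposition~\ref{prop:monotone} (that $\fvpolyyalone$ is monotone) with the full-dimensionality of $\fvpolyyalone$ (established in Section~\ref{sec:fvpoly}) and the characterisation of monotone polytopes due to \shortciteA{hammer75:_facet} quoted just above the theorem. First I would invoke that characterisation: by Proposition~\ref{prop:monotone} there is a description of $\fvpolyyalone$ of the form $x \geq 0$, $Ax \leq b$ with $A \geq 0$ and $b \geq 0$. Dropping redundant inequalities yields an irredundant subsystem $S$ that still describes $\fvpolyyalone$. Since $\fvpolyyalone$ is full-dimensional, its irredundant inequality description is unique up to multiplication of individual inequalities by positive scalars, and its members are precisely the facet-defining inequalities of $\fvpolyyalone$ up to positive scaling. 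Hence it suffices to show that every inequality in $S$ is of type (i) or (ii), since a positive rescaling preserves both forms.

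Next I would classify the inequalities of $S$. Each comes either from the block $x \geq 0$, in which case it is a lower bound $x_{\vari \leftarrow \varsubsetj} \geq 0$ and we are in case (i), or from a row $\pi x \leq \pi_0$ of $Ax \leq b$ with $\pi \geq 0$ and $\pi_0 \geq 0$. For a row of the second kind I would argue $\pi_0 > 0$: every point of $\fvpolyyalone$ is a zero-one vector's convex combination and so is nonnegative, hence $\pi_0 = 0$ would force $0 \leq \pi x \leq 0$, i.e.\ $\pi x = 0$, on all of $\fvpolyyalone$; full-dimensionality then gives $\pi = 0$, so the inequality reads $0 \leq 0$ and is redundant, contradicting the irredundancy of $S$. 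Thus $\pi \geq 0$ and $\pi_0 > 0$, which is case (ii), and the theorem follows.

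The only genuinely delicate point is the passage ``nonnegative description $\Rightarrow$ nonnegative facets'': it is not automatic, and rests on the uniqueness of the irredundant description of a full-dimensional polyhedron. An alternative route that sidesteps \shortciteA{hammer75:_facet} altogether is a direct argument: take a facet-defining $\pi x \leq \pi_0$ that is not a lower bound; if some $\pi_{\vari' \leftarrow \varsubsetj'} < 0$, then for any facet point $x$ with $x_{\vari' \leftarrow \varsubsetj'} > 0$, zeroing that coordinate keeps the point in $\fvpolyyalone$ by Lemma~\ref{lem:zerocomp} while strictly raising $\pi x$ above $\pi_0$, a contradiction; hence the whole facet lies in $\{x : x_{\vari' \leftarrow \varsubsetj'} = 0\}$, and maximality of facets together with uniqueness of the inequality defining a given facet forces $\pi x \leq \pi_0$ to be a positive multiple of that lower bound, contrary to assumption. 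So $\pi \geq 0$; and then $\pi_0 \geq 0$ because $0 \in \fvpolyyalone$, with $\pi_0 = 0$ excluded since it would make the induced face equal to all of $\fvpolyyalone$ rather than a proper face. Either way, every facet-defining inequality falls into (i) or (ii).
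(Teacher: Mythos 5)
Your main argument is correct and is essentially the paper's own proof: Proposition~\ref{prop:monotone} together with the characterisation of monotone polytopes by \shortciteA{hammer75:_facet} yields a nonnegative description, and full-dimensionality rules out $\pi_0=0$; you have simply made explicit the uniqueness-of-the-irredundant-description step that the paper leaves implicit. Your alternative direct argument via Lemma~\ref{lem:zerocomp} is also sound, but it is not needed.
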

\begin{proof}
  From Proposition~\ref{prop:monotone} and the result of \shortciteA{hammer75:_facet} we have the result but with $\pi_{0} \geq 0$. That
  $\pi_{0} > 0$ follows directly by full-dimensionality.
\end{proof}

\begin{proposition} \label{prop:mc}
The following hold.
  \begin{enumerate}
  \item \label{lb} $x_{\vari \leftarrow \varsubsetj} \geq 0$ defines a facet of
    $\fvpolyy{\vertices,\ppsalone}$ for all families $\vari \leftarrow \varsubsetj \in
    \arcsets(\vertices,\ppsalone)$.
  \item \label{ub} For all $\vari \in \vertices$, if 
$\varsubsetj' \in \pps{\vari'}$ implies $\exists
    \varsubsetj \neq \emptyset \in \pps{\vari}$ for all other $\vari' \in \vertices$,
where $\vari \not\in \varsubsetj'$ or
    $\vari' \not \in \varsubsetj$, then
    $\sum_{\varsubsetj \neq \emptyset, \varsubsetj \in \pps{\vari}}
    x_{\vari \leftarrow \varsubsetj} \leq 1$ defines a facet of
    $\fvpolyy{\vertices,\ppsalone}$.
  \end{enumerate}
\end{proposition}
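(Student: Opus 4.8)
The plan is to establish both parts by the standard dimension count: since $\fvpolyyalone$ is full-dimensional in $\reals^{\arcsets}$ (Proposition~\ref{prop:dim}) and hence of dimension $\card{\arcsets}$, it suffices to exhibit, for each candidate inequality, a proper face of $\fvpolyyalone$ containing $\card{\arcsets}$ affinely independent points. Throughout I would reuse the extreme points already isolated in the proof of Proposition~\ref{prop:dim}: the zero vector (the empty digraph) and, for every $\vari \leftarrow \varsubsetj \in \arcsets$, the unit vector $e^{\vari \leftarrow \varsubsetj}$ (the acyclic digraph in which only $\vari$ has a parent set, namely $\varsubsetj$). More generally, whenever a collection of families jointly encodes an acyclic digraph, the corresponding sum of unit vectors lies in $\fvpolyyalone$.

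For part~\ref{lb}, fix a family $\vari' \leftarrow \varsubsetj' \in \arcsets$ and consider the face $F_{0} = \fvpolyyalone \cap \{ x : x_{\vari' \leftarrow \varsubsetj'} = 0 \}$. The $\card{\arcsets}$ points consisting of the zero vector together with the unit vectors $e^{\vari \leftarrow \varsubsetj}$ ranging over $\vari \leftarrow \varsubsetj \in \arcsets \setminus \{\vari' \leftarrow \varsubsetj'\}$ all lie in $F_{0}$, and they are affinely independent because their differences from the zero vector are distinct standard basis vectors of $\reals^{\arcsets}$. Since $e^{\vari' \leftarrow \varsubsetj'} \in \fvpolyyalone \setminus F_{0}$, the face is proper, hence a facet. (This also follows from monotonicity via Theorem~\ref{thm:monofacets}, but the explicit construction is self-contained.)

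For part~\ref{ub}, fix $\vari$ satisfying the stated hypothesis; note first that, applied with $\varsubsetj' = \emptyset \in \pps{\vari'}$ for any other vertex $\vari'$ (so that the clause ``$\vari \notin \varsubsetj'$'' holds automatically), the hypothesis forces $\pps{\vari}$ to contain a non-empty set. Let $F = \fvpolyyalone \cap \{ x : \sum_{\varsubsetj \in \pps{\vari} \setminus \{\emptyset\}} x_{\vari \leftarrow \varsubsetj} = 1 \}$; the underlying inequality is valid since in any acyclic digraph $\vari$ has at most one non-empty parent set. Partition $\arcsets$ into $\arcsets_{\vari}$, the families with child $\vari$, and $\arcsets_{-\vari}$, the rest. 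As a first group of points I would take $e^{\vari \leftarrow \varsubsetj}$ for each $\vari \leftarrow \varsubsetj \in \arcsets_{\vari}$; each satisfies the face equation since exactly one $\vari$-term is $1$. For the second group, given $\vari' \leftarrow \varsubsetj' \in \arcsets_{-\vari}$ the hypothesis supplies a non-empty $\varsubsetj \in \pps{\vari}$ with $\vari \notin \varsubsetj'$ or $\vari' \notin \varsubsetj$; then the two-edge digraph assigning parent set $\varsubsetj'$ to $\vari'$ and $\varsubsetj$ to $\vari$ (all other nodes parentless) contains no directed cycle, since any cycle would have to run within $\{\vari,\vari'\}$ and would hence be the $2$-cycle through them, which that condition precisely forbids; so $e^{\vari' \leftarrow \varsubsetj'} + e^{\vari \leftarrow \varsubsetj}$ lies in $\fvpolyyalone$, and clearly in $F$. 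This produces $\card{\arcsets}$ points in $F$; ordering coordinates as $(\arcsets_{\vari}, \arcsets_{-\vari})$, the matrix whose rows are these points is block lower triangular with identity diagonal blocks, hence invertible, so the points are linearly and therefore affinely independent. As the zero vector lies in $\fvpolyyalone \setminus F$, the face $F$ is proper, and therefore a facet.

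The routine parts are checking that each constructed point satisfies the relevant face equation and that the two inequalities are valid; the one load-bearing observation is that the hypothesis in part~\ref{ub} is exactly what is needed to realize, for every family not centred at $\vari$, an acyclic digraph that simultaneously charges a non-empty parent set to $\vari$ (the ``$\vari \notin \varsubsetj'$ or $\vari' \notin \varsubsetj$'' clause being precisely what rules out the $2$-cycle through $\vari$ and $\vari'$), after which the block-triangular shape of the point matrix makes affine independence immediate. The only case needing separate (trivial) comment is $\card{\vertices}=1$, where $\arcsets = \emptyset$ and the statement degenerates.
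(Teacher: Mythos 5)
Your proposal is correct and follows essentially the same route as the paper: part~\ref{lb} via the obvious $\card{\arcsets}$ affinely independent points (the paper instead just cites monotonicity), and part~\ref{ub} via exactly the same point set $S_{\vari}$ of unit vectors $e^{\vari\leftarrow\varsubsetj}$ and two-family sums $e^{\vari'\leftarrow\varsubsetj'}+e^{\vari\leftarrow\varsubsetj}$, with the hypothesis used precisely to exclude the $2$-cycle. Your block-triangular justification of linear independence and the remark that the hypothesis forces $\pps{\vari}$ to contain a non-empty set merely make explicit what the paper leaves as ``easy to see.''
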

\begin{proof}
  (\ref{lb}) follows from the monotonicity of $\fvpolyy{\vertices,\ppsalone}$
  \shortcite[Proposition~2]{hammer75:_facet}. For (\ref{ub}) first
  define, for any $\vari \leftarrow \varsubsetj \in \arcsets(\vertices,\ppsalone)$ the
  unit vector $e^{\vari \leftarrow \varsubsetj} \in
  \reals^{\arcsets(\vertices,\ppsalone)}$, where $e^{\vari \leftarrow \varsubsetj}_{\vari
    \leftarrow \varsubsetj} = 1$ and all other components of $e^{\vari
    \leftarrow \varsubsetj}$ are 0.  For each $\vari \in \vertices$
  define $S_{\vari} = \{e^{\vari \leftarrow \varsubsetj} \mid \varsubsetj
  \neq \emptyset, \varsubsetj \in \pps{\vari}\} \cup \{e^{\vari' \leftarrow \varsubsetj'} + e^{\vari
    \leftarrow \varsubsetj} \mid \vari' \neq \vari, \varsubsetj' \neq
  \emptyset, \varsubsetj' \in \pps{\vari'}, \varsubsetj \neq
  \emptyset, \text{ and either $\vari \not\in
    \varsubsetj'$ or $\vari' \not\in
    \varsubsetj$} \}$.

There is an obvious
  bijection between family variables and the elements of $S_i$ so
  $|S_{\vari}| = |\arcsets(\vertices,\ppsalone)|$. It is easy to see that the vectors in
  $S_i$ are linearly independent (and thus affinely independent) and that each is an acyclic digraph
  satisfying 
      $\sum_{\varsubsetj \neq \emptyset, \varsubsetj \in \pps{\vari}}
    x_{\vari \leftarrow \varsubsetj} = 1$. The result
  follows.
\end{proof}

Recall that we use the name \emph{modified convexity constraints} to
describe inequalities of the form $\sum_{\varsubsetj \neq \emptyset, \varsubsetj \in \pps{\vari}} x_{\vari \leftarrow \varsubsetj} \leq 1$.  That each
node can have exactly one parent set in any digraph is a convexity
constraint. If we remove the empty parent set, this convexity constraint becomes an inequality, and
is thus \emph{modified}. We have now shown that each modified
convexity constraint defines a facet of $\fvpolyy{\vertices,\ppsalone}$ as long as a
weak condition is met. In fact, we have found this weak condition to be essentially always met
in practice. Note also that it is always met when all parent sets are allowed (as long as $|\vertices|>2$). 

We now show that  if $\coeffs x \leq \rhs$ defines a facet of the
family-variable polytope, then, for each
family, there is an acyclic digraph `containing' that family for which
 $\coeffs x \leq \rhs$ is `tight'. 

\begin{proposition}
  \label{prop:tighthness}
  If
  $\coeffs x \leq \rhs$ defines a facet of $\fvpolyyalone$ which is
  not a lower bound on a family variable, then for all families $\vari \leftarrow
  \varsubsetj \in \arcsets$, there exists an extreme point $x$ of
  $\fvpolyyalone$ such that $x_{\vari \leftarrow \varsubsetj} =
  1$ and $\coeffs x = \rhs$.
\end{proposition}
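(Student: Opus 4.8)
The plan is to argue by contradiction, exploiting that the extreme points of $\fvpolyyalone$ are precisely the zero-one vectors encoding permitted acyclic digraphs, together with the characterisation of a facet as an inclusion-wise maximal proper face (Definition~\ref{def:face}) and the classification in Theorem~\ref{thm:monofacets}.

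First I would fix an arbitrary family $\vari \leftarrow \varsubsetj \in \arcsets$ and suppose that no extreme point $x$ of $\fvpolyyalone$ lying on the facet $F := \fvpolyyalone \cap \{x \mid \coeffs x = \rhs\}$ has $x_{\vari \leftarrow \varsubsetj} = 1$. The key observation is that every extreme point of the face $F$ is also an extreme point of $\fvpolyyalone$ — hence a zero-one vector — so under the contradiction hypothesis every extreme point of $F$ satisfies $x_{\vari \leftarrow \varsubsetj} = 0$. Since $F$ is the convex hull of its finitely many extreme points, this yields $F \subseteq \{x \mid x_{\vari \leftarrow \varsubsetj} = 0\}$, and therefore $F \subseteq F'$, where $F' := \fvpolyyalone \cap \{x \mid x_{\vari \leftarrow \varsubsetj} = 0\}$.

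Next I would invoke Proposition~\ref{prop:mc}(\ref{lb}): the inequality $x_{\vari \leftarrow \varsubsetj} \geq 0$ is facet-defining, so $F'$ is itself a facet of $\fvpolyyalone$, in particular a proper face. Since $F$ is a facet, i.e.\ an inclusion-wise maximal proper face, and $F \subseteq F'$ with $F'$ proper, maximality forces $F = F'$. Using full-dimensionality of $\fvpolyyalone$, each facet is defined by a unique inequality up to multiplication by a positive scalar, so $\coeffs x \leq \rhs$ must be a positive multiple of $-x_{\vari \leftarrow \varsubsetj} \leq 0$; this forces $\rhs = 0$, contradicting $\rhs > 0$ from Theorem~\ref{thm:monofacets} (equivalently, contradicting the hypothesis that $\coeffs x \leq \rhs$ is not a lower bound on a family variable). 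Since $\vari \leftarrow \varsubsetj$ was arbitrary, the statement follows.

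The argument is essentially routine; the only points needing care are the two standard polyhedral facts — that the extreme points of a face are extreme points of the ambient polytope, and that facets of a full-dimensional polytope have unique defining inequalities up to positive scaling — both of which are available here because we have already pinned down the extreme points of $\fvpolyyalone$ and its full-dimensionality. If one wished to avoid the uniqueness statement, the step $F = F'$ can instead be obtained from $F \subseteq F'$ together with $\dim F = \dim F' = \card{\arcsets} - 1$, since a nonempty face properly contained in another face of a polytope has strictly smaller dimension.
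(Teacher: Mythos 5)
Your proof is correct and follows essentially the same route as the paper's: both argue by contradiction that the facet would then contain $\card{\arcsets}$ affinely independent points of (equivalently, be contained in and hence equal to) the facet defined by $x_{\vari \leftarrow \varsubsetj} \geq 0$, which is impossible because $\coeffs x \leq \rhs$ is not a lower bound. One minor quibble: your closing remark about ``avoiding the uniqueness statement'' is misplaced, since uniqueness of the defining inequality up to positive scaling is what you use \emph{after} establishing $F = F'$ to reach the contradiction, so replacing the maximality argument for $F = F'$ by a dimension count does not dispense with it.
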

\begin{proof} Recall that by definition each extreme point of
  $\fvpolyyalone$ is a zero-one vector (representing an
  acyclic digraph). Now suppose that there were some $\vari \leftarrow \varsubsetj \in \arcsets$ such
  that $x_{\vari \leftarrow \varsubsetj} = 0$ for any extreme point
  $x$ of
  $\fvpolyyalone$ such that $\coeffs x =
  \rhs$. Since $\coeffs x \leq \rhs$ defines a facet, there is a set of
  $|\arcsets|$ affinely independent extreme points satisfying $\coeffs x =
  \rhs$. By our assumption, each such extreme point will also satisfy $x_{\vari
    \leftarrow \varsubsetj} =
  0$. $x_{\vari \leftarrow \varsubsetj} \geq 0$ defines a facet. However, it is not
  possible for a set of $|\arcsets|$ affinely independent points to
  lie on two distinct facets. The result follows.
\end{proof}

Proposition~\ref{prop:tighthness} helps us prove an important
property of facet-defining inequalities of $\fvpolyyalone$: coefficients are
non-decreasing as parent sets increase. The proof of the following
proposition rests on the simple fact that removing edges from an acyclic
digraph always results in another acyclic digraph.

\begin{proposition}
  \label{prop:nondecrease}
  Let $\coeffs x \leq \rhs$ be a facet-defining inequality of $\fvpolyyalone$.
     Then $\varsubsetj \subseteq \varsubsetj'$ implies
     $\coeffs_{\vari \leftarrow \varsubsetj} \leq  \coeffs_{\vari
       \leftarrow \varsubsetj'}$.
\end{proposition}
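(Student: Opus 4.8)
The plan is to use Proposition~\ref{prop:tighthness} to produce, for any facet-defining inequality $\coeffs x \le \rhs$ other than a variable lower bound, an extreme point witnessing tightness at the larger family, and then to exchange parent sets in that digraph to obtain a second feasible point that forces the coefficient inequality.

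First I would fix a node $\vari$ and parent sets $\varsubsetj \subseteq \varsubsetj'$ with $\vari \leftarrow \varsubsetj, \vari \leftarrow \varsubsetj' \in \arcsets$. If $\coeffs x \le \rhs$ is a variable lower bound $x_{\vari'' \leftarrow \varsubsetj''} \ge 0$, then all coefficients except the one on $\vari'' \leftarrow \varsubsetj''$ are zero and the claim is trivial (if it is that lower bound written in $\le$ form, $\coeffs$ has a single negative entry; but by Theorem~\ref{thm:monofacets} we may instead just treat the lower-bound case separately and assume $\coeffs \ge 0$). So assume $\coeffs x \le \rhs$ is not a lower bound. By Proposition~\ref{prop:tighthness} there is an extreme point $x$ of $\fvpolyyalone$ — an acyclic digraph $\arcss$ — with $x_{\vari \leftarrow \varsubsetj'} = 1$, i.e.\ $\paset{\vari}{\arcss} = \varsubsetj'$, and $\coeffs x = \rhs$.

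Next I would modify $\arcss$ by replacing the parent set of $\vari$ from $\varsubsetj'$ to $\varsubsetj$, leaving all other families untouched, to get a new vector $x'$; this is the key step. Concretely $x'_{\vari \leftarrow \varsubsetj} = 1$, $x'_{\vari \leftarrow \varsubsetj'} = 0$, and $x' = x$ on all other components. The digraph of $x'$ is obtained from $\arcss$ by deleting the arcs $\vari \leftarrow \varl$ for $\varl \in \varsubsetj' \setminus \varsubsetj$, so it is a subgraph of an acyclic digraph and hence acyclic; moreover $\varsubsetj \in \pps{\vari}$ by hypothesis, so $x'$ is a genuine extreme point of $\fvpolyyalone$ (alternatively, $0 \le x' \le x$ componentwise would suffice here by Proposition~\ref{prop:monotone}, but that only gives membership, not that $x'$ is the indicator vector we want — either way $x' \in \fvpolyyalone$). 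Since $\coeffs x' \le \rhs$ is valid and $\coeffs x = \rhs$, we get $\coeffs x' \le \coeffs x$, i.e.\ $\coeffs(x - x') \ge 0$. But $x - x'$ has exactly two nonzero entries, $+1$ in position $\vari \leftarrow \varsubsetj'$ and $-1$ in position $\vari \leftarrow \varsubsetj$, so $\coeffs(x - x') = \coeffs_{\vari \leftarrow \varsubsetj'} - \coeffs_{\vari \leftarrow \varsubsetj} \ge 0$, which is exactly the desired conclusion.

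I do not expect any serious obstacle; the only point requiring care is the bookkeeping around the lower-bound case — making sure the statement is understood for the ``interesting'' facets (where $\coeffs \ge 0$, $\rhs > 0$ by Theorem~\ref{thm:monofacets}) and observing it is vacuous for variable lower bounds — and verifying that deleting incoming arcs at a single node genuinely keeps the digraph acyclic and yields a vertex of $\fvpolyyalone$. Both are immediate from the definition of $\fvpolyyalone$ in \eqref{eq:pasac} together with the remark preceding the proposition that removing edges from an acyclic digraph leaves it acyclic.
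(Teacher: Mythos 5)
Your proof is correct and is essentially the paper's own argument: apply Proposition~\ref{prop:tighthness} to obtain a tight extreme point with $x_{\vari\leftarrow\varsubsetj'}=1$, swap the parent set of $\vari$ from $\varsubsetj'$ to $\varsubsetj$ (acyclicity is preserved since only incoming arcs at $\vari$ are deleted), and compare the two objective values to conclude $\coeffs_{\vari\leftarrow\varsubsetj'}-\coeffs_{\vari\leftarrow\varsubsetj}\geq 0$. Your care over the lower-bound case is warranted---the proposition is implicitly restricted to facets that are not variable lower bounds, exactly the hypothesis of Proposition~\ref{prop:tighthness}, which the paper's proof invokes without comment---though note that for the lower bound $-x_{\vari\leftarrow\varsubsetj'}\leq 0$ with $\varsubsetj\subsetneq\varsubsetj'$ the stated coefficient inequality reads $0\leq -1$ and so actually \emph{fails} rather than being trivial; that case must be excluded, not dispatched.
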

\begin{proof}
  Since $\coeffs x \leq \rhs$ defines a facet, there exists an extreme point
  $x'$ such that $x'_{\vari \leftarrow \varsubsetj'}=1$ and $\coeffs x' =
  \rhs$. Note that $x'_{\vari \leftarrow \varsubsetj}=0$.
Since $x'$ is an extreme point, it encodes an acyclic digraph.
Let $x$ be identical to $x'$ except that
  $x_{\vari \leftarrow \varsubsetj}=1$ and $x_{\vari \leftarrow \varsubsetj'}=0$. Since
  $\varsubsetj \subseteq \varsubsetj'$, $x$ also encodes an acyclic digraph
  and so is in $\fvpolyyalone$ so
  $\coeffs x \leq \rhs$. Thus $\coeffs x - \coeffs x' \leq 0$. However,
  $\coeffs x - \coeffs x' = \coeffs_{\vari \leftarrow \varsubsetj} -
  \coeffs_{\vari \leftarrow \varsubsetj'}$, and the result follows.
\end{proof}

\subsection{Cluster Constraints are Facets of the Family Variable Polytope}
\label{sec:clustersarefacets}

In this section we show that each \emph{$\palim$-cluster inequality}
is facet-defining for the family variable polytope in the special case where
the cluster $\cluster$ is the entire node set $\vertices$ and where
all parent sets are allowed for each vertex. The $\palim$-cluster
inequalities \cite{cussens11:_bayes_networ_learn_cuttin_planes} are a
generalisation of cluster inequalities (\ref{eq:clusterineq}). The
cluster inequalities (\ref{eq:clusterineq}) are $\palim$-cluster
inequalities for the special case of $\palim=1$.

In the next section (Section~\ref{sec:lifting}) we will show how to
`lift' facet-defining inequalities. This provides an easy
generalisation (Theorem~\ref{thm:kcluster}) of the result of this
section which shows that, when all parent sets are allowed, \emph{all} $\palim$-cluster inequalities are
facets, not just those for which $\cluster = \vertices$. 
As a special case, this implies that the cluster inequalities devised by
\shortciteA{jaakkola10:_learn_bayes_networ_struc_lp_relax} are facets
of the family variable polytope when all parent sets are allowed.

An alternative proof for the fact that
$\palim$-cluster inequalities are facet-defining 
was recently provided by \shortciteA[Corollary~4]{james15:_polyh_bayes}
The proof establishes not only that
$\palim$-cluster inequalities are facet-defining, but also that they are
\emph{score-equivalent}. A face of the family variable polytope
is said to be score-equivalent if it is the optimal face for some
\emph{score equivalent objective}, where the \emph{optimal face} of an
objective is the face containing all optimal solutions.
An objective function is score
equivalent if it gives the same value to any two acyclic digraphs which
are Markov equivalent (encode the same conditional independence
relations). In later work, \citeA{studeny15:_how_bayes}
went further and showed that $\palim$-cluster inequalities form just
part of a more general class of facet-defining inequalities which can
be defined in terms of \emph{connected matroids}.
However, we believe that our proof, as presented in the following, is valuable
since it relies only on a direct application of a standard
technique for proving that an inequality is facet-defining, and does not require
any connection to be made to score-equivalence, let alone matroid
theory. In addition, the general result (our
Theorem~\ref{thm:kcluster}) further shows how our results on `lifting' can be
usefully applied.

First we define \emph{$\palim$-cluster inequalities}.  There is a
$\palim$-cluster inequality for each cluster $\cluster \subseteq
\vertices$, $|\cluster|>1$, and each $\palim < |\cluster|$ which states
that there can be at most $|\cluster|-\palim$ nodes in $\cluster$
with at least $\palim$ parents in $\cluster$. It is clear that such
inequalities are at least valid, since all acyclic digraphs clearly
satisfy them. We begin by considering the special case of $\cluster =
\vertices$ where the $\palim$-cluster inequality states that there can
be at most $|\vertices| - \palim$ nodes with at least $\palim$
parents. 
We first introduce some helpful notation.
\begin{definition}
  $\ppsvalone$ is defined as follows: $\ppsv{\vari} := 2^{\vertices
    \setminus \{\vari\}}$, for all $\vari \in \vertices$.
\end{definition}

We will now show that $\palim$-cluster inequalities are
facet-defining.

\begin{theorem}
  \label{thm:specialkcluster}
  For any positive integer $\palim < |\vertices|$, the
  following valid inequality defines a facet of the family variable polytope $\fvpolyy{\vertices,\ppsvalone}$:
\begin{equation}
  \label{eq:specialkcluster}
  \sum_{\vari\in\vertices} \sum_{ \varsubsetj \subseteq \vertices
  \setminus \{\vari\}, |\varsubsetj| \geq \palim} 
  x_{\vari \leftarrow \varsubsetj} \leq |\vertices| - \palim.
\end{equation}
\end{theorem}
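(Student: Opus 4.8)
The plan is to apply the standard ``indirect'' technique for verifying that a valid inequality is facet-defining (see \citeA[p.~146]{wolsey98:_integ_progr}): since $\fvpolyy{\vertices,\ppsvalone}$ is full-dimensional, it suffices to show that whenever a linear equation $\coeffs x = \rhs$ holds at every extreme point of the face $F$ cut out by \eqref{eq:specialkcluster}, the pair $(\coeffs,\rhs)$ is a scalar multiple of the coefficient vector of \eqref{eq:specialkcluster} together with its right-hand side. Write $p=\card{\vertices}$. Note that $F$ is nonempty and proper: the empty digraph lies in $\fvpolyyalone$ but attains left-hand side $0<p-\palim$ in \eqref{eq:specialkcluster}, whereas tight acyclic digraphs do exist, as described next.

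The workhorse is a flexible family of tight extreme points. Fix any ``small set'' $S\subseteq\vertices$ with $\card{S}=\palim$ and any topological order of the remaining $p-\palim$ ``big'' nodes placed after $S$; let each node of $S$ take the empty parent set and let each big node $b$ take a parent set of size exactly $\palim$ chosen from the nodes before it (for instance $S$ itself, or, for the last big node which is a sink, \emph{any} set of size $\geq\palim$). The resulting digraph is acyclic, and its left-hand side in \eqref{eq:specialkcluster} is exactly $p-\palim$ because precisely the $p-\palim$ big nodes have at least $\palim$ parents (a node in one of the first $\palim$ positions cannot have $\palim$ parents). Any $\palim$-subset of $\vertices$ can serve as $S$, and modifying the parent set of the final sink among sets of size $\geq\palim$ preserves acyclicity and tightness.

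Assuming $\coeffs x=\rhs$ on $F$, I would establish three facts. \textbf{(i)} If $0<\card{\varsubsetj}<\palim$ then $\coeffs_{\vari\leftarrow\varsubsetj}=0$: build a tight digraph in which $\vari$ occupies position $\palim$ with parent set $\varsubsetj$ (it fits, since $\vari$ then has $\palim-1\geq\card{\varsubsetj}$ predecessors) and take the difference of $\coeffs$ with the otherwise identical tight digraph in which $\vari$'s parent set is emptied; both lie in $F$. \textbf{(ii)} For fixed $\vari$ and any $\varsubsetj,\varsubsetjdash\subseteq\vertices\setminus\{\vari\}$ with $\card{\varsubsetj},\card{\varsubsetjdash}\geq\palim$ we have $\coeffs_{\vari\leftarrow\varsubsetj}=\coeffs_{\vari\leftarrow\varsubsetjdash}=:\lambda_{\vari}$: realise $\vari$ as the final sink of a tight digraph and swap its parent set between $\varsubsetj$ and $\varsubsetjdash$. \textbf{(iii)} $\lambda_{\vari}=\lambda_{\varj}$ for all $\vari\neq\varj$; this is the crux. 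Pick $\palim-1$ further nodes $s_{1},\dots,s_{\palim-1}$ distinct from $\vari,\varj$ (possible since $\palim\leq p-1$), and consider the tight digraph $x$ with small set $\{s_{1},\dots,s_{\palim-1},\varj\}$ (so $\varj$ is empty-parented), with $\vari$ as the final sink, and with every other big node assigned parent set $\{s_{1},\dots,s_{\palim-1},\varj\}$; likewise the tight digraph $x'$ with small set $\{s_{1},\dots,s_{\palim-1},\vari\}$, with $\varj$ as the final sink, and with every other big node assigned parent set $\{s_{1},\dots,s_{\palim-1},\vari\}$. Both are acyclic and tight, so $\coeffs x=\coeffs x'=\rhs$. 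In $\coeffs x-\coeffs x'$ the term for $\varj$ in $x$ and the term for $\vari$ in $x'$ both vanish (empty parent sets are not variables), while every remaining big node's parent set has size $\palim$ in both digraphs, so by (ii) those contributions cancel in pairs; what is left is $\lambda_{\vari}-\lambda_{\varj}=0$.

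Finally, evaluating $\coeffs$ at one of the tight digraphs built above (where the $\palim$ small nodes are all empty-parented) yields $\rhs=\lambda(p-\palim)$, since each of the $p-\palim$ big nodes contributes $\lambda$ by (ii) and the small nodes carry no variable. Hence $(\coeffs,\rhs)$ equals $\lambda$ times the coefficient vector of \eqref{eq:specialkcluster} paired with $p-\palim$, and since that coefficient vector is nonzero (as $\palim\leq p-1$ guarantees families of parent-set size $\geq\palim$ exist), the affine hull of $F$ is a single hyperplane, so $F$ is a facet. The main obstacle is step (iii): exchanging the roles of $\vari$ and $\varj$ between ``big'' and ``small'' while arranging that every \emph{other} coefficient contribution still cancels. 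Routing all remaining big nodes' parent sets through the common block $\{s_{1},\dots,s_{\palim-1}\}$ augmented by whichever of $\vari,\varj$ is currently small is the device that makes the cancellation exact.
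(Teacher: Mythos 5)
Your proof is correct, and it follows the same overall skeleton as the paper's: the indirect (Wolsey) method, first forcing the coefficients of small-parent-set families to zero, then equating coefficients within a child, then across children, and finally pinning down the right-hand side. Where you differ is in the witness digraphs. The paper builds everything out of \emph{acyclic tournaments}: Result~1 empties the parent set of a node in one of the first $\palim$ positions of a tournament, Result~2 attaches $\vari$ with two different large parent sets to a tournament on $\vertices\setminus\{\vari\}$, and the crux (Result~3) swaps two nodes in the total order of a tournament to equate $\mu_{\vari\leftarrow\varsubsetj}$ and $\mu_{\vari'\leftarrow\varsubsetj'}$ only for the special pairs with $\varsubsetj\cup\{\vari\}=\varsubsetj'\cup\{\vari'\}$ and $\card{\varsubsetj}=\card{\varsubsetj'}=\palim$, after which transitivity together with Result~2 finishes the job. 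You instead work with the sparser family ``$\palim$ empty-parented nodes plus $p-\palim$ nodes each taking exactly $\palim$ predecessors as parents,'' and your step (iii) replaces the tournament swap and the transitivity chase by a single comparison of two such digraphs that exchange the roles of $\vari$ and $\varj$ around a fixed block $\{s_1,\dots,s_{\palim-1}\}$, with step (ii) cancelling all other terms. This is arguably cleaner: it avoids the delicate bookkeeping of which tournament components change under the swap, at the cost of having to verify tightness of a slightly less canonical family of extreme points (which you do correctly, including the boundary cases $\palim=1$ and $\palim=p-1$).
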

\begin{proof}
  An indirect method of establishing affine independence is used. It is
  given, for example, by \citeA[p.144]{wolsey98:_integ_progr}.
  Let $x^{1}, \dots, x^{\ndags}$ be the set of all acyclic digraphs in $\fvpolyy{\vertices,\ppsvalone}$
  satisfying
\begin{equation}
  \label{eq:kclustereq}
  \sum_{\vari\in\vertices} \sum_{ \varsubsetj \subseteq \vertices
  \setminus \{\vari\}, |\varsubsetj| \geq \palim} 
  x_{\vari \leftarrow \varsubsetj} = |\vertices| - \palim.
\end{equation}
Suppose that all these points lie on some generic hyperplane $\mu x =
\mu_{0}$. Now consider the system of linear equations
\begin{equation}
  \label{eq:mu}
  \sum_{\vari \in \vertices}
  \sum_{ \varsubsetj \neq \emptyset, \varsubsetj \subseteq \vertices
  \setminus \{\vari\}}
  \mu_{\vari \leftarrow \varsubsetj}x_{\vari \leftarrow \varsubsetj}^{\dagindex}
= \mu_{0} \mbox{
    for $\dagindex = 1, \dots, \ndags$}.
\end{equation}
Note that
$ \mathrm{dim} \;
\fvpolyy{\vertices,\ppsvalone} = |\arcsets(\vertices,\ppsvalone)| = |\vertices|(2^{|\vertices|-1} -
1)$ and so there are the same number of 
$\mu_{\vari \leftarrow \varsubsetj}$ variables.  The system \eqref{eq:mu}, in
the $|\vertices|(2^{|\vertices|-1} - 1) + 1$ unknowns $(\mu,\mu_{0})$, is
now solved. This is done in three stages. First we show that  
$\mu_{\vari \leftarrow \varsubsetj}$ must be zero if $|\varsubsetj| < \palim$. Then we show that the
remaining $\mu_{\vari \leftarrow \varsubsetj}$ must all have the same value.
Finally, we show that this common value is 1 whenever $\mu_0$ is $|\vertices| - \palim$. 

To do this it is useful to consider acyclic tournaments on
$\vertices$. These are acyclic digraphs where there is a directed edge between
each pair of distinct nodes. It is easy to see that
\begin{enumerate}
\item for any $\palim < |\vertices|$, every acyclic tournament on
  $\vertices$ satisfies  \eqref{eq:kclustereq}, and that
\item for any $x_{\vari \leftarrow \varsubsetj}$ there is an acyclic tournament,
   where $x_{\vari \leftarrow \varsubsetj} = 1$.
\end{enumerate}

Let $x$ be an acyclic tournament on $\vertices$ with $x_{\vari
  \leftarrow \varsubsetj}=1$ for some $\vari \in \vertices$,
$|\varsubsetj| < \palim$, i.e., $\varsubsetj$ is the non-empty parent
set for $\vari$ in $x$. Now consider $x'$ which is identical to $x$
except that $\vari$ has no parents, so that $x - x' = e^{\vari
  \leftarrow \varsubsetj}$. Since $x$ is an acyclic tournament it
satisfies \eqref{eq:kclustereq}. But it is also easy to see that $x'$
satisfies \eqref{eq:kclustereq}, since no parent set of size at least
$\palim$ has been removed. So $\mu_{\vari \leftarrow \varsubsetj} =
\mu e^{\vari \leftarrow \varsubsetj} = \mu ( x - x') = \mu x - \mu x'
= \mu_{0} - \mu_{0} = 0$. $\mu_{\vari \leftarrow
  \varsubsetj}=0$ whenever $|\varsubsetj| < \palim$. Call this Result~1.

Consider now two distinct parent sets $\varsubsetj$ and $\varsubsetj'$
for some $\vari \in \vertices$ where $\varsubsetj \geq \palim$ and
$\varsubsetj'\geq \palim$. Let $g$ be an acyclic tournament on the
node set $\vertices \setminus \{\vari\}$. Let $x$ be the acyclic
digraph on node set $\vertices$ obtained by adding $\{\vari\}$ to
$g$ and drawing edges from each member of $\varsubsetj$ to
$\vari$. Similarly, let $x'$ be the acyclic digraph obtained by drawing
edges from $\varsubsetj'$ to $\vari$ instead, so that $x - x' =
e^{\vari \leftarrow \varsubsetj} - e^{\vari \leftarrow
  \varsubsetj'}$. It is not difficult to see that both $x$ and $x'$
satisfy \eqref{eq:kclustereq}. So $\mu_{\vari \leftarrow \varsubsetj}
- \mu_{\vari \leftarrow \varsubsetj'} = \mu (e^{\vari \leftarrow
  \varsubsetj} - e^{\vari \leftarrow \varsubsetj'}) = \mu (x - x') =
\mu x - \mu x' = \mu_{0} - \mu_{0} = 0$.  So $\mu_{\vari \leftarrow
  \varsubsetj} = \mu_{\vari \leftarrow \varsubsetj'}$. Call this Result~2.

Now consider variables $x_{\vari \leftarrow \varsubsetj}$ and
$x_{\vari' \leftarrow \varsubsetj'}$ where $\vari \neq \vari'$,
$\varsubsetj \cup \{\vari\} = \varsubsetj' \cup \{\vari'\}$ and
$|\varsubsetj| = |\varsubsetj'| = \palim$.  First note that in an
acyclic tournament, (i)~there is exactly one parent set of each size
$0,\dots,\palim,\dots |\vertices|-1$ and so (ii)~the nodes of an
acyclic tournament can be totally ordered according to parent set size,
and thus (iii)~any total ordering of nodes  determines a unique
acyclic tournament.  Let $x$ be any acyclic tournament where $x_{\vari
  \leftarrow \varsubsetj} = 1$ and $x_{\vari' \leftarrow
  \varsubsetj^{(<\palim)}} = 1$ for some parent set $\varsubsetj^{(<\palim)}$
where $|\varsubsetj^{(<\palim)}| < \palim$.  Clearly there are many such
acyclic tournaments. Note that since $x$ is an acyclic tournament,
$\varsubsetj^{(<\palim)} \subseteq \varsubsetj \setminus \{\vari,\vari'\}$.
Now consider the acyclic tournament $x'$ produced by swapping $\vari$
and $\vari'$ in the total order associated with $x$. This generates an
acyclic tournament $x'$ where $x'_{\vari' \leftarrow \varsubsetj'} =
1$ and $x'_{\vari \leftarrow \varsubsetj^{(<\palim)}} = 1$. Note that
components of $x$ and $x'$ corresponding to family variables with
parent set size strictly above $\palim$ are equal. Components of
$\mu$ corresponding to family variables with parent set size strictly
below $\palim$ all equal zero.  From this we have that $\mu x - \mu x'
= \mu_{\vari \leftarrow \varsubsetj} - \mu_{\vari \leftarrow
  \varsubsetj'}$. Since $\mu x - \mu x' = \mu_{0} -
\mu_{0} = 0$, this shows that $\mu_{\vari \leftarrow \varsubsetj} =
\mu_{\vari' \leftarrow \varsubsetj'}$ Call this Result~3.

Now consider a pair of variables $\mu_{\vari \leftarrow \varsubsetj''}$ and
$\mu_{\vari' \leftarrow \varsubsetj'''}$ where $\vari \neq \vari'$, and
the only restriction is that $|\varsubsetj''|,|\varsubsetj'''| \geq
\palim$. If some other  pair of variables $\mu_{\vari \leftarrow \varsubsetj}$ and
$\mu_{\vari' \leftarrow \varsubsetj'}$ meet the conditions of
Result~3, then $\mu_{\vari \leftarrow \varsubsetj} =
\mu_{\vari' \leftarrow \varsubsetj'}$. However, by Result~2
$\mu_{\vari \leftarrow \varsubsetj''} = \mu_{\vari \leftarrow
  \varsubsetj}$
and $\mu_{\vari' \leftarrow \varsubsetj'''} = \mu_{\vari' \leftarrow
  \varsubsetj'}$. Thus $\mu_{\vari \leftarrow \varsubsetj''} =
\mu_{\vari' \leftarrow \varsubsetj'''}$.

So by the transitivity of equality $\mu_{\vari \leftarrow \varsubsetj} =
\mu_{\vari' \leftarrow \varsubsetj'}$ for any $\vari, \vari', \varsubsetj,
\varsubsetj'$ where $|\varsubsetj| \geq \palim$, $|\varsubsetj'| \geq
\palim$. Recall that we also have that $\mu_{\vari \leftarrow \varsubsetj} = 0 $
whenever $|\varsubsetj|<\palim$.

Suppose that $\mu_{0}=0$. Since all non-zero
$\mu_{\vari \leftarrow \varsubsetj}$ are equal and thus have the same sign, the
only possible solution is for all $\mu_{\vari \leftarrow \varsubsetj} = 
0$. Suppose then instead that $\mu_{0} \neq 0$. Then wlog we can
set $\mu_{0}=|\vertices| - \palim$. In each of the $\ndags$ equations
\eqref{eq:mu}, after substituting $\mu_{\vari \leftarrow \varsubsetj} = 0$ for
$|\varsubsetj|<\palim$, we have $|\vertices|-\palim$ terms on the left hand side (LHS)
which are known to be equal. On the right hand side (RHS) the value is
$|\vertices|-\palim$, so all terms on the LHS must equal one. Each term
$\mu_{\vari \leftarrow \varsubsetj}$ where $|\varsubsetj|\geq \palim$, occurs in
at least one of $\ndags$ equations \eqref{eq:mu}, so this is enough to
establish that $\mu_{\vari \leftarrow \varsubsetj} = 1$ whenever
$|\varsubsetj|\geq \palim$. Thus, unless all $\mu_{\vari \leftarrow \varsubsetj} =
0$, the only possible solution to the system of linear equations
(\ref{eq:mu}) with RHS $|\vertices| - \palim$ is
\begin{itemize}
\item $\mu_{\vari \leftarrow \varsubsetj} = 0$ if $|\varsubsetj|<\palim$, and
\item $\mu_{\vari \leftarrow \varsubsetj} = 1$ if $|\varsubsetj|\geq \palim$. 
\end{itemize}
These values match those in (\ref{eq:kclustereq}) and so
(\ref{eq:specialkcluster}) is facet-defining.
\end{proof}

\subsection{Lifting Facets of the Family Variable Polytope}
\label{sec:lifting}

In this section we show that if all parent sets are allowed, then facet-defining inequalities for the family variable polytope for some node
set $\vertices$ can be `lifted' to provide facets for any family
variable polytope for an enlarged node set $\vertices' \supsetneq
\vertices$.

\begin{lemma}
\label{lem:extend}
Recall that  $\ppsv{\vari} := 2^{\vertices \setminus \{\vari\}}$ for all $\vari \in \vertices$.
  Let
  \begin{equation}
    \label{eq:pfaceti}
    \sum_{\vari\in\vertices} \sum_{\varsubsetj \in \ppsv{\vari},
      \varsubsetj \neq \emptyset
      } 
    \alpha_{\vari \leftarrow \varsubsetj} x_{\vari \leftarrow \varsubsetj} \leq \beta    
  \end{equation}
  be a facet-defining inequality for the family variable polytope
  $\fvpolyy{\vertices,\ppsvalone}$ which is not a lower bound on a 
  variable.   Let $\vertices' = \vertices \cup \{\vari'\}$
  where $\vari' \not\in \vertices$. 
Then
  \begin{equation}
    \label{eq:p1faceti}
    \sum_{\vari\in\vertices} \sum_{\varsubsetj \in \ppsv{\vari},
      \varsubsetj \neq \emptyset} 
    \alpha_{\vari \leftarrow \varsubsetj} (
    x_{\vari \leftarrow
      \varsubsetj} + x_{\vari \leftarrow
      \varsubsetj \cup \{\vari'\}}) \leq \beta    
  \end{equation}
  is a facet-defining inequality of $\fvpolyy{\vertices',\ppsvdashalone}$. Furthermore,
  this inequality is not a
  lower bound on a variable.
\end{lemma}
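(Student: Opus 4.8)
The plan is to use the standard "lifting" strategy: we start from a set of $|\arcsets(\vertices,\ppsvalone)|$ affinely independent extreme points of $\fvpolyy{\vertices,\ppsvalone}$ lying on the facet~\eqref{eq:pfaceti}, map them into $\reals^{\arcsets(\vertices',\ppsvdashalone)}$ in the obvious way (pad with zeros on all the new coordinates), and then adjoin enough further acyclic digraphs on $\vertices'$ satisfying~\eqref{eq:p1faceti} with equality so that we reach $|\arcsets(\vertices',\ppsvdashalone)|$ affinely independent points in total. Since $\fvpolyy{\vertices',\ppsvdashalone}$ is full-dimensional, its facets have dimension $|\arcsets(\vertices',\ppsvdashalone)|-1$, so this suffices. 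First I would check that~\eqref{eq:p1faceti} is valid: given an acyclic digraph $D$ on $\vertices'$, delete $\vari'$ and, for every $\vari\in\vertices$ whose parent set $\varsubsetj'$ contained $\vari'$, replace it by $\varsubsetj'\setminus\{\vari'\}$; the resulting digraph on $\vertices$ is still acyclic, its family-variable vector satisfies~\eqref{eq:pfaceti}, and the left-hand side of~\eqref{eq:p1faceti} evaluated at $D$ equals the left-hand side of~\eqref{eq:pfaceti} evaluated at the restricted digraph (because the coefficient of $x_{\vari\leftarrow\varsubsetj}$ and of $x_{\vari\leftarrow\varsubsetj\cup\{\vari'\}}$ coincide). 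This also shows the restriction map sends tight points to tight points, so the $t := |\arcsets(\vertices,\ppsvalone)|$ embedded points all lie on~\eqref{eq:p1faceti}, and they remain affinely independent because the embedding is linear and injective.

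Now I need $|\arcsets(\vertices',\ppsvdashalone)| - t$ extra affinely independent acyclic digraphs on $\vertices'$, all tight for~\eqref{eq:p1faceti}, whose contributions are "new" directions not in the span of the embedded block. The new family variables come in three groups: (a) $x_{\vari'\leftarrow\varsubsetj}$ for $\varsubsetj\subseteq\vertices$, $\varsubsetj\neq\emptyset$ (node $\vari'$ with a nonempty parent set drawn from the old nodes); (b) $x_{\vari\leftarrow\varsubsetj\cup\{\vari'\}}$ for $\vari\in\vertices$ and $\varsubsetj\subseteq\vertices\setminus\{\vari\}$, $\varsubsetj\neq\emptyset$ (old nodes whose parent set now includes $\vari'$ together with at least one old node); and (c), implicitly already paired in~\eqref{eq:p1faceti}, the singleton case $x_{\vari\leftarrow\{\vari'\}}$. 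For group (a): fix any extreme point $x^\star$ of $\fvpolyy{\vertices,\ppsvalone}$ tight for~\eqref{eq:pfaceti} (it exists by Proposition~\ref{prop:tighthness}), embed it, and add the arc $\vari'\leftarrow\varsubsetj$ for each $\varsubsetj\subseteq\vertices$; since $\vari'$ has no children in this digraph, acyclicity is preserved and the left-hand side of~\eqref{eq:p1faceti} is unchanged, so each such point is tight, and these points differ from the embedded block exactly in one group-(a) coordinate. For groups (b) and (c): here I use that $\coeffs_{\vari\leftarrow\varsubsetj}\le\coeffs_{\vari\leftarrow\varsubsetj\cup\{\vari'\}}$ is forced to be an \emph{equality} by~\eqref{eq:p1faceti}'s structure (the two coordinates share the coefficient $\alpha_{\vari\leftarrow\varsubsetj}$), so starting from an embedded tight point in which $\vari$ has parent set $\varsubsetj$, I can instead give $\vari$ the parent set $\varsubsetj\cup\{\vari'\}$ — provided I can make this acyclic, which I arrange by letting $\vari'$ be a source (it has already been given the empty parent set in the embedded block). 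That keeps the left-hand side of~\eqref{eq:p1faceti} equal to its old value $\beta$, so the point stays tight, and it introduces exactly one new group-(b)/(c) coordinate. Doing this once for every admissible pair $(\vari,\varsubsetj)$, together with the group-(a) points, produces the required number of additional points.

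The main obstacle — and the step I would spend the most care on — is the affine-independence bookkeeping: one has to order the constructed points so that each new point is the first to "activate" a previously-zero coordinate, i.e., argue that the matrix whose rows are the $|\arcsets(\vertices',\ppsvdashalone)|$ points (padded with a $1$ for the affine part) is nonsingular. The clean way is to write this matrix in block-triangular form: the embedded block sits in the old coordinates and is nonsingular by hypothesis (the original facet had $t$ affinely independent tight points), the group-(a) rows form an identity block in the group-(a) coordinates after subtracting off the fixed embedded point $x^\star$, and the group-(b)/(c) rows likewise form an identity block in their coordinates after subtracting the corresponding embedded tight point. One must double-check that the subtractions used to triangularize are themselves affine combinations available within the constructed set (they are, since every "base" point used is one of the embedded tight points already in the list), and that no circularity arises. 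Finally, to see that~\eqref{eq:p1faceti} is not a variable lower bound, note its right-hand side $\beta$ is positive (by Theorem~\ref{thm:monofacets} applied to the original facet, $\beta=\pi_0>0$) and it has at least two nonzero coefficients, so it cannot coincide with any $x_{\vari\leftarrow\varsubsetj}\ge 0$.
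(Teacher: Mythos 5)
Your construction is essentially identical to the paper's: the paper forms $S_1\cup S_2\cup S_3$, where $S_1$ is the zero-padded embedding of the original facet's affinely independent tight points, $S_2$ attaches each nonempty parent set $\varsubsetj\subseteq\vertices$ to the new node $\vari'$ on top of an arbitrary member of $S_1$ (your group (a)), and $S_3$ takes a member of $S_1$ in which $\vari$ has parent set $\varsubsetj$ and replaces that parent set by $\varsubsetj\cup\{\vari'\}$ (your groups (b) and (c)); affine independence is then argued by precisely the triangular observation you describe, namely that each coordinate involving $\vari'$ is activated in exactly one point of $S_2\cup S_3$, so those coefficients vanish first and the rest reduces to the affine independence of $S_1$. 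Your explicit validity check for \eqref{eq:p1faceti} and your closing argument that the lifted inequality is not a variable lower bound are details the paper leaves implicit, and both are fine.

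The one step that needs more justification---in your write-up and equally in the paper's---is the group-(c) coordinate $x_{\vari\leftarrow\{\vari'\}}$. To activate it you must start from a tight point of \eqref{eq:pfaceti} in which $\vari$ has the \emph{empty} parent set, and neither Proposition~\ref{prop:tighthness} nor the paper's remark that otherwise \eqref{eq:pfaceti} would be a lower bound covers $\varsubsetj=\emptyset$, since $x_{\vari\leftarrow\emptyset}$ is not a coordinate of the polytope. The existence of such a tight point can genuinely fail: for the modified convexity facet $\sum_{\varsubsetj\neq\emptyset}x_{\vari\leftarrow\varsubsetj}\leq 1$ every tight $0$--$1$ point gives $\vari$ a nonempty parent set, and correspondingly the lifted inequality (which omits $x_{\vari\leftarrow\{\vari'\}}$) is the sum of the modified convexity facet on $\vertices'$ and the lower-bound facet $-x_{\vari\leftarrow\{\vari'\}}\leq 0$, so its face is contained in the intersection of two distinct facets and cannot itself be a facet. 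The lemma as stated therefore implicitly requires the additional hypothesis that for each $\vari\in\vertices$ some tight point of \eqref{eq:pfaceti} leaves $\vari$ parentless; this holds for the $\palim$-cluster inequalities to which the lemma is actually applied (an acyclic tournament with $\vari$ first is tight), but it is not a consequence of ``not a lower bound''. This is a defect you inherited from the paper rather than one you introduced, but your proof does not close it either.
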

\begin{proof}
  Since \eqref{eq:pfaceti} is  facet-defining, there is a set $S_0 \subseteq \reals^{\arcsets(\vertices,\ppsvalone)}$ of affinely
  independent acyclic digraphs, with node set $\vertices$, lying on
  its hyperplane. For each acyclic digraph in $S_0$, create an acyclic
  digraph with node set $\vertices \cup \{\vari'\}$ by adding
  $\vari'$ as an isolated node.  Let $S_{1} \subseteq \reals^{\arcsets(\vertices',\ppsvdashalone)}$ be the set of acyclic
  digraphs so created. Note that all members of $S_{1}$ lie on the
  hyperplane for \eqref{eq:p1faceti}. Each vector in $S_1$ corresponds
  to a vector in $S_0$ with a zero vector of length
  $|\arcsets(\vertices',\ppsvdashalone)|-|\arcsets(\vertices,\ppsvalone)|$ concatenated. Since $S_0$ is
  an affinely independent set, so is $S_1$. 
  
  For each non-empty subset $\varsubsetj \subseteq \vertices$, construct an acyclic digraph
  by adding  $e^{\vari' \leftarrow \varsubsetj}$ to an arbitrary member of
  $S_1$. Clearly the end result is an acyclic digraph lying on the hyperplane for
  (\ref{eq:p1faceti}). Let $S_2$ be the set of all such acyclic digraphs.

  For each $\varsubsetj \subseteq \vertices$, $\vari \in
  \vertices$, construct an acyclic digraph by finding an acyclic digraph $x \in S_{1}$ such that
  $x_{\vari \leftarrow \varsubsetj} = 1$ and adding an arrow from $\vari'$ to
  $\vari$.  Note that it is always possible to find an acyclic digraph with
  $x_{\vari \leftarrow \varsubsetj} = 1$. If this were not the case, then
  \eqref{eq:pfaceti} would be a lower bound on
  $x_{\vari \leftarrow \varsubsetj}$. It is not difficult to see that any such acyclic digraph
  lies on the hyperplane defined by \eqref{eq:p1faceti}. Let $S_3$ be the set of
  all such acyclic digraphs.

  Let $S= S_{1} \cup S_{2} \cup S_{3}$.  $S_{2}$ and $S_{3}$ have
  exactly one acyclic digraph for each component $x_{\vari \leftarrow \varsubsetj}$ involving
  the node $\vari'$ (either $\vari=\vari'$ or $\vari' \in
  \varsubsetj$). $S_{1}$ has an acyclic digraph for each component
  $x_{\vari \leftarrow \varsubsetj}$ not involving $\vari'$. So $|S| = \mathrm{dim} \;
  \fvpolyy{\arcsets(\vertices',\ppsvdashalone)} = |\arcsets(\vertices',\ppsvdashalone)|$. It remains to
  be established that the $S$ is a set of affinely independent vectors.

  Suppose $\sum_{x^{i} \in S}\alpha_{i}x^{i}=0$ and $\sum_{x^{i} \in
    S}\alpha_{i}=0$. Each component $x_{\vari \leftarrow \varsubsetj}$ involving
  $\vari'$ is set to 1 in exactly one acyclic digraph in $S_{2} \cup S_{3}$. Thus
$\alpha_{i}=0$ for $x^{i} \in S_{2} \cup S_{3}$. So
  $\sum_{x^{i} \in S_{1}}\alpha_{i}x^{i}=0$ and $\sum_{x^{i} \in
    S_{1}}\alpha_{i}=0$. The result then follows from the affine
  independence of the set $S_1$.
\end{proof}

\begin{theorem}
\label{thm:extend}
 Recall that $\ppsv{\vari} := 2^{\vertices \setminus \{\vari\}}$
  for all $\vari \in \vertices$.
  Let
  \begin{equation}
    \label{eq:pfacet}
    \sum_{\vari\in\vertices} \sum_{\varsubsetj \in \ppsv{\vari},
      \varsubsetj \neq \emptyset} 
    \alpha_{\vari \leftarrow \varsubsetj} x_{\vari \leftarrow \varsubsetj} \leq \beta    
  \end{equation}
  be a facet-defining inequality of the family variable polytope
  $\fvpolyy{\vertices,\ppsvalone}$ which is not a lower bound on a 
  variable.   Let $\vertices'$ be a node set such that $\vertices
  \subseteq \vertices'$. 
Then
  \begin{equation}
    \label{eq:p1facet}
    \sum_{\vari\in\vertices} \sum_{\varsubsetj \in \ppsv{\vari},
      \varsubsetj \neq \emptyset} 
    \alpha_{\vari \leftarrow \varsubsetj} \left(
      \sum_{\varsubsetj':\varsubsetj \subseteq \varsubsetj' \subseteq
        \vertices' \setminus \{\vari\}} x_{\vari
        \leftarrow \varsubsetj'}
    \right) \leq \beta    
  \end{equation}
  is facet-defining for $\fvpolyy{\vertices',\ppsvdashalone}$ and is not a
  lower bound on a variable.
\end{theorem}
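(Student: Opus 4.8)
The natural route is to derive Theorem~\ref{thm:extend} from the single-node lifting of Lemma~\ref{lem:extend} by adjoining the nodes of $\vertices' \setminus \vertices$ one at a time, that is, by induction on $r := |\vertices' \setminus \vertices|$. The base case $r = 0$ is immediate, since then $\vertices' = \vertices$ and \eqref{eq:p1facet} reduces to \eqref{eq:pfacet}. For the inductive step, pick $\vari' \in \vertices' \setminus \vertices$, set $\vertices'' := \vertices' \setminus \{\vari'\}$ (so $\vertices \subseteq \vertices''$ and $|\vertices'' \setminus \vertices| = r-1$), and let $(\star)$ be the inequality obtained by lifting \eqref{eq:pfacet} from $\vertices$ to $\vertices''$. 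By the induction hypothesis $(\star)$ is facet-defining for the family variable polytope over $\vertices''$ with all parent sets permitted and --- crucially --- is not a lower bound on a variable, so it satisfies the hypotheses of Lemma~\ref{lem:extend} with $\vertices''$ playing the role of $\vertices$ and $\vari'$ the new node. Hence the single-node lifting of $(\star)$ is facet-defining for $\fvpolyy{\vertices', \ppsvdashalone}$ and is again not a lower bound on a variable. The induction then closes once this lifting is identified with \eqref{eq:p1facet} for $\vertices'$.

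That identification is the only part that needs care, and it is routine coefficient bookkeeping. The replacement of Lemma~\ref{lem:extend} sends each term $\delta\, x_{\vari \leftarrow \varsubsetj'}$ to $\delta\,(x_{\vari \leftarrow \varsubsetj'} + x_{\vari \leftarrow \varsubsetj' \cup \{\vari'\}})$; iterating this over all the new nodes sends each original term $\alpha_{\vari \leftarrow \varsubsetj}\, x_{\vari \leftarrow \varsubsetj}$ of \eqref{eq:pfacet} to $\alpha_{\vari \leftarrow \varsubsetj}$ times the sum of $x_{\vari \leftarrow \varsubsetj'}$ over the parent sets $\varsubsetj'$ obtained from $\varsubsetj$ by adjoining new nodes, which is the inner sum of \eqref{eq:p1facet}. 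The cleanest way to verify this is to show by induction that after processing any subset $W \subseteq \vertices' \setminus \vertices$ of the new nodes, the coefficient of $x_{\vari \leftarrow T}$ (for $\vari \in \vertices$) equals $\alpha_{\vari \leftarrow T \cap \vertices}$ --- with $\alpha_{\vari \leftarrow \emptyset}$ read as $0$ --- while every family variable whose child lies in $W$ has coefficient $0$; the right-hand side stays equal to $\beta$ at every step. Because this coefficient depends only on $T \cap \vertices$, the result does not depend on the order in which the new nodes are processed, which is what makes the iteration well defined.

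There is no genuine conceptual obstacle --- Lemma~\ref{lem:extend} does all the work --- but two points must be handled carefully. First, the property ``not a lower bound on a variable'' has to be threaded through the induction; this is exactly why Lemma~\ref{lem:extend} is stated with its concluding sentence, asserting that the lifted inequality retains this property. Second, the argument relies on all parent sets being permitted throughout, so that after each single-node lifting the intermediate instance is again an all-parent-sets instance (over the enlarged node set, with the newly introduced family variables simply receiving coefficient $0$), and Lemma~\ref{lem:extend} can be reapplied.
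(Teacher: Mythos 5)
Your proposal is correct and is exactly the paper's argument: the paper proves Theorem~\ref{thm:extend} by ``repeated application of Lemma~\ref{lem:extend}'', and you have simply made the induction on $|\vertices' \setminus \vertices|$ and the coefficient bookkeeping explicit (indeed, more carefully than the paper, including the need to carry the ``not a lower bound'' property through each step). No further comment is needed.
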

\begin{proof}
  Repeated application of Lemma~\ref{lem:extend}.
\end{proof}

Using Theorem~\ref{thm:extend},
Theorem~\ref{thm:specialkcluster} can now be `lifted' to establish
that all $k$-cluster inequalities are facet-defining.

\begin{theorem}
  \label{thm:kcluster}
Recall that $\ppsv{\vari} := 2^{\vertices \setminus \{\vari\}}$ for all $\vari \in \vertices$. For any $\cluster \subseteq \vertices$ and any positive integer $\palim < |\cluster|$,
the  valid inequality 
\begin{equation}
  \label{eq:kcluster}
  \sum_{\vari\in\cluster} \ \ 
  \sum_{ \varsubsetj \subseteq \vertices \setminus \{\vari\} : |\varsubsetj \cap \cluster| \geq \palim} 
  x_{\vari \leftarrow \varsubsetj} \leq |\cluster| - \palim
\end{equation}
is facet-defining for the family variable polytope $\fvpolyy{\vertices,\ppsvalone}$.
\end{theorem}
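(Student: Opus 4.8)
The plan is to obtain this as a lifting of the special case $\cluster = \vertices$, which is exactly Theorem~\ref{thm:specialkcluster}. If $\cluster = \vertices$ there is nothing new to prove, so assume $\cluster \subsetneq \vertices$. First I would apply Theorem~\ref{thm:specialkcluster} with the node set taken to be $\cluster$ itself and the same integer $\palim$ (legitimate, since $\palim < |\cluster|$), obtaining that
\[
  \sum_{\vari\in\cluster} \ \ \sum_{\varsubsetj \subseteq \cluster \setminus \{\vari\},\, |\varsubsetj| \geq \palim} x_{\vari \leftarrow \varsubsetj} \;\leq\; |\cluster| - \palim
\]
is a facet-defining inequality of the family variable polytope $\fvpolyy{\cluster,\ppsvalone}$ over the node set $\cluster$. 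Its right-hand side is $|\cluster| - \palim \geq 1 > 0$ and all its coefficients are nonnegative, so it is not a lower bound on a variable, which is precisely the hypothesis needed in order to lift it.

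Next I would apply Theorem~\ref{thm:extend} (equivalently, iterate Lemma~\ref{lem:extend}) with base node set $\cluster$, enlarged node set $\vertices \supseteq \cluster$, coefficients $\alpha_{\vari\leftarrow\varsubsetj} = \indic{|\varsubsetj| \geq \palim}$ for $\emptyset \neq \varsubsetj \subseteq \cluster\setminus\{\vari\}$, and $\beta = |\cluster| - \palim$. This immediately yields that the lifted inequality is facet-defining for $\fvpolyy{\vertices,\ppsvalone}$. It then remains only to identify the lifted inequality with \eqref{eq:kcluster}. For this I would compute the coefficient of each family variable $x_{\vari\leftarrow\varsubsetj}$ (with $\vari \in \vertices$, $\varsubsetj \subseteq \vertices\setminus\{\vari\}$) in the lifted inequality: for $\vari \notin \cluster$ the coefficient is $0$, while for $\vari \in \cluster$ it equals $\alpha_{\vari\leftarrow\varsubsetj\cap\cluster}$ (with the convention $\alpha_{\vari\leftarrow\emptyset} = 0$), i.e.\ it is $1$ when $|\varsubsetj \cap \cluster| \geq \palim$ and $0$ otherwise; the right-hand side stays $|\cluster| - \palim$. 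This is precisely \eqref{eq:kcluster}.

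The only real work is this last bookkeeping step, and I would carry it out by a short induction on $|\vertices \setminus \cluster|$, adding the nodes of $\vertices \setminus \cluster$ one at a time. The point is that the lifting rule of Lemma~\ref{lem:extend} replaces each term $x_{\vari\leftarrow\varsubsetj}$ by $x_{\vari\leftarrow\varsubsetj} + x_{\vari\leftarrow\varsubsetj\cup\{\vari'\}}$ for the newly added node $\vari'$, and since $\vari' \notin \cluster$ we have $|\varsubsetj \cap \cluster| = |(\varsubsetj \cup \{\vari'\}) \cap \cluster|$; hence an inequality of the shape $\sum_{\vari\in\cluster}\sum_{\varsubsetj:\,|\varsubsetj\cap\cluster|\geq\palim} x_{\vari\leftarrow\varsubsetj} \leq |\cluster| - \palim$ over the current node set is carried to an inequality of the same shape over the enlarged node set, no coefficient ever exceeding $1$; the base case is Theorem~\ref{thm:specialkcluster}. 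The mild subtlety I expect is just keeping careful track of which parent sets acquire coefficient $1$ under iterated lifting (in particular that parent sets reaching outside $\cluster$ do not pick up inflated coefficients), but there is no obstacle here comparable to the affine-independence construction already carried out for Theorem~\ref{thm:specialkcluster}.
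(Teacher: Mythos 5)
Your proposal is correct and follows exactly the paper's own route: apply Theorem~\ref{thm:specialkcluster} to the node set $\cluster$ and then lift via Theorem~\ref{thm:extend} (iterated Lemma~\ref{lem:extend}) to the full node set $\vertices$. The extra bookkeeping you carry out---verifying the non-lower-bound hypothesis and that the lifted coefficients are $\indic{|\varsubsetj\cap\cluster|\geq\palim}$ because added nodes lie outside $\cluster$---is a correct and slightly more explicit rendering of what the paper leaves implicit.
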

\begin{proof}
  By Theorem~\ref{thm:specialkcluster}, \eqref{eq:kcluster} is 
  facet-defining for the family
  variable polytope for node set $\cluster$. By applying
  Theorem~\ref{thm:extend} it follows that it also facet-defining for the
family
  variable polytope for any node set $\vertices \supseteq \cluster$. 
\end{proof}

\subsection{Facets when Parent Sets are Restricted}
\label{sec:restricting}

The results in the preceding sections have all been for the special
case $\ppsvalone$ when all possible parent sets are allowed for each
node. If some parent sets are ruled out, for example by an upper
bound $\palim$ on parent set cardinality, then some $\palim$-cluster
inequalities and some modified convexity constraints may not be facets.

To see this, suppose we had $\vertices = \{a,b,c\}$. If all parent sets
are allowed, then Theorem~\ref{thm:kcluster} shows that this 2-cluster
inequality for $C=\{a,b,c\}$,
\begin{equation}
  \label{eq:twoabc}
  x_{a \leftarrow \{b,c\}} + x_{b \leftarrow \{a,c\}} + x_{c
    \leftarrow \{a,b\}} \leq 1,
\end{equation}
is facet-defining. However, if $\{a,b\}$ is not allowed as a parent set for $c$, then the inequality
becomes
\begin{equation}
  x_{a \leftarrow \{b,c\}} + x_{b \leftarrow \{a,c\}}  \leq 1,
\end{equation}
which is not facet-defining since it is dominated by the 1-cluster inequality
for $C=\{a,b\}$,
\begin{equation}
  \label{eq:oneab}
  x_{a \leftarrow \{b\}} + x_{a \leftarrow \{b,c\}} + x_{b \leftarrow
    \{a\}} + x_{b \leftarrow \{a,c\}}  \leq 1.
\end{equation}
As another example, suppose $\{c\}$ were removed from $\pps{a}$. Then
condition \ref{ub} of Proposition~\ref{prop:mc}  is no longer met,
and the modified convexity constraint for $a$ becomes
\begin{equation}
  \label{eq:mcc}
  x_{a \leftarrow \{b\}} + x_{a \leftarrow \{b,c\}} \leq 1,
\end{equation}
which cannot be facet-defining since it is dominated by the inequality \eqref{eq:oneab}.

For any $\ppsalone$ we have that the polytope
$\fvpolyy{\vertices,\ppsalone}$ is a \emph{face} of the
all-parent-sets-allowed polytope $\fvpolyy{\vertices,\ppsvalone}$
defined by the  valid inequality
\begin{equation}
  \label{eq:vface}
  \sum_{\vari \in \vertices}\sum_{\varsubsetj \in \ppsv{\vari}
    \setminus \pps{\vari}} 
  x_{\vari \leftarrow \varsubsetj} \geq 0.
\end{equation}

The issue then is whether it is possible to determine when a facet of
$\fvpolyy{\vertices,\ppsvalone}$ is also a facet of this face. The
issue of determining the facets of a face is of general interest. As
\shortciteA{Boyd2009} note
``As it is often technically much simpler to obtain results about
facets for a full dimensional polyhedron than one of lower
dimension, it would be nice to \dots 
know under what conditions an inequality inducing a facet of $P$
also induces a facet of a face $F$ of $P$ .''  They go on to
state that ``\dots we know of no reasonable general result of this type''.

However, in the case of the the family variable polytope, there is a
strong result which shows that many facets of a family variable
polytope $\fvpolyy{\vertices,\ppsalone}$ induce facets of a
lower-dimensional family variable polytope
$\fvpolyy{\vertices,\breve{\ppsalone}}$ where $\breve{\ppsalone}(\vari)
\subseteq \ppsalone(\vari)$ for all $\vari \in \vertices$. In
particular, this result shows that some facets of the all-parent-sets-allowed
polytope $\fvpolyy{\vertices,\ppsvalone}$ are also facets of the
polytope that results by limiting the cardinality of parent sets. To
establish this result we first prove a lemma.

\begin{lemma}
\label{lem:supermono}
Let $x \in \fvpolyy{\vertices,\ppsalone}$. Let $\vari \in \vertices$
and let $\varsubsetj, \varsubsetj' \in \pps{\vari}$ with
$\varsubsetj \subsetneq \varsubsetj'$, $\varsubsetj \neq \emptyset$. Define
$\breve{x}$ as
follows: $\breve{x}_{\vari \leftarrow
  \varsubsetj} = x_{\vari \leftarrow
  \varsubsetj} + x_{\vari \leftarrow
  \varsubsetj'}$, $\breve{x}_{\vari \leftarrow
  \varsubsetj'} =0$, and $x$ and
$\breve{x}$ are equal in all other
components. Then $\breve{x}$ is also in the
family-variable polytope $\fvpolyy{\vertices,\ppsalone}$.
\end{lemma}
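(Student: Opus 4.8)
The plan is to mimic the convex-combination technique already used in the proofs of Lemmas~\ref{lem:zerocomp} and~\ref{lem:fraccomp}. First I would write $x = \sum_{k} \alpha_{k} x^{k}$ as a convex combination of extreme points of $\fvpolyy{\vertices,\ppsalone}$, where each $x^{k}$ is the family-variable encoding of a permitted acyclic digraph and $\alpha_{k} \geq 0$, $\sum_{k}\alpha_{k}=1$. The key structural observation is that shrinking a single parent set preserves both acyclicity and permissibility: if $x^{k}$ encodes an acyclic digraph in which $\vari$ has parent set $\varsubsetj'$, then the digraph obtained by replacing $\varsubsetj'$ with $\varsubsetj$ (recall $\varsubsetj \subsetneq \varsubsetj'$) is a subgraph of it, hence still acyclic, and it is permitted because $\varsubsetj \in \pps{\vari}$ by hypothesis while the parent sets of all other nodes are untouched.

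Next I would define, for each $k$, a modified vector $y^{k}$: if $x^{k}_{\vari \leftarrow \varsubsetj'} = 1$ set $y^{k} := x^{k} - e^{\vari \leftarrow \varsubsetj'} + e^{\vari \leftarrow \varsubsetj}$, and otherwise set $y^{k} := x^{k}$. By the observation above, each $y^{k}$ encodes a permitted acyclic digraph, so $y^{k} \in \fvpolyy{\vertices,\ppsalone}$. It then remains to verify that $\breve{x} = \sum_{k} \alpha_{k} y^{k}$, which I would do coordinatewise. Every coordinate other than $x_{\vari \leftarrow \varsubsetj}$ and $x_{\vari \leftarrow \varsubsetj'}$ satisfies $y^{k}_{\cdot} = x^{k}_{\cdot}$ for all $k$ (only $\vari$'s parent set is ever altered, and only from $\varsubsetj'$ to $\varsubsetj$), so its sum is $x_{\cdot} = \breve{x}_{\cdot}$. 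For the $\varsubsetj'$-coordinate, $y^{k}_{\vari \leftarrow \varsubsetj'} = 0$ for every $k$ by construction, matching $\breve{x}_{\vari \leftarrow \varsubsetj'} = 0$. For the $\varsubsetj$-coordinate I would use the fact that in any digraph node $\vari$ has exactly one parent set, so $x^{k}_{\vari \leftarrow \varsubsetj}$ and $x^{k}_{\vari \leftarrow \varsubsetj'}$ are never both $1$; hence $\sum_{k}\alpha_{k}y^{k}_{\vari\leftarrow\varsubsetj}$ equals the total weight of those $x^{k}$ in which $\vari$'s parent set is either $\varsubsetj$ or $\varsubsetj'$, namely $x_{\vari \leftarrow \varsubsetj} + x_{\vari \leftarrow \varsubsetj'} = \breve{x}_{\vari \leftarrow \varsubsetj}$. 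This exhibits $\breve{x}$ as a convex combination of points of $\fvpolyy{\vertices,\ppsalone}$, so $\breve{x} \in \fvpolyy{\vertices,\ppsalone}$.

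I do not anticipate any real obstacle: the argument is routine. The only step requiring a little care is the bookkeeping for the $\varsubsetj$-coordinate, where one must invoke the convexity constraint (each node has a unique parent set in a digraph) to be sure the masses $x_{\vari \leftarrow \varsubsetj}$ and $x_{\vari \leftarrow \varsubsetj'}$ are carried by disjoint families of extreme points and therefore add without double counting. The hypothesis $\varsubsetj \neq \emptyset$ is used only to ensure that $x_{\vari \leftarrow \varsubsetj}$ is a genuine coordinate of the polytope (empty parent sets having been dropped from the index set $\arcsets$).
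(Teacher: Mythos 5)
Your proposal is correct and follows essentially the same route as the paper's own proof: decompose $x$ into a convex combination of extreme points, replace the parent set $\varsubsetj'$ of $\vari$ by $\varsubsetj$ in each extreme point where it occurs (which preserves acyclicity since the result is a subgraph, and permissibility since $\varsubsetj\in\pps{\vari}$), and observe that the modified extreme points recombine to give $\breve{x}$. Your coordinatewise verification, in particular the remark that $x^{k}_{\vari\leftarrow\varsubsetj}$ and $x^{k}_{\vari\leftarrow\varsubsetj'}$ are never both $1$, is if anything slightly more explicit than the paper's.
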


\begin{proof}
  Since $x \in \fvpolyy{\vertices,\ppsalone}$, $x = \sum_{k=1}^{K}
  \alpha_{k}x^{k}$ where each $x^k$ is an extreme point of
  $\fvpolyy{\vertices,\ppsalone}$ corresponding to an acyclic
  digraph. For each $x^k$ define $\breve{x}^{k}$ as follows:
  $\breve{x}^{k}_{\vari \leftarrow \varsubsetj} = x^{k}_{\vari
    \leftarrow \varsubsetj} + x^{k}_{\vari \leftarrow \varsubsetj'}$,
  $\breve{x}^{k}_{\vari \leftarrow \varsubsetj'} =0$ and $x^k$ and
  $\breve{x}_{k}$ are equal in all other components. It is clear that
  each $\breve{x}^{k}$ corresponds to an acyclic digraph which differs
  from $x^{k}$ iff $\varsubsetj'$ is the parent set for $\vari$ in
  $x^{k}$, in which case $\varsubsetj$ becomes the parent set for
  $\vari$ in $\breve{x}^{k}$. The digraph remains acyclic since
  $\varsubsetj \subsetneq \varsubsetj'$. It is also clear that  $\breve{x} = \sum_{k=1}^{K}
  \alpha_{k}\breve{x}^{k}$ and so $\breve{x} \in  \fvpolyy{\vertices,\ppsalone}$.
\end{proof}

The main result of this section now follows. Our proof  makes use of the elementary but useful
fact that the number of linearly independent rows in a matrix (row
rank) and the number of linearly independent columns in a matrix
(column rank) are equal.

\begin{theorem}
\label{thm:facetrestrict}
  Let $\pi x \leq \pi_{0}$ define a facet for the family-variable
  polytope $\fvpolyy{\vertices,\ppsalone}$. Suppose that $\pi_{\vari
    \leftarrow \varsubsetj} = \pi_{\vari \leftarrow \varsubsetj'}$ for
  some $\vari \in \vertices$, $\varsubsetj, \varsubsetj' \in
  \pps{\vari}$ with $\varsubsetj \subsetneq \varsubsetj'$,
  $\varsubsetj \neq \emptyset$. Let
  $\breve{\pi}$ be $\pi$ with the component $\pi_{\vari \leftarrow
    \varsubsetj'}$ removed. Let $\breve{\ppsalone}$ be identical to
  $\ppsalone$ except that $\varsubsetj'$ is removed from
  $\pps{\vari}$. Then  $\breve{\pi} x \leq \pi_{0}$ defines a facet
  for the polytope $\fvpolyy{\vertices,\breve{\ppsalone}}$.
\end{theorem}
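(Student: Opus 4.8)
Here is how I would approach the proof.

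The plan is the following. First, if $\pi x \le \pi_0$ happens to be a lower bound $x_{f}\ge 0$ on some family variable, then that variable cannot be $x_{\vari\leftarrow\varsubsetj'}$ (otherwise the hypothesis $\pi_{\vari\leftarrow\varsubsetj}=\pi_{\vari\leftarrow\varsubsetj'}$ would force these two coefficients to be simultaneously $-1$ and $0$, since $\varsubsetj\neq\varsubsetj'$), so after deleting the coordinate $\vari\leftarrow\varsubsetj'$ the inequality $\breve{\pi}x\le\pi_0$ is again $x_{f}\ge 0$, which is facet-defining for $\fvpolyy{\vertices,\breve{\ppsalone}}$ by Proposition~\ref{prop:mc}(\ref{lb}). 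So I would assume from now on that $\pi x\le\pi_0$ is not a lower bound; by Theorem~\ref{thm:monofacets} we then have $\pi\ge 0$ and $\pi_0>0$. Write $n=\card{\arcsets(\vertices,\ppsalone)}$; since $\varsubsetj'$ is nonempty (as $\varsubsetj\subsetneq\varsubsetj'$ with $\varsubsetj\neq\emptyset$), removing it drops exactly one family, so $\card{\arcsets(\vertices,\breve{\ppsalone})}=n-1$. Identify $\reals^{\arcsets(\vertices,\breve{\ppsalone})}$ with the coordinate subspace of $\reals^{\arcsets(\vertices,\ppsalone)}$ obtained by dropping the coordinate $x_{\vari\leftarrow\varsubsetj'}$. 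The goal is to exhibit $n-1$ affinely independent points of $\fvpolyy{\vertices,\breve{\ppsalone}}$ lying on $\breve{\pi}x=\pi_0$. Validity of $\breve{\pi}x\le\pi_0$ for $\fvpolyy{\vertices,\breve{\ppsalone}}$ is immediate: any extreme point of $\fvpolyy{\vertices,\breve{\ppsalone}}$, embedded into $\reals^{\arcsets(\vertices,\ppsalone)}$ with $x_{\vari\leftarrow\varsubsetj'}=0$, is an extreme point of $\fvpolyy{\vertices,\ppsalone}$ (because $\breve{\ppsalone}(\vari)\subseteq\ppsalone(\vari)$ for all $\vari$), so it satisfies $\pi x\le\pi_0$, and on such points $\pi x=\breve{\pi}x$.

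Next I would invoke the facet assumption for the larger polytope: choose $n$ affinely independent acyclic digraphs $x^1,\dots,x^n$ in $\fvpolyy{\vertices,\ppsalone}$ with $\pi x^k=\pi_0$, and convert each into a point $\breve{x}^k$ of $\fvpolyy{\vertices,\breve{\ppsalone}}$. If $x^k_{\vari\leftarrow\varsubsetj'}=0$, simply take $\breve{x}^k=x^k$ (now read in $\reals^{\arcsets(\vertices,\breve{\ppsalone})}$); then $\breve{\pi}\breve{x}^k=\pi x^k=\pi_0$. If $x^k_{\vari\leftarrow\varsubsetj'}=1$, so $\varsubsetj'$ is the parent set of $\vari$ in $x^k$, replace it by $\varsubsetj$; since $\varsubsetj\subsetneq\varsubsetj'$ this only deletes arcs, so the result is still an acyclic digraph that avoids $\varsubsetj'$, hence an extreme point $\breve{x}^k$ of $\fvpolyy{\vertices,\breve{\ppsalone}}$ (this is exactly Lemma~\ref{lem:supermono} applied to an extreme point). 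Its value is $\pi x^k-\pi_{\vari\leftarrow\varsubsetj'}+\pi_{\vari\leftarrow\varsubsetj}=\pi_0$ by the hypothesis $\pi_{\vari\leftarrow\varsubsetj}=\pi_{\vari\leftarrow\varsubsetj'}$, and since the $\varsubsetj'$ coordinate of $\breve{x}^k$ is $0$ we again get $\breve{\pi}\breve{x}^k=\pi_0$. Thus all $n$ points $\breve{x}^1,\dots,\breve{x}^n$ lie in $\fvpolyy{\vertices,\breve{\ppsalone}}$ on the target hyperplane.

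The main obstacle — and I expect the only non-routine point — is that the map $x^k\mapsto\breve{x}^k$ need not be injective: a digraph using $\varsubsetj'$ and an otherwise-identical digraph using $\varsubsetj$ collapse to the same $\breve{x}^k$, so the $\breve{x}^k$ cannot be shown affinely independent directly, and indeed not all $n$ of them can be, since $\fvpolyy{\vertices,\breve{\ppsalone}}$ has dimension only $n-1$. I would resolve this using the fact that row rank equals column rank. Let $M$ be the $n\times(n+1)$ matrix with rows $(x^k,1)$; affine independence of the $x^k$ means $M$ has full row rank $n$. The elementary column operation of adding the $x_{\vari\leftarrow\varsubsetj'}$-column to the $x_{\vari\leftarrow\varsubsetj}$-column leaves the rank unchanged and, row by row, transforms $(x^k,1)$ into $(\breve{x}^k,\;x^k_{\vari\leftarrow\varsubsetj'},\;1)$: rows with $x^k_{\vari\leftarrow\varsubsetj'}=0$ are unchanged, while in rows with $x^k_{\vari\leftarrow\varsubsetj'}=1$ the $\varsubsetj$-entry goes from $0$ to $1$, which is precisely the $\varsubsetj$-entry of $\breve{x}^k$. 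Deleting the $x_{\vari\leftarrow\varsubsetj'}$-column then lowers the rank by at most one, so the matrix with rows $(\breve{x}^k,1)$ has rank at least $n-1$; equivalently, $n-1=\card{\arcsets(\vertices,\breve{\ppsalone})}$ of the $\breve{x}^k$ are affinely independent.

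Finally I would conclude. These $n-1$ affinely independent points show that the face $F=\fvpolyy{\vertices,\breve{\ppsalone}}\cap\{\breve{\pi}x=\pi_0\}$ has dimension at least $n-2$; and since $\breve{\pi}\ge 0$ and $\pi_0>0$, the empty digraph (the zero vector, which is in $\fvpolyy{\vertices,\breve{\ppsalone}}$) satisfies $\breve{\pi}\cdot 0=0<\pi_0$, so $F$ is a proper face of the full-dimensional polytope $\fvpolyy{\vertices,\breve{\ppsalone}}$. A proper face has dimension at most $\dim\fvpolyy{\vertices,\breve{\ppsalone}}-1=n-2$, so $\dim F=n-2$, i.e.\ $\breve{\pi}x\le\pi_0$ is facet-defining.
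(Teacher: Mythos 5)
Your proposal is correct and follows essentially the same route as the paper's proof: you take the $n$ affinely independent points on the original facet, map each to the smaller polytope by replacing the parent set $\varsubsetj'$ with $\varsubsetj$ (the content of Lemma~\ref{lem:supermono}), and recover $n-1$ affinely independent points via the same elementary column operation and the row-rank-equals-column-rank argument. Your extra case split for lower bounds and the explicit properness check at the end are minor elaborations of steps the paper treats as immediate.
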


\begin{proof}

  Since $\pi x \leq \pi_{0}$ is facet-defining for
  $\fvpolyy{\vertices,\ppsalone}$ it is obvious by Theorem~\ref{thm:monofacets} that $\breve{\pi} x
  \leq \pi_{0}$ is at least a valid inequality for
  $\fvpolyy{\vertices,\breve{\ppsalone}}$. We now show that this valid
  inequality defines a facet by proving the existence of
  $|\arcsets(\vertices,\breve{\ppsalone})|$ affinely independent
  points lying in the facet.

  Recall that $\arcsets(\vertices,\ppsalone)$ is the set of families
  determined by vertices $\vertices$ and allowed parent sets
  $\ppsalone$. Abbreviate $|\arcsets(\vertices,\ppsalone)|$ to
  $m$ and note that $|\arcsets(\vertices,\breve{\ppsalone})| = m-1$. Since $\pi x \leq \pi_{0}$ defines a facet for the
  family-variable polytope $\fvpolyy{\vertices,\ppsalone}$, there are
  $m$ affinely independent points $x^{1}, \dots, x^{k}, \dots , x^{m}$
  lying in this facet (i.e., $\pi x^{k} = \pi_{0}$, $x^{k} \in \fvpolyy{\vertices,\ppsalone}$ for
  $k=1,\dots,m$). Since these points are affinely independent, the 
  points $(x^{1},1), \dots, (x^{k},1), \dots , (x^{m},1)$ in
  $\reals^{m+1}$ are linearly
  independent.

  Let $A_{1}$ be the $m \times (m+1)$ matrix whose rows are the
  $(x^{k},1)$. Since the rows are linearly independent, $A_1$ has rank
  $m$. Construct a new matrix $A_{2}$ by adding the column for family
  $\vari \leftarrow \varsubsetj'$ to that for $\vari \leftarrow
  \varsubsetj$. Since this is an elementary operation it does not
  change the rank of the matrix \shortcite{cohn}, and so $A_{2}$ has rank
  $m$.  Now construct an $m \times m$ matrix $A_3$ by removing the
  column for $\vari \leftarrow \varsubsetj'$ from $A_2$. Denote the
  rows of $A_3$ by $(\breve{x}^{1},1), \dots, (\breve{x}^{k},1), \dots
  , (\breve{x}^{m},1)$. From Lemma~\ref{lem:supermono} it follows that
  each $\breve{x}^{k}$ is in
  $\fvpolyy{\vertices,\breve{\ppsalone}}$. Since $\pi_{\vari
    \leftarrow \varsubsetj} = \pi_{\vari \leftarrow \varsubsetj'}$, it
  is not difficult to see that each $\breve{x}^{k}$ satisfies
  $\breve{\pi} x = \pi_{0}$. Since $A_2$ has rank $m$, there are $m$
  linearly independent columns in $A_2$ and, since $A_3$ is $A_2$ with
  one column removed, at least $m-1$ linearly independent columns in
  $A_3$. So $A_3$ has rank of at least $m-1$. But this means that
  there are $m-1$ linearly independent rows in $A_3$, so there are
  $m-1$ points among the $\breve{x}^{k}$ that are affinely
  independent. So there are $m-1$ affinely independent points in $\fvpolyy{\vertices,\breve{\ppsalone}}$
  satisfying $\breve{\pi} x = \pi_{0}$ and thus $\breve{\pi} x \leq
  \pi_{0}$ defines a facet of $\fvpolyy{\vertices,\breve{\ppsalone}}$.
\end{proof}

Given a facet-defining inequality of an all-parent-sets-allowed polytope
$\fvpolyy{\vertices,\ppsvalone}$ and a parent set cardinality limit
$\palim$, Theorem~\ref{thm:facetrestrict} states that if the
coefficients for all family variables $x_{\vari \leftarrow
  \varsubsetj'}$ with $|\varsubsetj'| > \palim$ are not strictly larger
than the coefficient for some family variable $x_{\vari \leftarrow
  \varsubsetj}$ with $\varsubsetj \subsetneq  \varsubsetj'$
so that $|\varsubsetj| \leq \palim$, then the inequality also defines a facet for
the polytope with family variables restricted by $\palim$. In
Appendix~\ref{sec:lowdim} this is confirmed for the case where
$|\vertices|=4$ and $\palim=2$.
It follows that a normal ($k=1$) cluster constraint is a
facet for \emph{any} limit $\palim$ on the size of parent sets. This
explains why normal cluster constraints are more useful to look for
than $k$-cluster constraints for $k>1$. In \gobnilp, although the user
can ask the system to look for $k$-cluster constraints up to some
defined limit $k \leq K$, the default is to only search for normal
($k=1$) cluster constraints since this has been observed to lead to
faster solving.

\section{Faces of the Family Variable Polytope defined by Orders and
  by Sinks}
\label{sec:faces}

In this section we analyse faces of the all-parent-sets-allowed family
variable polytope defined by total orders and sink nodes,
respectively. Faces of a polytope are themselves polytopes, and in
this section we establish a complete characterisation of the
facets of both types of polytope. Moreover, the faces defined by sink
nodes lead to a useful \emph{extended representation} for the family
variable polytope which can be used to relate family variable
polytopes for different numbers of nodes.

\subsection{Order-defined Faces}
\label{sec:orderfaces}

Let $<$ be some total order on the node set $\vertices$. An acyclic
digraph $(\vertices,\arcss)$ is  \emph{consistent with} $<$
if $\vari \leftarrow \varj \in \arcss \Rightarrow \varj < \vari$, so
that parents come before children in the ordering.  The
valid inequality $\sum_{\vari,  \varsubsetj : (\exists \varj \in
  \varsubsetj \mbox{ s.t. } \vari < \varj)}
x_{\vari
  \leftarrow \varsubsetj} \geq 0$ defines a face of the family
variable polytope 
\begin{equation}
  \label{eq:orderface}
  \fvpolyyo{\vertices} = \Bigl\{ x \in \fvpolyy{\vertices,\ppsvalone} \Bigm|
\sum_{\vari,  \varsubsetj:  (\exists \varj \in
  \varsubsetj: \vari < \varj)}
 x_{\vari
  \leftarrow \varsubsetj} = 0 \Bigr\}.
\end{equation}
In $\fvpolyyo{\vertices}$ each family variable inconsistent with $<$
is set to zero. This is the only restriction on $x$. So clearly all
acyclic digraphs consistent with $<$ lie on the face
$\fvpolyyo{\vertices}$ and no digraphs inconsistent with $<$ do. It is
also clear that any acyclic digraph lies on
$\fvpolyyo{\vertices}$ for at least one choice of $<$.
\begin{remark}
  Abbreviate $|\vertices|$ to $p$. We have that
  $\mathrm{dim}(\fvpolyyo{\vertices}) = 2^{p}-p-1$. If the family variables
  clamped to zero in $\fvpolyyo{\vertices}$ are removed,
  $\fvpolyyo{\vertices}$ is full-dimensional in $\reals^{2^{p}-p-1}$. (Recall that
  $\mathrm{dim}(\fvpolyy{\vertices,\ppsvalone}) = p(2^{p-1}-1)$.)
\end{remark}
\begin{remark}
  If $x$ is an extreme point of $\fvpolyy{\vertices,\ppsvalone}$, then $x \in \bigcup_{<}
      \fvpolyyo{\vertices}$.
\end{remark}
 Note that
exactly one acyclic tournament lies on $\fvpolyyo{\vertices}$ for any
choice of $<$.

\begin{proposition}
  \label{prop:order}
  The facet-defining inequalities of the full-dimensional polytope $\fvpolyyo{\vertices}$
  $\subseteq \reals^{2^{p}-p-1}$ are
  \begin{enumerate}
  \item the variable lower bounds $x_{\vari \leftarrow \varsubsetj}
    \geq 0$, and 
  \item the modified convexity constraints $\sum_{J \subseteq
      \vertices : J \neq \emptyset, \varj \in J \rightarrow j < i} x_{\vari \leftarrow \varsubsetj} \leq
    1$,
  \end{enumerate}
  where variables $x_{\vari \leftarrow \varsubsetj}$ with 
  $\varj \in \varsubsetj, i < j$ have been removed.
\end{proposition}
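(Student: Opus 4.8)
The plan is to exploit the fact that fixing the total order $<$ makes the acyclicity requirement vacuous: every digraph in which each node $\vari$ is assigned a parent set contained in $\{\varj \in \vertices : \varj < \vari\}$ is automatically acyclic and consistent with $<$. Consequently $\fvpolyyo{\vertices}$ --- viewed in the reduced coordinates obtained by deleting the family variables $x_{\vari \leftarrow \varsubsetj}$ with some $\varj \in \varsubsetj$, $\varj > \vari$ (the ones clamped to $0$) --- is simply the convex hull of all ``layered'' digraphs, and such a digraph is specified by choosing, independently for each node $\vari$, one of the permissible parent sets $\varsubsetj \subseteq \{\varj : \varj < \vari\}$. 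First I would make this precise: the extreme-point set of $\fvpolyyo{\vertices}$ in the reduced coordinates is the Cartesian product over $\vari \in \vertices$ of the ``local'' point sets $\{\mathbf 0\} \cup \{ e^{\vari \leftarrow \varsubsetj} : \emptyset \neq \varsubsetj \subseteq \{\varj : \varj < \vari\}\}$, where $\mathbf 0$ encodes the choice of the empty parent set and $e^{\vari \leftarrow \varsubsetj}$ is the unit vector for family $\vari \leftarrow \varsubsetj$. Since $\conv$ of a product of point sets is the product of their convex hulls, $\fvpolyyo{\vertices}$ equals $\prod_{\vari \in \vertices} \Delta_\vari$, where $\Delta_\vari$ is the standard simplex $\{ y \geq 0 : \sum_\varsubsetj y_\varsubsetj \leq 1 \}$ in the coordinates indexed by the non-empty subsets of the $<$-predecessors of $\vari$. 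The $k$-th node in $<$-order contributes $2^{k-1}-1$ such coordinates (the $<$-minimum node contributes none, consistent with $\Delta_\vari$ being a single point), so summing dimensions gives $\sum_{k=1}^{p}(2^{k-1}-1) = 2^{p}-p-1$, re-deriving full-dimensionality.

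Next I would invoke two elementary polyhedral facts. (a) The facet-defining inequalities of a standard simplex $\Delta \subseteq \reals^{d}$ are precisely the $d$ nonnegativity constraints $y_\varsubsetj \geq 0$ together with the single inequality $\sum_\varsubsetj y_\varsubsetj \leq 1$. (b) For a Cartesian product $Q = \prod_\vari Q_\vari$ of full-dimensional polytopes, a set is a facet of $Q$ if and only if it is of the form $F_\ell \times \prod_{\vari \neq \ell} Q_\vari$ for some index $\ell$ and some facet $F_\ell$ of $Q_\ell$. I would justify (b) in a line: for a valid inequality $\pi x \leq \pi_0$ on $Q$, writing $\pi = (\pi_\vari)_\vari$ one has $\pi_0 = \sum_\vari \max_{x_\vari \in Q_\vari} \pi_\vari x_\vari$, the induced face factors as $\prod_\vari F_\vari$ with $F_\vari$ the face of $Q_\vari$ cut out by $\pi_\vari$, and $\dim \prod_\vari F_\vari = \sum_\vari \dim F_\vari$ equals $\dim Q - 1 = \bigl(\sum_\vari \dim Q_\vari\bigr) - 1$ only when all but one of the $F_\vari$ equal $Q_\vari$ and the remaining one is a facet; conversely every such product set is clearly a proper face of the right dimension, hence a facet.

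Combining (a) and (b) with the product description of $\fvpolyyo{\vertices}$, its facets are exactly the inequalities coming either from a nonnegativity facet of some $\Delta_\vari$ or from the ``$\leq 1$'' facet of some $\Delta_\vari$. Reading these back through the coordinate identification, the first family is exactly the collection of variable lower bounds $x_{\vari \leftarrow \varsubsetj} \geq 0$, and the second is exactly the collection of modified convexity constraints $\sum_{\emptyset \neq \varsubsetj \subseteq \{\varj : \varj < \vari\}} x_{\vari \leftarrow \varsubsetj} \leq 1$ with the clamped-to-zero variables removed, i.e.\ the list in the statement. Should one wish to avoid quoting fact (b), the same conclusion follows by a direct argument in the style of the earlier proofs: validity of both families is immediate, and for each listed inequality one exhibits $2^{p}-p-1$ affinely independent layered digraphs lying on it (for $x_{\vari \leftarrow \varsubsetj} \geq 0$, the empty digraph together with all single-family digraphs $e^{\vari' \leftarrow \varsubsetj'}$ with $(\vari',\varsubsetj') \neq (\vari,\varsubsetj)$; for the modified convexity constraint of a non-$<$-minimum node $\vari$, all $e^{\vari \leftarrow \varsubsetj}$ with $\emptyset \neq \varsubsetj \subseteq \{\varj : \varj < \vari\}$ together with all $e^{\vari \leftarrow \varsubsetj_0} + e^{\vari' \leftarrow \varsubsetj'}$ for $\vari' \neq \vari$ and a fixed $\varsubsetj_0$), and completeness then follows because $\fvpolyyo{\vertices}$ is monotone --- being a product of simplices --- so by the characterisation of monotone polytopes of \shortciteA{hammer75:_facet} used in Theorem~\ref{thm:monofacets} its facets can only be nonnegativity constraints or inequalities $\pi x \leq \pi_0$ with $\pi \geq 0$, none of which other than the modified convexity constraints is irredundant.

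There is no serious obstacle here: the whole content is the observation that an order-defined face kills the acyclicity constraint and therefore factors as a product of simplices, one per node. The only step needing care is the coordinate bookkeeping --- precisely which variables survive the face restriction, and that the surviving parent sets of node $\vari$ are exactly the non-empty subsets of its $<$-predecessors --- together with a clean statement and one-line proof of the facets-of-a-product fact; I expect the write-up to be short.
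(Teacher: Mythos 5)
Your argument is correct, and it reaches the result by a different route from the paper. The paper's proof applies the same LP-based technique as in Proposition~\ref{prop:asscvx} (Wolsey's ``third approach''): for an arbitrary objective $c$, the LP over the listed inequalities is optimised by independently picking, for each node, a best-scoring parent set among its $<$-predecessors (or none), so every objective has an integral optimum and the listed system describes the convex hull; necessity of each inequality is left implicit. You instead make the underlying structure explicit: once the order is fixed, acyclicity is automatic, the vertex set of $\fvpolyyo{\vertices}$ factors as a Cartesian product over nodes of $\{\mathbf 0\}\cup\{e^{\vari\leftarrow\varsubsetj}\}$, hence $\fvpolyyo{\vertices}=\prod_{\vari}\Delta_{\vari}$ is a product of standard simplices, and the facets of a product of full-dimensional polytopes are exactly (facet of one factor) $\times$ (the remaining factors). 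Both proofs ultimately rest on the same per-node decomposability, but yours buys a complete structural description of the polytope (including the dimension count $\sum_{k}(2^{k-1}-1)=2^{p}-p-1$ as a by-product) and delivers the ``no other facets'' direction explicitly rather than implicitly, at the cost of having to state and justify the facets-of-a-product lemma; the paper's version is shorter but, as written, does not spell out irredundancy of the listed inequalities. Two small points of care in your write-up: in your justification of the product-facet lemma, the identity $\pi_{0}=\sum_{\vari}\max_{x_{\vari}\in Q_{\vari}}\pi_{\vari}x_{\vari}$ holds because a facet-defining inequality is tight at some point of $Q$ (state this, since it fails for valid inequalities with empty induced face); and the $<$-minimum node contributes a zero-dimensional factor with no facets, which correctly accounts for the fact that its ``modified convexity constraint'' degenerates to $0\leq 1$ and is not facet-defining -- a degenerate case the proposition's statement glosses over.
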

\begin{proof}
  Let $c \in \reals^{2^{p}-p-1}$ be an arbitrary objective coefficient
  vector. Consider solving the LP with objective $c$ subject to the
  linear inequalities given above. It is clear that an optimal
  solution to this LP is obtained by choosing a parent set
  $\varsubsetj$ for each $\vari \in \vertices$ such that $c_{\vari
    \leftarrow \varsubsetj}$ is maximal (or choosing none if all
  $c_{\vari \leftarrow \varsubsetj}$ are negative or there are no
  parent sets available). This is an integer
  solution. The result follows.
\end{proof}

\subsection{Sink-defined Faces}
\label{sec:sinkfaces}

For some particular $\varj \in \vertices$, consider the valid
inequality $\sum_{\vari\neq\varj,\varj \in \varsubsetj} x_{\vari \leftarrow
  \varsubsetj} \geq 0$. This
 defines a face $\fvpolyys{\vertices}{\varj}$ of the family
variable polytope as
\begin{equation}
  \label{eq:sinkface}
  \fvpolyys{\vertices}{\varj} := \Bigl\{ x \in \fvpolyy{\vertices,\ppsvalone} \Bigm| \sum_{\varj \in \varsubsetj, \vari\neq\varj} x_{\vari \leftarrow
  \varsubsetj} = 0\Bigr\}.
\end{equation}
This face contains all acyclic digraphs for which $\varj$ is a
\emph{sink}---it has no children. Since every acyclic digraph has at
least one sink, each extreme point of the family variable polytope
$\fvpolyy{\vertices,\ppsvalone}$ lies on a face $\fvpolyys{\vertices}{\varj}$ for
at least one choice of $\varj$.
\begin{remark}
  Abbreviate $|V|$ to $p$ and recall that
  $\mathrm{dim}(\fvpolyy{\vertices,\ppsvalone}) = p(2^{p-1}-1)$. We have that
  $\mathrm{dim}(\fvpolyys{\vertices}{\varj}) =
  \mathrm{dim}(\fvpolyy{\vertices \setminus \{\varj\},{\cal
      P}_{\vertices \setminus \{\varj\}}}) + 2^{p-1} - 1
  = (p-1)(2^{p-2}-1) + 2^{p-1} - 1 = (p+1)2^{p-2} - p$.
  If the family variables
  clamped to zero in $\fvpolyys{\vertices}{\varj}$ are removed,
  $\fvpolyys{\vertices}{\varj}$ is full-dimensional in $\reals^{(p+1)2^{p-2} - p}$.
\end{remark}

\begin{remark}
\label{rem:insinkface}
Every acyclic digraph contains at least one sink.
  So if $x$ is an extreme point of $\fvpolyy{\vertices,\ppsvalone}$, then $x \in
  \bigcup_{\varj \in \vertices}
      \fvpolyys{\vertices}{\varj}$.
\end{remark}

\begin{proposition}
  \label{prop:sink}
  The facet-defining inequalities of the full-dimensional polytope
  $\fvpolyys{\vertices}{\varj} \subseteq \reals^{(p+1)2^{p-2} - p}$ are
  \begin{enumerate}
  \item the facet-defining inequalities of the polytope $\fvpolyy{\vertices \setminus \{\varj\},{\cal
      P}_{\vertices \setminus \{\varj\}}}$, and 
  \item the modified convexity constraint for $\varj$, namely
    $\sum_{\varsubsetj \subseteq \vertices \setminus \{\varj\},
      \varsubsetj \neq \emptyset} x_{\varj \leftarrow \varsubsetj} \leq 1$.
  \end{enumerate}
\end{proposition}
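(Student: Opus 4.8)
The plan is to recognise $\fvpolyys{\vertices}{\varj}$, after its clamped coordinates are deleted, as a \emph{Cartesian product} of two much simpler polytopes, and then to read off the facets from the standard facet structure of a product. Write $p = |\vertices|$, and split the $(p+1)2^{p-2}-p$ surviving coordinates into a ``$\varj$-block'', consisting of the variables $x_{\varj\leftarrow\varsubsetj}$ with $\varsubsetj\subseteq\vertices\setminus\{\varj\}$, $\varsubsetj\neq\emptyset$, and a ``$(\vertices\setminus\{\varj\})$-block'', consisting of the variables $x_{\vari\leftarrow\varsubsetj}$ with $\vari\neq\varj$ and $\varj\notin\varsubsetj$; the latter are exactly the families in $\arcsets(\vertices\setminus\{\varj\},{\cal P}_{\vertices\setminus\{\varj\}})$.

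The structural fact I would use is that an acyclic digraph on $\vertices$ in which $\varj$ is a sink is exactly a pair consisting of (i)~an arbitrary acyclic digraph on $\vertices\setminus\{\varj\}$ and (ii)~an arbitrary parent set $\varsubsetj\subseteq\vertices\setminus\{\varj\}$ for $\varj$: if $\varj$ has no out-arcs, then no parent set of a node $\vari\neq\varj$ contains $\varj$, the induced subgraph on $\vertices\setminus\{\varj\}$ may be any acyclic digraph, and adding an arbitrary set of arcs into $\varj$ cannot create a cycle. These two choices occupy the two disjoint coordinate blocks, so the vertex set of the reduced $\fvpolyys{\vertices}{\varj}$ is the (coordinatewise) product of the vertex set of $\fvpolyy{\vertices\setminus\{\varj\},{\cal P}_{\vertices\setminus\{\varj\}}}$ with the vertex set of the standard simplex $\Delta := \{\, y \geq 0 : \sum_{\varsubsetj} y_{\varsubsetj} \leq 1 \,\}$, identifying $y_{\varsubsetj}$ with $x_{\varj\leftarrow\varsubsetj}$. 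Since $\conv(U\times W) = \conv(U)\times\conv(W)$ for finite $U,W$, this yields
\[
\fvpolyys{\vertices}{\varj} \;=\; \fvpolyy{\vertices\setminus\{\varj\},{\cal P}_{\vertices\setminus\{\varj\}}} \;\times\; \Delta ,
\]
and a dimension count ($(p-1)(2^{p-2}-1) + (2^{p-1}-1) = (p+1)2^{p-2}-p$) confirms that this is a genuine full-dimensional product, in agreement with the Remark preceding the proposition.

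I would then invoke the standard fact (see e.g.\ \citeA{conforti14:integ-progr}) that the facets of a product $P\times Q$ of polytopes are exactly the sets $F\times Q$ for $F$ a facet of $P$ and $P\times G$ for $G$ a facet of $Q$, each carrying the defining inequality of the relevant factor (which involves only that factor's coordinates). For our product, the facets $F\times\Delta$ give precisely the facet-defining inequalities of $\fvpolyy{\vertices\setminus\{\varj\},{\cal P}_{\vertices\setminus\{\varj\}}}$ (item~1), re-read in the larger coordinate set; and the facets of the simplex $\Delta$ are the lower bounds $y_{\varsubsetj}\geq 0$ and the single inequality $\sum_{\varsubsetj} y_{\varsubsetj}\leq 1$, which translate back to the lower bounds $x_{\varj\leftarrow\varsubsetj}\geq 0$ and the modified convexity constraint $\sum_{\varsubsetj\subseteq\vertices\setminus\{\varj\},\,\varsubsetj\neq\emptyset} x_{\varj\leftarrow\varsubsetj}\leq 1$ (item~2). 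Since the lower bounds $x_{\varj\leftarrow\varsubsetj}\geq 0$ are in any case facets of the family-variable polytope (Proposition~\ref{prop:mc}(\ref{lb})), this is exactly the list claimed.

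The routine points — that a $0/1$ vector with one family variable set to $1$ per node encodes a digraph, that deleting out-arcs of $\varj$ preserves acyclicity, and the arithmetic of the dimension count — are immediate. The crux I expect is the product description itself: establishing the ``converse'' direction that \emph{every} acyclic digraph on $\vertices\setminus\{\varj\}$ combined with \emph{any} parent set for $\varj$ is a vertex of the face, and checking that the coordinate-block splitting lets $\conv$ distribute over the product, so that the two factor polytopes satisfy the hypotheses under which the product-of-polytopes facet theorem applies cleanly.
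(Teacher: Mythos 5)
Your proof is correct, and it reaches the result by a different mechanism than the paper, although both hinge on the same structural observation: clamping $\varj$ to be a sink decouples the choice of $\varj$'s parent set from the choice of acyclic digraph on $\vertices\setminus\{\varj\}$. The paper's proof follows the same Wolsey-style route used for Propositions~\ref{prop:asscvx} and~\ref{prop:order}: for an arbitrary objective, the LP over the listed inequalities has an integral optimum (precisely because the optimisation splits into the two independent blocks you identify), so the inequalities describe the convex hull; the facet-ness and completeness of the list are then left largely implicit. You instead make the decoupling explicit as a Cartesian product $\fvpolyys{\vertices}{\varj} = \fvpolyy{\vertices\setminus\{\varj\},{\cal P}_{\vertices\setminus\{\varj\}}}\times\Delta$, justified by $\conv(U\times W)=\conv(U)\times\conv(W)$, and invoke the standard characterisation of facets of a product. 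This buys you the complete, irredundant facet list in one step, and in particular it makes explicit that the lower bounds $x_{\varj\leftarrow\varsubsetj}\geq 0$ are facets of the face --- these are genuinely facet-defining but are not literally on the proposition's list (they are not among the facet-defining inequalities of $\fvpolyy{\vertices\setminus\{\varj\},{\cal P}_{\vertices\setminus\{\varj\}}}$, which involve only the other coordinate block), so your parenthetical appeal to Proposition~\ref{prop:mc} is best read as a note that the statement's list tacitly includes them; the same bounds are in fact needed even to make the paper's LP bounded. The one ingredient you should supply or cite carefully is the facets-of-a-product fact itself, which follows, for instance, from the observation that concatenating irredundant inequality descriptions of two full-dimensional polytopes living in complementary coordinate subspaces yields an irredundant description of their product, each inequality of which is facet-defining by the affine-independence count you sketch.
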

\begin{proof}
  Let $c \in \reals^{(p+1)2^{p-2} - p}$ be an arbitrary objective coefficient
  vector and consider solving the LP with objective $c$ subject to the
  linear inequalities given above. Since $\varj$ is constrained to be
  a sink, an optimal solution in
  $\fvpolyys{\vertices}{\varj}$ is obtained by choosing a maximally
  scoring parent set for $\varj$ and then an optimal acyclic digraph
  for $\vertices \setminus \{\varj\}$. Since we have all the facets of
  the polytope $\fvpolyy{\vertices \setminus \{\varj\}}$, the optimal
  acyclic digraph for $\vertices \setminus \{\varj\}$ is a maximal
  solution to the LP restricted to the relevant variables. So the full
  LP has an integer solution. The result follows.
\end{proof}

\subsection{A Sink-based Extended Representation for the Family Variable Polytope}
\label{sec:extended}

Since $\fvpolyys{\vertices}{\varj} \subseteq \fvpolyy{\vertices,\ppsvalone}$, for
each $\varj \in \vertices$ we have $\bigcup_{\varj \in \vertices}
\fvpolyys{\vertices}{\varj} \subseteq \fvpolyy{\vertices,\ppsvalone}$ and so $\conv\left(
  \bigcup_{\varj \in \vertices}\fvpolyys{\vertices}{\varj}\right) \subseteq \conv
  (\fvpolyy{\vertices,\ppsvalone}) = \fvpolyy{\vertices,\ppsvalone}$. However, as noted 
in Remark~\ref{rem:insinkface}, if $x$ is an extreme point of
$\fvpolyy{\vertices,\ppsvalone}$, then $x \in \bigcup_{\varj \in \vertices}
\fvpolyys{\vertices}{\varj}$, so $\fvpolyy{\vertices,\ppsvalone}
\subseteq \conv\left(
  \bigcup_{\varj \in \vertices}\fvpolyys{\vertices}{\varj}\right)$, and thus $\fvpolyy{\vertices,\ppsvalone}
= \conv\left(
  \bigcup_{\varj \in \vertices}\fvpolyys{\vertices}{\varj}\right)$. Since there are only $|V|=p$
sink-defined faces, this leads to a compact extended representation for
the family variable polytope $\fvpolyy{\vertices,\ppsvalone}$ in terms of the
polytopes $\fvpolyys{\vertices}{\varj}_{\varj \in \vertices}$. Since
by Proposition~\ref{prop:sink} each $\fvpolyys{\vertices}{\varj}$
can be defined using $\fvpolyy{\vertices \setminus \{\varj\}}$, this
allows $\fvpolyy{\vertices,\ppsvalone}$ to be defined by the $\fvpolyy{\vertices
  \setminus \{\varj\},{\cal P}_{\vertices
  \setminus \{\varj\}}}$.
In Appendix~\ref{sec:liftandproject} we detail how this is done for
the specific case of $|\vertices|=4$; here we describe the method for
the general case.

A union of polytopes can be modelled by introducing additional
variables. We follow the (standard) approach described by \shortciteA[\S 2.11]{conforti14:integ-progr}. For each $\varj
\in \vertices$, we introduce a binary variable $x_{\varj}$ and add
the constraint 
\begin{equation}
  \label{eq:sinkconvex}
\sum_{\varj \in \vertices} x_{\varj} = 1,
\end{equation}
where $x_{\varj}$ indicates that node $\varj$ is a distinguished
sink. The
constraint states that in each acyclic digraph we can choose exactly
one sink as the distinguished sink for that digraph.

Next, for each $\varj \in \vertices$, $\vari \leftarrow \varsubsetj \in
\arcsets(\vertices,\ppsvalone)$, we introduce a new variable 
$x_{\varj,\vari \leftarrow \varsubsetj}$
indicating that $\vari$ has $\varsubsetj$ as its
(non-empty) parent set and that $\varj$ is the distinguished sink. In
other words $x_{\varj,\vari \leftarrow \varsubsetj} =
x_{\varj}x_{\vari \leftarrow \varsubsetj}$. We add the following
constraints linking the $x_{\varj,\vari \leftarrow \varsubsetj}$ to
the original $x_{\vari \leftarrow \varsubsetj}$:
\begin{equation}
  \label{eq:link}
  x_{\vari \leftarrow  \varsubsetj} = \sum_{\varj \in \vertices} x_{\varj,\vari \leftarrow
  \varsubsetj}.
\end{equation}
Denote the vector of $x_{\varj,\vari \leftarrow
  \varsubsetj}$ components for some $\varj$ as $x^{\varj}$. Then
for each $\varj \in \vertices$ and each facet-defining inequality $\pi x \leq
\pi_{0}$ of $\fvpolyys{\vertices}{\varj}$ we add the constraint
\begin{equation}
  \label{eq:altfacet}
  \pi^{\varj} x^{\varj} \leq \pi_{0}x_{\varj},
\end{equation}
where $\pi^{\varj}_{\varj,\vari \leftarrow
  \varsubsetj} = \pi_{\vari \leftarrow
  \varsubsetj}$, 
and also the variable bounds
\begin{equation}
  \label{eq:bounds}
  0 \leq x_{\varj,\vari \leftarrow \varsubsetj} \leq x_{\varj}.
\end{equation}

Equations and inequalities (\ref{eq:sinkconvex}--\ref{eq:bounds}) define $\bigcup_{\varj \in
  \vertices}\fvpolyys{\vertices}{\varj}$. To formulate
$\fvpolyy{\vertices,\ppsvalone} = \conv\bigl( \bigcup_{\varj \in
    \vertices}\fvpolyys{\vertices}{\varj}\bigr)$, it suffices to
merely drop the integrality condition on the $x_j$ variables, thus
allowing $\fvpolyy{\vertices,\ppsvalone}$ to be defined in terms of the
lower-dimensional $\fvpolyys{\vertices}{\varj}$.

\section{Relating BNSL and the Acyclic Subgraph Problem}
\label{sec:asp}

As the final contribution of this article, we establish a tight connection
between BNSL and the acyclic subgraph problem.

\subsection{BNSL as the Acyclic Subgraph Problem}

BNSL is closely related to the well-known \emph{acyclic subgraph problem} (\emph{ASP}) \shortcite{grotschel85}.
An instance of ASP is defined by digraph $\digraphname = \digraph$ with edge
weights $\arcweight{\vari}{\varj} \in \reals$ for every edge $\vari
\leftarrow \varj \in \arcs$, and the goal is to find an acyclic subdigraph $\digraphname' =
\digraphh$ of $\digraphname$ which maximises
\begin{equation}
  \label{eq:waspobjective}
  \sum_{\vari \leftarrow \varj \in \arcss} \arcweight{\vari}{\varj}.
\end{equation}

In ASP, the objective function is a linear function of
(indicators for) the edges of some digraph; in BNSL, by
contrast, the aim is to maximise an objective which is a linear
function of (indicators for) \emph{sets} of edges. As a Bayesian
network structure learning instance can consist of up to $\Omega(2^n)$
input values, it is presumably in general not possible to encode a BNSL instance as a ASP
instance over the same node set as the original BNSL instance, as this
would require in the worst case encoding an exponential number of
parent set scores into a quadratic number of edge weights. However, we
will next show that we can construct a BNSL-to-ASP reduction by
introducing new nodes to represent all possible parent sets of the
original instances $(\vertices, \ppsalone, \localscorevec)$, similarly
as in Theorem~\ref{thm:bnslp-bounded-parent-sets}. 

\begin{theorem}\label{thm:bnslp2asp}
Given BNSL instance $(\vertices_1, \ppsalone, \localscorevec)$, we can construct an ASP instance $D = (V,A)$ such that
\begin{enumerate}
    \item $\card{V} = O\bigl(\card{\vertices_1}+ \card{ \arcsetsfull(\vertices_1,\ppsalone) }\bigr)$, and
    \item there is one-to-one correspondence between the optimal solutions of $D$ and $(\vertices_1, \ppsalone, \localscorevec)$. 
\end{enumerate}
Moreover, given $(\vertices_1, \ppsalone, \localscorevec)$, the instance $D$ can be constructed in time $\operatorname{poly}\bigl(\card{\vertices_1}+ \card{ \arcsetsfull(\vertices_1,\ppsalone) }\bigr)$.
\end{theorem}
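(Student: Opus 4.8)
The plan is to adapt the reduction of Theorem~\ref{thm:bnslp-bounded-parent-sets}: an ASP objective is linear in individual arcs while a BNSL objective is linear in \emph{parent sets}, so I would introduce one auxiliary node per family, letting each parent-set score be carried by a single arc of the constructed digraph $D = (V,A)$.

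Concretely, take $V = \vertices_1 \cup \{ g_{\vari\leftarrow\varsubsetj} \mid \vari\leftarrow\varsubsetj \in \arcsets(\vertices_1,\ppsalone)\}$, one node $g_{\vari\leftarrow\varsubsetj}$ for each family with $\varsubsetj \neq \emptyset$, and for each such family add the ``structural'' arcs $\varj \to g_{\vari\leftarrow\varsubsetj}$ for all $\varj\in\varsubsetj$ and the ``score'' arc $g_{\vari\leftarrow\varsubsetj}\to\vari$. A BN structure $\arcss$ over $\vertices_1$ is encoded by the subdigraph $A_{\arcss}$ consisting of \emph{all} structural arcs together with the score arc $g_{\vari\leftarrow\paset{\vari}{\arcss}}\to\vari$ for each $\vari$ with $\paset{\vari}{\arcss}\neq\emptyset$. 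The structural fact to prove is that, for any set $T$ of score arcs with at most one arc pointing to each original node, the digraph formed by all structural arcs together with $T$ is acyclic iff the induced parent assignment (nodes with no chosen score arc getting $\emptyset$) is acyclic: a directed cycle necessarily alternates original and family nodes, traversing fragments $\vari\to g_{\vari'\leftarrow\varsubsetj}\to\vari'$ in which $\vari\in\varsubsetj$ and $\varsubsetj$ is the chosen parent set of $\vari'$, and these fragments compose precisely to the directed cycles of the induced parent relation; the converse lift is immediate. In particular $A_{\arcss}$ is acyclic iff $(\vertices_1,\arcss)$ is.

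For the weights, first make $\localscore{\vari}{\emptyset}=0$ for every $\vari$ by subtracting $\localscore{\vari}{\emptyset}$ from each $\localscore{\vari}{\varsubsetj}$; this shifts every BN score by one fixed constant and so preserves optimal structures. Then give every structural arc a large uniform weight $M > \sum_{\vari\leftarrow\varsubsetj \in \arcsetsfull(\vertices_1,\ppsalone)} |\localscore{\vari}{\varsubsetj}|$, and give the score arc $g_{\vari\leftarrow\varsubsetj}\to\vari$ the (shifted) weight $\localscore{\vari}{\varsubsetj}$. Since all structural arcs together form an acyclic digraph (every arc points from $\vertices_1$ into the $g$-nodes) and $M$ outweighs the combined weight of all score arcs, a short exchange argument shows every maximum-weight acyclic subdigraph of $D$ contains \emph{all} structural arcs and then a maximum-weight acyclic set of score arcs. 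If, moreover, such an optimal subdigraph uses at most one score arc per original node, that set of score arcs is exactly some BN structure $\arcss$ (nodes with none getting $\paset{\vari}{\arcss}=\emptyset$), and its total weight equals a fixed constant (the total structural weight) plus $\sum_{\vari}\localscore{\vari}{\paset{\vari}{\arcss}}$; thus $\arcss\mapsto A_{\arcss}$ would be the desired one-to-one correspondence between optimal acyclic subdigraphs of $D$ and optimal BN structures.

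The main obstacle is exactly that last proviso. ASP has no analogue of the convexity constraint~(\ref{eq:convex}) of the BNSL IP, so an acyclic subdigraph could use \emph{two or more} score arcs into the same original node $\vari$ --- a configuration that can be acyclic and whose extra score arcs may carry positive weight --- whereas using \emph{zero} score arcs into $\vari$ is harmless, being just the empty parent set after the shift. I would rule out multiple selections in the spirit of the ``disallow $\emptyset$ for the new nodes'' device of Theorem~\ref{thm:bnslp-bounded-parent-sets} --- for instance by further modifying $D$, attaching a small fixed gadget at each $\vari\in\vertices_1$ together with auxiliary weights large enough to make the score arcs into $\vari$ pairwise incompatible in every maximum-weight acyclic subdigraph --- and then check that optimal acyclic subdigraphs and optimal BN structures still read off from each other bijectively. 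Granting this, conditions (1) and (2) hold, and both $\card{V} = \card{\vertices_1} + \card{\arcsets(\vertices_1,\ppsalone)} = O\bigl(\card{\vertices_1} + \card{\arcsetsfull(\vertices_1,\ppsalone)}\bigr)$ and $\operatorname{poly}$-time constructibility are immediate from the description.
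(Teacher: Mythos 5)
Your high-level plan---one auxiliary node per family so that each parent-set score rides on a single arc, heavy uniform weights forcing every structural arc into any optimum, and an acyclicity correspondence between the gadget digraph and the induced parent relation---is in the same spirit as the paper's reduction, and the parts you actually prove (the alternating-cycle argument and the weight-comparison argument showing all structural arcs are retained) are sound. But the step you defer is not a finishing touch; it is the crux of the construction. Without a mechanism forcing \emph{at most one} score arc into each original node $i$, an optimal acyclic subdigraph of $D$ can simply collect every positively-weighted score arc $g_{i\leftarrow J}\to i$ for several distinct $J$ (the union of those parent sets can perfectly well leave the induced digraph acyclic), and statement (2) of the theorem---the bijection between optimal solutions---fails. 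Your proposed remedy, a ``small fixed gadget at each $i$ \dots{} making the score arcs into $i$ pairwise incompatible,'' is asserted rather than constructed, and in your layout it is not obvious one exists: the score arc is the unique out-arc of $g_{i\leftarrow J}$, so any heavy path from $i$ back towards $g_{i\leftarrow J}$ that would close a cycle when \emph{two} score arcs into $i$ are present already closes a cycle when only \emph{one} is present.

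The paper resolves exactly this difficulty by reorienting the score arc and introducing a second layer of auxiliary nodes: it keeps a node for each distinct permissible parent set $J$ (layer $V_2$) in addition to a node for each family $i\leftarrow J$ (layer $V_3$), places the score $c(i\leftarrow J)$ on the arc $J\to(i\leftarrow J)$ (with all scores shifted to be strictly positive), and adds heavy arcs $(i\leftarrow J)\to J'$ to every other parent-set node $J'$ with $i\notin J'$. Selecting the score arcs for two distinct parent sets $J,J'$ of the same $i$ then forces the cycle $(i\leftarrow J)\to J'\to(i\leftarrow J')\to J\to(i\leftarrow J)$, whereas a single selection creates no cycle, and strict positivity guarantees one selection per node. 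Establishing that this enlarged gadget behaves correctly---in particular, that every feasible BN structure lifts to an acyclic subdigraph containing all the heavy arcs, via an explicit topological order on $V_1\cup V_2\cup V_3$---constitutes most of the paper's proof. As written, your proposal delivers the size and running-time bounds and one direction of the correspondence, but leaves the mutual-exclusion mechanism, and hence the heart of statement (2), unproved.
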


\begin{proof}

Define the digraph $\digraphname = \digraph$ where $V = V_{1} \cup
V_{2} \cup V_{3}$ and
\begin{itemize}
\item $V_{2} = \{\varsubsetj \subseteq V_{1} \mid \text{ $\varsubsetj \in \pps{\vari}$ for some $i \in \vertices_1$}\}$,
\item $V_{3} = \{ \vari \leftarrow
  \varsubsetj \mid \vari \in \vertices, \varsubsetj \in \pps{\vari} \}$.
\end{itemize}
See Figure~\ref{fig:graph} for an example node set where $V_1$ is on
the top row, $V_2$ the middle one and $V_3$ the bottom row.

The edge set for $\digraphname$ is the disjoint union of four
(colour-coded) edge sets $A = A_{1} \cup A_{2} \cup A_{3} \cup A_{4}$
where
\begin{itemize}
\item $A_{1} = \{(\vari,\varsubsetj) \mid \vari \in V_{1}, \varsubsetj \in V_{2}, \vari \in \varsubsetj\}$ (blue),
\item $A_{2} = \{(\vari \leftarrow \varsubsetj,\vari) \mid \vari
  \leftarrow \varsubsetj \in V_{3}, \vari \in V_{1} \}$ (black),
\item $A_{3} = \{(\varsubsetj,\vari \leftarrow \varsubsetj) \mid
  \varsubsetj \in V_{2}, \vari \leftarrow \varsubsetj \in V_{3}\}$ (red), and
\item $A_{4} = \{(\vari \leftarrow \varsubsetj,\varsubsetj') \mid \vari
  \leftarrow \varsubsetj \in V_{3}, \varsubsetj' \in V_{2}, \vari \not \in
    \varsubsetj', \varsubsetj \neq \varsubsetj'\}$ (green).
\end{itemize}
These four edge sets are coloured 
correspondingly in the example of Figure~\ref{fig:graph}.

Define an ASP instance for $\digraphname = \digraph$ where each (red)
edge in $A_3$ $(\varsubsetj,\vari \leftarrow \varsubsetj)$ has weight
$\localscore{\vari}{\varsubsetj}$; we will assume that the scores $c(i\leftarrow J)$ are strictly positive for all feasible parent sets choices, as adding the same value to each score will not change the optimal structures.
All other edges receive a weight
sufficiently big to ensure that they are included in any optimal acyclic
edge set. For example, giving each such edge a weight equal to a sum of
all $\localscore{\vari}{\varsubsetj}$ weights plus $1$ will suffice.

Note that $(\vertices,\arcs \setminus A_{3})$ is acyclic. Recall
also the objective coefficients of the ASP instance have been chosen
to ensure that $\arcs \setminus A_{3} \subseteq B$ for any optimal edge
set $B$ in $D$. Intuitively, we will thus only care about how the optimal solution looks on
the edge set $A_3$, and use this information to recover a solution to the original BNSL instance.

Let $(\vertices, B)$ be an optimal solution to ASP instance $D$ and define a digraph $(\vertices_1, B')$ as follows: $B' = \{ \vari \leftarrow \varj \mid \varj \in \varsubsetj \text{ and } (\varsubsetj,\vari \leftarrow \varsubsetj) \in B \}$.
We will show (i) there is exactly one edge of form $(\varsubsetj,\vari \leftarrow
  \varsubsetj)\in A_3 $ for each $i \in \vertices_1$, (ii) the graph $(\vertices_1, B')$
graph is acyclic and (iii) that it is an optimal solution to the given BNSL instance $(\vertices_1, \ppsalone, \localscorevec)$.

(i) Suppose that $(\varsubsetj,\vari \leftarrow \varsubsetj)$
and $(\varsubsetj',\vari \leftarrow \varsubsetj')$ were both in $B$
for some $\vari \in V_1$ and $\varsubsetj, \varsubsetj' \in V_{2},
\varsubsetj \neq \varsubsetj'$. This is not possible because the edges $(\vari
\leftarrow \varsubsetj,\varsubsetj')$ and $(\vari \leftarrow
\varsubsetj',\varsubsetj)$ are both in $A_{4}$ and thus in $B$. Having
$(\varsubsetj,\vari \leftarrow \varsubsetj)$
and $(\varsubsetj',\vari \leftarrow \varsubsetj')$ both in $B$
 would cause a cycle $(\vari \leftarrow \varsubsetj) \rightarrow
\varsubsetj \rightarrow (\vari \leftarrow \varsubsetj') \rightarrow
\varsubsetj \rightarrow (\vari \leftarrow \varsubsetj)$
in $B$, and so is impossible. 

(ii) For any $\vari, \varj, \varsubsetj$ with $\vari \neq \varj, \varj
\in \varsubsetj$, there exist the following edges: the blue edge
$(\varj,\varsubsetj) \in A_{1}$ and the black edge $(\vari \leftarrow
\varsubsetj,\vari) \in A_{2}$. Note that both of these edges will be in
$B$.  If the red edge $(\varsubsetj,\vari \leftarrow \varsubsetj) \in
A_{3}$ is also in $B$ then we have the following path in $B$: $\varj
\rightarrow \varsubsetj \rightarrow (\vari \leftarrow \varsubsetj)
\rightarrow \vari$. So if $\varj$ is a parent of $\vari$ in $B'$, then
there is a path from $\varj$ to $\vari$ in $B$. So if there were a
cycle $\vari_{1} \rightarrow \vari_{2} \dots \vari_{n} \rightarrow
\vari_{1}$ in $B'$ there would be a cycle from $\vari_{1}$ to
$\vari_{1}$ in $B$. Since $B$ is acyclic this is a contradiction and
so $B'$ must also be acyclic.

(iii) We first show that any feasible solution $G$ to the BNSL instance $(\vertices_1, \ppsalone, \localscorevec)$
corresponds to a feasible solution to the ASP instance $\digraphname = \digraph$. This feasible solution to $\digraphname = \digraph$ consists of the
edges $A_{1} \cup A_{2} \cup A_{4}$ together with those red edges in
$A_3$ corresponding to the parent set choices for $G$. We need to show
that this edge set---call it $B(G)$---is acyclic in $\digraphname =
\digraph$. Since $G$ is acyclic there is a total order $<_{V_{1}}$ on
the nodes $V_1$ such that parents always come before children in
this order. We show that $<_{V_{1}}$ determines a total order $<_{V}$
on the nodes $V$ such that parents always come before children in
$B(G)$ which establishes that $B(G)$ is acyclic.

To aid understanding we first do this for the case where $V_{1} =
\{a,b,c\}$ and $G$ is such that $a <_{V_{1}} b <_{V_{1}} c$. The
general result is established later. In the special case all red edges
(in $A_{3}$) which are inconsistent with $<_{V_{1}}$ will be absent
from $B(G)$. In particular, since $a$ is allowed no parents, the red
edges going to the nodes $a\leftarrow\{b\}$, $a\leftarrow\{c\}$,
and $a\leftarrow\{b,c\}$ will be absent. This means that these
nodes are source nodes in $B(G)$, so put these as the first 3
elements of the order $<_{V}$. Since $a \in V_{1}$ has only these 3
nodes as parents, put $a$ as the 4th element in $<_{V}$. Since the
only parent for $\{a\}$ is $a$, put $\{a\}$ as the 5th element. 
Since $c$ cannot be a parent of $b$, the red arrows going to 
$b \leftarrow \{c\}$ and $b \leftarrow \{a,c\}$ are absent from
$B(G)$, so these nodes are sources in $B(G)$. Also the only arrow
going to $b \leftarrow \{a\}$ is from $\{a\}$ which is already in the
order. This allows us to put $b \leftarrow \{a\}$, $b \leftarrow
\{c\}$ and $b \leftarrow \{a,c\}$ as the next elements in
$<_{V}$. Having done this $b$ can be placed next, and then $\{b\}$
and $\{a,b\}$. The final placements are $c \leftarrow \{a\}$, $c \leftarrow
\{b\}$ and $c \leftarrow \{a,b\}$, then $c$ and then the remaining
nodes $\{c\}, \{a,c\}, \{b,c\}$ and $\{a,b,c\}$. 

In the general case, suppose we have $G$ with a consistent ordering of
its nodes $\vari_{1} <_{V_{1}} \vari_{2} \dots <_{V_{n}}
\vari_{n}$. We construct a total ordering of the nodes of $V$
consistent with $B(G)$ as follows. Start with the nodes $\vari_{1}
\leftarrow \varsubsetj$ (in any order), and then put $\vari$ and after
that $\{\vari\}$. Then for $k=2, \dots, n$ add nodes as follows:
the $\vari_{k} \leftarrow \varsubsetj$ nodes, then $\vari_{k}$ and
then each $\varsubsetj$ such that $\vari_{k} \in \varsubsetj$ and
$\varsubsetj \subseteq \{\vari_{1} \dots \vari_{k}\}$. It is not
difficult to see that this total order contains all nodes of $V$
and is consistent with $B(G)$, so $B(G)$ is acyclic.

Now suppose $B'$ were not an optimal solution to the BNSL instance $(\vertices_1, \ppsalone, \localscorevec)$. In that
case there would be some strictly better solution corresponding to an
acyclic graph $G$ for which $B(G)$ would be a feasible solution to the
ASP instance $\digraphname = \digraph$ and this solution would be strictly better than $B$. This
is a contradiction since $B$ is an optimal solution and so it
follows that $B'$ is an optimal solution to $(\vertices_1, \ppsalone, \localscorevec)$.
\end{proof}

\begin{figure}
  \centering
  \includegraphics[width=1.05\textwidth]{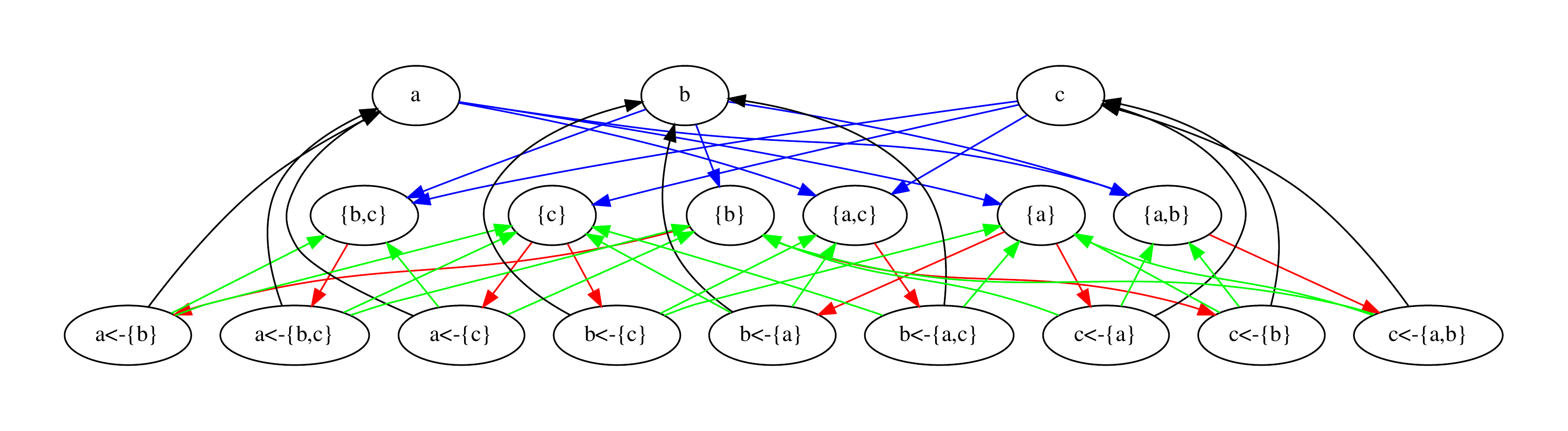}
\vspace{-1cm}
  \caption{ASP digraph for the BNSL instance with node set $\{a,b,c\}$}
  \label{fig:graph}
\end{figure}

Note that, as \citeA{marti11:_linear_order_probl} show,  ASP is equivalent to the
\emph{linear ordering problem} (LOP).
This
means that pure LOP approaches can be used to solve ASP and thus BNSL.

\subsection{Relating the BNSL and Acyclic Subgraph Problem Polytopes}
There is a
polytope naturally associated with any instance of ASP.
Let $\reals^{\arcs}$ be a real vector space where every component of a
vector $y \in \reals^{\arcs}$ is indexed by an edge $\vari \leftarrow \varj \in
\arcs$. For every edge set $\arcss \subseteq \arcs$, the incidence
vector  $y^{\arcss} \in
  \reals^{\arcs}$ of $\arcss$ is defined by
$y^{\arcss}_{\vari \leftarrow \varj} =  1$ if $\vari \leftarrow \varj \in \arcss$ and
$y^{\arcss}_{\vari \leftarrow \varj} =  0$ if $\vari \leftarrow \varj \not\in \arcss$.
The \emph{acyclic subgraph polytope} $\asppoly{\digraphname}$ is
\begin{equation}
  \label{eq:pac}
  \asppoly{\digraphname} := \mathrm{conv}\bigl\{y^{\arcss} \in
  \reals^{\arcs} \bigm| \arcss \in {\cal \arcs}(\digraphname)\bigr\}.
\end{equation}

It is not difficult to see that the all-parent-sets-allowed family variable polytope
$\fvpolyy{\vertices,\ppsvalone}$ can be projected onto the ASP polytope where the
ASP edgeset $\arcs = \vertices \times \vertices$. Equivalently,
BNSL is an \emph{extended formulation} of such ASP instances. Since
the ASP has been extensively studied it is important to investigate
which results on  ASP `translate' to BNSL.

We can see that the ASP instance is a projection of
the BNSL instance by introducing the edge indicator variables $y_{\vari \leftarrow
  \varj}$ into BNSL together with the `linking' equations
\begin{equation}
  \label{eq:aspbnslp}
  y_{\vari \leftarrow \varj} = \sum_{\varsubsetj: \varj \in
    \varsubsetj} x_{\vari \leftarrow \varsubsetj}.
\end{equation}
The introduction of these variables (dimensions) and equations leaves
the family variable polytope unaltered except that it now `lives in' a
higher-dimensional space. `Projecting away' the $x_{\vari \leftarrow \varsubsetj}$
variables from this higher-dimensional family variable polytope then produces the ASP polytope.

Using this relationship it is easy to map any ASP instance with edgeset $\arcs =
\vertices \times \vertices$ into a
BNSL instance: simply  set $\localscore{\vari}{\varsubsetj} =
\sum_{\varj \in \varsubsetj} \arcweight{\vari}{\varj}$. A solution to
the BNSL instance so produced will be a solution to the original ASP instance
with the same objective value. A direct reverse mapping is only
possible if there are edge weights such that the local score for each
family is the sum of the weights of the edges corresponding to that
family.

\begin{proposition}
  If $\pi y \leq \pi_{0}$
is a valid inequality for ASP, then
$\pi' x \leq \pi_{0}$
is a valid inequality, where
$\pi'_{\vari \leftarrow \varsubsetj} = \sum_{\varj \in \varsubsetj}
\pi_{\vari \leftarrow \varj}.$
\end{proposition}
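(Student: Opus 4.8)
The plan is to read off the result from the projection relationship already established in the surrounding discussion: the family variable polytope maps into the acyclic subgraph polytope $\asppoly{\digraphname}$ with $\arcs = \vertices\times\vertices$ under the affine map $x \mapsto y$ given by the linking equations~\eqref{eq:aspbnslp}, namely $y_{\vari\leftarrow\varj} = \sum_{\varsubsetj \in \pps{\vari}\,:\,\varj\in\varsubsetj} x_{\vari\leftarrow\varsubsetj}$. So the pullback of any valid inequality for $\asppoly{\digraphname}$ along this map is valid for $\fvpolyyalone$, and I would simply check that this pullback is exactly $\pi' x \leq \pi_0$. To keep things self-contained I would first note it suffices to verify $\pi' x \leq \pi_0$ at the extreme points of $\fvpolyyalone$, since both sides are linear and $\fvpolyyalone$ is the convex hull of these points; each extreme point is the family-variable encoding of an acyclic digraph $\digraphh$, and the associated $y$ from~\eqref{eq:aspbnslp} is precisely the incidence vector of the edge set $\arcss$, which lies in ${\cal \arcs}(\digraphname)$ and hence in $\asppoly{\digraphname}$.

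The second step is the substitution. Since $\pi y \leq \pi_0$ is valid for ASP it holds at this $y$, and swapping the order of summation gives
\[
\pi y \;=\; \sum_{\vari\leftarrow\varj\in\arcs} \pi_{\vari\leftarrow\varj}\, y_{\vari\leftarrow\varj}
\;=\; \sum_{\vari\leftarrow\varj\in\arcs} \pi_{\vari\leftarrow\varj} \!\!\sum_{\varsubsetj\in\pps{\vari}\,:\,\varj\in\varsubsetj}\!\! x_{\vari\leftarrow\varsubsetj}
\;=\; \sum_{\vari\leftarrow\varsubsetj\in\arcsets} \Bigl(\sum_{\varj\in\varsubsetj} \pi_{\vari\leftarrow\varj}\Bigr) x_{\vari\leftarrow\varsubsetj}
\;=\; \pi' x,
\]
using $\pi'_{\vari\leftarrow\varsubsetj} = \sum_{\varj\in\varsubsetj}\pi_{\vari\leftarrow\varj}$. (Here $\varj\in\varsubsetj$ already forces $\varsubsetj\neq\emptyset$, so the sum ranges over exactly the index set $\arcsets$.) Hence $\pi' x = \pi y \leq \pi_0$ at every extreme point, and therefore at every point of $\fvpolyyalone$.

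There is essentially no obstacle here: the argument is a one-line change of summation order once the linking equations are in hand. The only point deserving a sentence of care is the claim that $y$ built from an acyclic digraph's family encoding really is the incidence vector of an acyclic edge set — i.e.\ that~\eqref{eq:aspbnslp} genuinely realises the projection onto $\asppoly{\digraphname}$ asserted in the text — after which validity transfers immediately. I would also remark that, since a restricted family variable polytope $\fvpolyy{\vertices,\ppsalone}$ is a face of $\fvpolyy{\vertices,\ppsvalone}$, the same inequality (with the inadmissible family coefficients simply dropped) remains valid there as well.
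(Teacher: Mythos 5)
Your proposal is correct and is essentially the paper's own argument: verify the inequality at the extreme points (family-variable encodings of acyclic digraphs), use the linking equations to identify the corresponding incidence vector $y$, and observe that $\pi y = \pi' x$ by reordering the summation. The paper states this identity as ``obvious'' where you write it out explicitly, but the route is the same.
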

\begin{proof}
  Let $x^{*} \in \reals^{\arcsets}$ represent an acyclic digraph and
  let $y^{*} \in \reals^{\arcs}$ represent the same digraph.  We
  have that $\pi y^{*} \leq \pi_{0}$.  It is obvious that $\pi y^{*} =
  \pi' x^{*}$. So all acyclic digraphs represented by family variables
  satisfy $\pi' x \leq \pi_{0}$. The result follows.
\end{proof}

\section{Conclusions}
\label{sec:conclusions}

Integer programming, and specifically the IP-based \gobnilp{} system,
offers a state-of-the-art practical approach 
to the NP-hard optimization problem of 
learning optimal Bayesian network structures, BNSL.
Thus providing fundamental insights into the IP approach to BNSL
is important both from the purely scientific perspective---dealing
with a central class of probabilistic graphical models
with various applications in AI---and for 
developing a better understanding of the approach in the hope of further improving the 
current algorithmic approaches to BNSL.
With these motivations, in this work we shed light on various fundamental 
computational and representational aspects of BNSL. From the practical perspective,
many of our main contributions have tight connections to 
IP cutting planes derived in practice during  search for optimal network structures. 
Specifically, our contributions include for example the following.  
We showed that the separation problem
which in practice yields problem-specific BNSL cutting planes within \gobnilp{}
is in fact NP-hard, a previously open problem.
 We studied the relationship between three key polytopes
underlying BNSL. We analyzed the facets of the three polytopes,
and established  that the so-called cluster constraints giving rise to BNSL cutting planes
are in fact facet-defining inequalities of the family-variable
polytope central to BNSL. We also provide (in Appendix~\ref{sec:lowdim}) a complete enumeration of facets for low-dimensional family-variable polytopes,
connecting with problem-specific cutting planes ruling out all network structures
with short cyclic substructures. 
In summary, the theoretical results presented in this work deepen the current understanding 
of fundamental aspects of BNSL from various perspectives.

\acks{We thank three anonymous reviewers and the editor for useful
  criticism which has helped us improve the paper. The authors
  gratefully acknowledge financial support from: UK Medical Research
  Council Grant G1002312 (JC, MB); Senior Postdoctoral Fellowship
  SF/14/008 from KU Leuven (JC); UK NC3RS Grant NC/K001264/1 (JC);
  Academy of Finland under grants 251170 COIN Centre of Excellence in
  Computational Inference Research, 276412, and 284591 (MJ); Research
  Funds of the University of Helsinki (MJ); and Icelandic Research
  Fund grant 152679-051 (JHK). Part of the work was done while JHK was
  at University of Helsinki and at Reykjavik University.}

\vskip 0.2in
\bibliography{asasp}
\bibliographystyle{theapa}

\appendix

\section{Enumeration of Facets for Low-dimensional Family Variable Polytopes}
\label{sec:lowdim}

In Section~\ref{sec:facets} we provided general results on the facets
of the family variable polytope. In this section, 
we provide a complete listing of all facet-defining inequalities
(i.e.\ a minimal description of the convex hull by inequalities) of
the family variable polytope $\fvpolyy{\vertices,\ppsvalone}$ for
$|\vertices|=2,3,4$. 
We will observe that all
lower bounds on variables, modified convexity constraints and
$\palim$-cluster inequalities are indeed among the facets
found, as predicted by our theoretical results. Proposition~\ref{prop:nondecrease} and the lifting theorem
(Theorem~\ref{thm:extend}) are also consistent with the list of
facets. In the case of $|\vertices|=4$, we also see that there are
many facets \emph{not} given in Section~\ref{sec:facets}. In
Section~\ref{sec:v4k2} we enumerate all facet-defining inequalities
for $|\vertices|=4$, where at most two parents are allowed and observe
that the results are consistent with Theorem~\ref{thm:facetrestrict}.

We use $a,b,c$, and $d$ to label the nodes. To simplify notation,
we abbreviate variables such as
 $x_{b \leftarrow \{a,c\}}$ to
$x_{b \leftarrow ac}$.

\subsection{Node Set of Size 2}
\label{sec:v2}

When $|\vertices|=2$, there are 3 acyclic digraphs and $\arcsets(\vertices,\ppsvalone) = \{
a \leftarrow \{b\}, b \leftarrow \{a\} \}$. There are three facets: the two lower bounds and the
1-cluster constraint $x_{a \leftarrow b} + x_{b \leftarrow a} \leq 1$. 

\subsection{Node Set of Size 3}
\label{sec:v3}

When $|\vertices|=3$, there are 25 acyclic digraphs and 
\begin{align*}
\arcsets(\vertices,\ppsvalone) = \{ 
&a \leftarrow \{b\}, a \leftarrow\{c\}, a \leftarrow\{b,c\}, \\
&b \leftarrow\{a\}, b \leftarrow\{c\}, b \leftarrow\{a,c\}, \\
&c \leftarrow\{a\}, c \leftarrow\{b\}, c \leftarrow\{a,b\} \}.
\end{align*}
Using the \texttt{cdd} computer program \shortcite{fukuda15:_homep}, we found all the facets of the
convex hull of the 25 acyclic digraphs. There are 17 facet-defining inequalities:
\begin{itemize}
\item 9 lower bounds on the 9 $x_{\vari \leftarrow \varsubsetj}$
  family variables;
\item 3 modified convexity constraints, one for each of $a$, $b$, and $c$;
\item 4 1-cluster constraints, one for each of the  clusters
  $\{a,b\}$, $\{a,c\}$, $\{b,c\}$, and $\{a,b,c\}$; and
\item 1 2-cluster constraint for the cluster $\{a,b,c\}$.
\end{itemize}

\subsection{Node Set of Size 4}
\label{sec:v4}

When $|\vertices|=4$, there are 543 acyclic digraphs and 
\begin{align*}
\arcsets(\vertices,\ppsvalone) = \{ 
&a\leftarrow\{b\}, a\leftarrow\{c\}, a\leftarrow\{d\}, a\leftarrow\{b,c\}, a\leftarrow\{b,d\},
a\leftarrow\{c,d\}, a\leftarrow\{b,c,d\}, \\
&b \leftarrow \{a\}, b \leftarrow \{c\}, b \leftarrow \{d\}, b \leftarrow \{a,c\}, b \leftarrow \{a,d\},
b \leftarrow \{c,d\}, b \leftarrow \{a,c,d\}, \\
&c \leftarrow \{a\}, c \leftarrow \{b\}, c \leftarrow \{d\}, c \leftarrow \{a,b\}, c \leftarrow \{a,d\},
c \leftarrow \{b,d\}, c \leftarrow \{a,b,d\}, \\
&d \leftarrow \{a\}, d \leftarrow \{b\}, d \leftarrow \{c\}, d
\leftarrow \{a,b\}, d \leftarrow \{a,c\}, d \leftarrow \{b,c\}, d
\leftarrow \{a,b,c\} \}.
\end{align*}
Using \texttt{cdd} we discovered that there are 135 facet-defining inequalities of the
family variable polytope:
\begin{itemize}
\item 28 lower bounds on the 28 $x_{\vari \leftarrow \varsubsetj}$
  family variables;
\item 4 modified convexity constraints, one for each of $a$, $b$, $c$, and $d$;
\item 6 1-cluster constraints for each of the $\binom{4}{2} = 6$
  clusters of size 2; 
\item 4 1-cluster constraints for each of the $\binom{4}{3} = 4$
  clusters of size 3; 
\item 1 1-cluster constraint for the $\binom{4}{4} = 1$
  cluster of size 4; 
\item 4 2-cluster constraints for each of the $\binom{4}{3} = 4$
  clusters of size 3; 
\item 1 2-cluster constraint for the $\binom{4}{4} = 1$
  cluster of size 4; 
\item 1 3-cluster constraint for the $\binom{4}{4} = 1$
  cluster of size 4; and 
\item 86 other facet-defining inequalities.
\end{itemize}

We now list these 86 other facet-defining inequalities. These 86 inequalities fall into 9
permutation classes, and we give just one member of each of
these 9 classes. By symmetry, any permutation of the 4 nodes
$a$, $b$, $c$, and $d$ in a facet-defining inequality will produce
another facet-defining inequality. Some permutations do not change the inequality. We
indicate this, for each permutation class, by showing which nodes
can be permuted without changing the facet. For example, the expression
$ab|cd$ indicates that either $a$ and $b$, or $c$ and $d$, can
be swapped without altering the inequality, so that there are $4!/(2\times
2) = 6$ distinct inequality in such a permutation class.

For each permutation class, we give the (arbitrarily chosen) name for
that class that is used by the \gobnilp{} system. The names run from 4B
to 4I---there is no permutation class called `4A', since, at one time
in \gobnilp, this was used to designate $\palim$-cluster inequalities.
With the exception of `4F' and `4J' inequalities, if the user wants,
\gobnilp{} can search for these facets as cutting planes for a given LP
solution. By default only `4B' cutting planes are looked for, since
these cutting planes have empirically been found to perform
well. Interestingly, 4B facets can be defined in terms of connected
matroids, as noted by \citeA{studeny15:_how_bayes}.

\begin{description}
\item[4B facets $ab|cd$]
  \begin{align}\label{eq:4b}
   &x_{a \leftarrow b} + x_{a\leftarrow bc} + x_{a \leftarrow bd} +
   x_{a \leftarrow cd} + x_{a \leftarrow bcd}& \nonumber \\ 
 + &x_{b \leftarrow a} + x_{b \leftarrow ac} + x_{b \leftarrow ad} +
 x_{b \leftarrow cd} + x_{b \leftarrow acd} \nonumber \\
 + &x_{c \leftarrow ad} + x_{c \leftarrow bd} + x_{c \leftarrow abd}
 \nonumber \\
 + &x_{d \leftarrow ac} + x_{d \leftarrow bc} + x_{d \leftarrow abc} &\leq 2
\end{align}
6 inequalities

\item[4C facets $a|b|cd$]
  \begin{align*}
&x_{a \leftarrow c} + x_{a \leftarrow d} + x_{a \leftarrow bc} + x_{a \leftarrow bd} + x_{a \leftarrow cd} + x_{a \leftarrow bcd}&\\ 
+ &x_{b \leftarrow cd} + x_{b \leftarrow acd}\\
 + &x_{c \leftarrow ab} + x_{c \leftarrow bd} + x_{c \leftarrow abd}\\
 + &x_{d \leftarrow ab} + x_{d \leftarrow bc} + x_{d \leftarrow abc} &\leq 2\\
  \end{align*}
12 inequalities

\item[4D facets $a|b|cd$]
  \begin{align*}
 &x_{a \leftarrow b} + x_{a \leftarrow c} + x_{a \leftarrow d} + x_{a \leftarrow bc} + x_{a \leftarrow bd} + 2 x_{a \leftarrow cd} + 2
 x_{a \leftarrow bcd}&\\
 + &x_{b \leftarrow a} + x_{b \leftarrow c} + x_{b \leftarrow d} + x_{b \leftarrow ac} + x_{b \leftarrow ad} + x_{b \leftarrow cd} +
 x_{b \leftarrow acd}\\ 
+ &x_{c \leftarrow a} + x_{c \leftarrow ab} + x_{c \leftarrow ad} + x_{c \leftarrow abd}\\ 
+ &x_{d \leftarrow a} + x_{d \leftarrow ab} + x_{d \leftarrow ac} + x_{d \leftarrow abc} &\leq 3\\
  \end{align*}
12 inequalities

\item[4E facets $a|bcd$]
  \begin{align*}
  &x_{a \leftarrow bc} + x_{a \leftarrow bd} + x_{a \leftarrow cd} + 2x_{a \leftarrow bcd}&\\ 
+ &x_{b \leftarrow ac} + x_{b \leftarrow ad} + x_{b \leftarrow acd}\\ 
+ &x_{c \leftarrow ab} + x_{c \leftarrow ad} + x_{c \leftarrow abd}\\
+ &x_{d \leftarrow ab} + x_{d \leftarrow ac} + x_{d \leftarrow abc} &\leq 2
\end{align*}
4 inequalities

\item[4F facets $ab|cd$]
  \begin{align*}
&x_{a \leftarrow cd} + x_{a \leftarrow bcd}\\ 
+ &x_{b \leftarrow cd} + x_{b \leftarrow acd}\\
+ &x_{c \leftarrow a} + x_{c \leftarrow b} + x_{c \leftarrow d} + x_{c \leftarrow ab} + x_{c \leftarrow ad} + x_{c \leftarrow bd} + 2
x_{c \leftarrow abd}\\
+ &x_{d \leftarrow a} + x_{d \leftarrow b} + x_{d \leftarrow c} + x_{d \leftarrow ab} + x_{d \leftarrow ac} + x_{d \leftarrow bc} + 2 x_{d \leftarrow abc} \leq 3\\
  \end{align*}
6 inequalities

\item[4G facets $a|b|c|d$]
  \begin{align*}
&x_{a \leftarrow cd} + x_{a \leftarrow bcd} \\
+ &x_{b \leftarrow c} + x_{b \leftarrow ac} + x_{b \leftarrow cd} + x_{b \leftarrow acd} \\ 
+ &x_{c \leftarrow b} + x_{c \leftarrow d} + x_{c \leftarrow ab} + x_{c \leftarrow ad} + x_{c \leftarrow bd} + 2 x_{c \leftarrow abd}\\
+ &x_{d \leftarrow a} + x_{d \leftarrow b} + x_{d \leftarrow c} + x_{d \leftarrow ab} + 2 x_{d \leftarrow ac} + x_{d \leftarrow bc} + 2
x_{d \leftarrow abc} \leq 3
  \end{align*}
24 inequalities

\item[4H facets $a|b|cd$]
  \begin{align*}
&x_{a \leftarrow c} + x_{a \leftarrow d} + x_{a \leftarrow bc} + x_{a \leftarrow bd} + x_{a \leftarrow cd} + 2 x_{a \leftarrow bcd}&\\
 + &x_{b \leftarrow acd}\\
 + &x_{c \leftarrow ab} + x_{c \leftarrow abd}\\ 
+ &x_{d \leftarrow ab} + x_{d \leftarrow abc} &\leq 2\\
  \end{align*}
12 inequalities

\item[4I facets $ab|cd$]
  \begin{align*}
&x_{a \leftarrow c} + x_{a \leftarrow d} + x_{a \leftarrow bc} + x_{a \leftarrow bd} + x_{a \leftarrow cd} + 2 x_{a \leftarrow bcd}&\\
 + &x_{b \leftarrow c} + x_{b \leftarrow d} + x_{b \leftarrow ac} + x_{b \leftarrow ad} + x_{b \leftarrow cd} + 2 x_{b \leftarrow acd}\\
 + &x_{c \leftarrow a} + x_{c \leftarrow b} + x_{c \leftarrow d} + 2 x_{c \leftarrow ab} + x_{c \leftarrow ad} + x_{c \leftarrow bd} +
 2x_{c \leftarrow abd}\\ 
+ &x_{d \leftarrow a} + x_{d \leftarrow b} + x_{d \leftarrow c} + 2 x_{d \leftarrow ab} + x_{d \leftarrow ac} + x_{d \leftarrow bc} + 2 x_{d \leftarrow abc}& \leq 4
  \end{align*}
6 inequalities

\item[4J facets $a|bcd$]
  \begin{align*}
&x_{a \leftarrow b} + x_{a \leftarrow c} + x_{a \leftarrow d} + 2 x_{a \leftarrow bc} + 2 x_{a \leftarrow bd} + 2 x_{a \leftarrow cd} +
2x_{a \leftarrow bcd}&\\
 + &x_{b \leftarrow a} + x_{b \leftarrow ac} + x_{b \leftarrow ad} + x_{b \leftarrow cd} + x_{b \leftarrow acd}\\ 
+ &x_{c \leftarrow a} + x_{c \leftarrow ab} + x_{c \leftarrow ad} + x_{c \leftarrow bd} + x_{c \leftarrow abd}\\ 
+ &x_{d \leftarrow a} + x_{d \leftarrow ab} + x_{d \leftarrow ac} + x_{d \leftarrow bc} + x_{d \leftarrow abc} &\leq 3
  \end{align*}
4 inequalities

\end{description}

\subsection{Node Set of Size 4, Parent Set Size at most 2}
\label{sec:v4k2}

By Theorem~\ref{thm:facetrestrict}, if we have 4 nodes but
only allow acyclic digraphs with at most two parents, then the
following facet-defining inequalities 
from Section~\ref{sec:v4} (with family variables 
$x_{a\leftarrow  bcd}$, 
$x_{b\leftarrow  acd}$, 
$x_{c\leftarrow  abd}$, and $x_{d\leftarrow  abc}$ removed)  should
be facet-defining inequalities of the resulting polytope.

\begin{itemize}
\item 24 lower bounds on the 24 $x_{\vari \leftarrow \varsubsetj}$
  family variables;
\item 4 modified convexity constraints, one for each of $a$, $b$, $c$, and $d$;
\item 6 1-cluster constraints for each of the $\binom{4}{2} = 6$
  clusters of size 2; 
\item 4 1-cluster constraints for each of the $\binom{4}{3} = 4$
  clusters of size 3; 
\item 1 1-cluster constraint for the $\binom{4}{4} = 1$
  cluster of size 4; 
\item 4 2-cluster constraints for each of the $\binom{4}{3} = 4$
  clusters of size 3; and 
\item 1 2-cluster constraint for the $\binom{4}{4} = 1$
  cluster of size 4. 
\end{itemize}

In addition all facet-defining inequalities of types 4B, 4C, 4D, and
4J should remain facet-defining. There are 6, 12, 12, and 4 of these,
respectively. This  adds up to a total of
24+4+6+4+1+4+1+6+12+12+4=78 facet-defining inequalities. Using
\texttt{cdd} we computed the facet-defining inequalities of the convex
hull of the (family-variable encoded) 443 acyclic digraphs with 4 nodes
and where each node has at most 2 parents. We found, as expected, that
all of these 78 inequalities were included. Moreover, we found that
these 78 constitute the \emph{complete set} of facet-defining
inequalities---there are no others.

\section{Lift-and-Project for Family Variable Polytopes}
\label{sec:liftandproject}

In this appendix, we apply a `lift-and-project' method based on the
sink-based extended representation of Section~\ref{sec:extended} to
derive a representation of $\fvpolyy{\{a,b,c,d\},{\cal
    P}_{\{a,b,c,d\}}}$, whose facet-defining inequalities are given in
Section~\ref{sec:v4}, in terms of $\fvpolyy{\{b,c,d\},{\cal
    P}_{\{b,c,d\}}}$, $\fvpolyy{\{a,c,d\},{\cal P}_{\{a,c,d\}}}$,
$\fvpolyy{\{a,b,d\},{\cal P}_{\{a,b,d\}}}$ and
$\fvpolyy{\{a,b,c\},{\cal P}_{\{a,b,c\}}}$, whose facet-defining
inequalities are given in Section~\ref{sec:v3}. First we have the relevant formulation of \eqref{eq:sinkconvex},
\begin{equation}
  \label{eq:sinkconvex4}
  x_{a} + x_{b} + x_{c} + x_{d} = 1,
\end{equation}
stating that exactly one of the four nodes is the distinguished sink in
any acyclic digraph using those four nodes.
Recall that $x_{\varj,\vari \leftarrow \varsubsetj}$ indicates that
$\varj$ is the distinguished sink and that $\varsubsetj$ is the parent
set for $\vari$ so that $x_{\varj,\vari \leftarrow \varsubsetj} = 0$ if $\varj \in
\varsubsetj$, so that, for example, $x_{b,a \leftarrow b}=0$. With
this observation we can write the linking equations (\ref{eq:link})
as follows.
\begin{align}
\label{eq:eq1}  x_{a \leftarrow b} &=  x_{a,a \leftarrow b} +  x_{c,a \leftarrow b}  +  x_{d,a \leftarrow b}\\
  x_{a \leftarrow c} &=  x_{a,a \leftarrow c} +  x_{b,a \leftarrow c}  +  x_{d,a \leftarrow c}\\
  x_{a \leftarrow d} &=  x_{a,a \leftarrow d} +  x_{b,a \leftarrow d}  +  x_{c,a \leftarrow d}\\
  x_{b \leftarrow a} &=  x_{b,b \leftarrow a} +  x_{c,b \leftarrow a}  +  x_{d,b \leftarrow a}\\
  x_{b \leftarrow c} &=  x_{a,b \leftarrow c} +  x_{b,b \leftarrow c}  +  x_{d,b \leftarrow c}\\
  x_{b \leftarrow d} &=  x_{a,b \leftarrow d} +  x_{b,b \leftarrow d}  +  x_{c,b \leftarrow d}\\
  x_{c \leftarrow a} &=  x_{b,c \leftarrow a} +  x_{c,c \leftarrow a}  +  x_{d,c \leftarrow a}\\
  x_{c \leftarrow b} &=  x_{a,c \leftarrow b} +  x_{c,c \leftarrow b}  +  x_{d,c \leftarrow b}\\
  x_{c \leftarrow d} &=  x_{a,c \leftarrow d} +  x_{b,c \leftarrow d}  +  x_{c,c \leftarrow d}\\
  x_{d \leftarrow a} &=  x_{b,d \leftarrow a} +  x_{c,d \leftarrow a}  +  x_{d,d \leftarrow a}\\
  x_{d \leftarrow b} &=  x_{a,d \leftarrow b} +  x_{c,d \leftarrow b}  +  x_{d,d \leftarrow b}\\
  x_{d \leftarrow c} &=  x_{a,d \leftarrow c} +  x_{b,d \leftarrow c}  +  x_{d,d \leftarrow d}
\end{align}
\begin{align}
  x_{a \leftarrow bc} &=  x_{a,a \leftarrow bc} +  x_{d,a \leftarrow bc}\\
  x_{a \leftarrow bd} &=  x_{a,a \leftarrow bd} +  x_{c,a \leftarrow bd}\\
  x_{a \leftarrow cd} &=  x_{a,a \leftarrow cd} +  x_{b,a \leftarrow cd}\\
  x_{b \leftarrow ac} &=  x_{b,b \leftarrow ac} +  x_{d,b \leftarrow ac}\\
  x_{b \leftarrow ad} &=  x_{b,b \leftarrow ad} +  x_{c,b \leftarrow ad}\\
  x_{b \leftarrow cd} &=  x_{b,b \leftarrow cd} +  x_{a,b \leftarrow cd}\\
  x_{c \leftarrow ab} &=  x_{c,c \leftarrow ab} +  x_{d,c \leftarrow ab}\\
  x_{c \leftarrow ad} &=  x_{b,c \leftarrow ad} +  x_{c,c \leftarrow ad}\\
  x_{c \leftarrow bd} &=  x_{a,c \leftarrow bd} +  x_{c,c \leftarrow bd}\\
  x_{d \leftarrow ab} &=  x_{c,d \leftarrow ab} +  x_{d,d \leftarrow ab}\\
  x_{d \leftarrow ac} &=  x_{b,d \leftarrow ac} +  x_{d,d \leftarrow ac}\\
\label{eq:eq24}  x_{d \leftarrow bc} &=  x_{a,d \leftarrow bc} +  x_{d,d \leftarrow bc}
\end{align}
\begin{align}
\label{eq:reda}    x_{a \leftarrow bcd} &=  x_{a,a \leftarrow bcd}\\
\label{eq:redb}    x_{b \leftarrow acd} &=  x_{b,b \leftarrow acd}\\
\label{eq:redc}    x_{c \leftarrow abd} &=  x_{c,c \leftarrow abd}\\
\label{eq:redd}    x_{d \leftarrow abc} &=  x_{d,d \leftarrow abc}
\end{align}
Thirdly we have all equations of type (\ref{eq:altfacet}). We label
all inequalities with $x_{a}$ on the RHS as follows. The modified
convexity constraints for $a$, $b$, $c$, and $d$ are labelled a-a, a-b,
a-c, and a-d, respectively. All other constraints are cluster
constraints which we label as a-C, where $C$ is the cluster and
$\palim=1$, and a-2-C, where $C$ is the cluster and
$\palim=2$. Inequalities with $x_b$, $x_c$, and $x_d$ on the RHS are labelled
analogously. The 36 inequalities of type (\ref{eq:altfacet}) are now
listed using this labelling convention.
\begin{align}
\label{eq:a-b}  \tag{a-b} x_{a,b \leftarrow c} + x_{a,b \leftarrow d} + x_{a,b \leftarrow cd} &\leq x_{a}\\
\label{eq:a-c} \tag{a-c}  x_{a,c \leftarrow b} + x_{a,c \leftarrow d} + x_{a,c \leftarrow bd} &\leq x_{a}\\
\label{eq:a-d} \tag{a-d}  x_{a,d \leftarrow b} + x_{a,d \leftarrow c} + x_{a,d \leftarrow cd} &\leq x_{a}\\
\label{eq:a-bc} \tag{a-bc}  x_{a,b \leftarrow c} + x_{a,b \leftarrow cd} + x_{a,c \leftarrow b} + x_{a,c \leftarrow bd} &\leq x_{a}\\
\label{eq:a-bd} \tag{a-bd}  x_{a,b \leftarrow d} + x_{a,b \leftarrow cd} + x_{a,d \leftarrow b} + x_{a,d \leftarrow bd} &\leq x_{a}\\
\label{eq:a-cd} \tag{a-cd}  x_{a,c \leftarrow d} + x_{a,c \leftarrow bd} + x_{a,d \leftarrow c} + x_{a,d \leftarrow cd} &\leq x_{a}\\
  x_{a,b \leftarrow c} + x_{a,b \leftarrow d} + x_{a,b \leftarrow cd}
  +
  x_{a,c \leftarrow b} + x_{a,c \leftarrow d} + x_{a,c \leftarrow bd} & \nonumber \\
\label{eq:a-bcd} \tag{a-bcd}  + x_{a,d \leftarrow b} + x_{a,d \leftarrow c} + x_{a,d \leftarrow cb}  &\leq 2x_{a}\\
\label{eq:a-2-bcd} \tag{a-2-bcd}  x_{a,b \leftarrow cd} + x_{a,c \leftarrow bd} + x_{a,d \leftarrow
    bc}  &\leq x_{a}\\
  x_{a,a \leftarrow b} + x_{a,a \leftarrow c} + x_{a,a \leftarrow d} +
  x_{a,a \leftarrow bc} & \nonumber \\ \label{eq:a-a} \tag{a-a} + x_{a,a \leftarrow bd} +
  x_{a,a \leftarrow cd} + x_{a,a \leftarrow bcd} &\leq x_{a}
\end{align}
\begin{align}
\label{eq:b-a} \tag{b-a}  x_{b,a \leftarrow c} + x_{b,a \leftarrow c} + x_{b,a \leftarrow cd} &\leq x_{b}\\
\label{eq:b-c} \tag{b-c}  x_{b,c \leftarrow a} + x_{b,c \leftarrow d} + x_{b,c \leftarrow ad} &\leq x_{b}\\
\label{eq:b-d} \tag{b-d}  x_{b,d \leftarrow a} + x_{b,d \leftarrow c} + x_{b,d \leftarrow ac} &\leq x_{b}\\
\label{eq:b-ac} \tag{b-ac}  x_{b,a \leftarrow c} + x_{b,a \leftarrow cd} + x_{b,c \leftarrow a} + x_{b,c \leftarrow ad} &\leq x_{b}\\
\label{eq:b-ad} \tag{b-ad}  x_{b,a \leftarrow d} + x_{b,a \leftarrow cd} + x_{b,d \leftarrow a} + x_{b,d \leftarrow ac} &\leq x_{b}\\
\label{eq:b-cd} \tag{b-cd}  x_{b,c \leftarrow d} + x_{b,c \leftarrow bd} + x_{b,d \leftarrow c} + x_{b,d \leftarrow cd} &\leq x_{b}\\
  x_{b,a \leftarrow c} + x_{b,a \leftarrow d} + x_{b,a \leftarrow cd} + 
  x_{b,c \leftarrow a} + x_{b,c \leftarrow d} + x_{b,c \leftarrow ad} & \nonumber \\ 
\label{eq:b-acd} \tag{b-acd}  + x_{b,d \leftarrow a} + x_{b,d \leftarrow c} + x_{b,d \leftarrow ac}  &\leq 2x_{b}\\
\label{eq:b-2-acd} \tag{b-2-acd}  x_{b,a \leftarrow cd} + x_{b,c \leftarrow ad} + x_{b,d \leftarrow ac}  &\leq x_{b}\\
  x_{b,b \leftarrow a} + x_{b,b \leftarrow c} + x_{b,b \leftarrow d} +
  x_{b,b \leftarrow ac} & \nonumber \\ \label{eq:b-b} \tag{b-b} + x_{b,b \leftarrow ad} +
  x_{b,b \leftarrow cd} + x_{b,b \leftarrow acd} &\leq x_{b}
\end{align}
\begin{align}
\label{eq:c-a} \tag{c-a}  x_{c,a \leftarrow b} + x_{c,a \leftarrow d} + x_{c,a \leftarrow bd} &\leq x_{c}\\
\label{eq:c-b} \tag{c-b}  x_{c,b \leftarrow a} + x_{c,b \leftarrow d} + x_{c,b \leftarrow ad} &\leq x_{c}\\
\label{eq:c-d} \tag{c-d}  x_{c,d \leftarrow a} + x_{c,d \leftarrow b} + x_{c,d \leftarrow ab} &\leq x_{c}\\
\label{eq:c-ab} \tag{c-ab}  x_{c,a \leftarrow b} + x_{c,a \leftarrow bd} + x_{c,b \leftarrow a} + x_{c,b \leftarrow ad} &\leq x_{c}\\
\label{eq:c-ad} \tag{c-ad}  x_{c,a \leftarrow d} + x_{c,a \leftarrow bd} + x_{c,d \leftarrow a} + x_{c,d \leftarrow ab} &\leq x_{c}\\
\label{eq:c-bd} \tag{c-bd}  x_{c,b \leftarrow d} + x_{c,b \leftarrow ad} + x_{c,d \leftarrow b} + x_{c,d \leftarrow ab} &\leq x_{c}\\
  x_{c,a \leftarrow b} + x_{c,a \leftarrow d} + x_{c,a \leftarrow bd} + 
  x_{c,b \leftarrow a} + x_{c,b \leftarrow d} + x_{c,b \leftarrow ad} & \nonumber \\ 
\label{eq:c-abd} \tag{c-abd}  + x_{c,d \leftarrow a} + x_{c,d \leftarrow b} + x_{c,d \leftarrow ab}  &\leq 2x_{c}\\
\label{eq:c-2-abd} \tag{c-2-abd}  x_{c,a \leftarrow bd} + x_{c,b \leftarrow ad} + x_{c,d \leftarrow ab}  &\leq x_{c}\\
  x_{c,c \leftarrow a} + x_{c,c \leftarrow b} + x_{c,c \leftarrow d} +
x_{c,c \leftarrow ab} & \nonumber \\ \label{eq:c-c} \tag{c-c} + x_{c,c \leftarrow ad} + x_{c,c
\leftarrow bd} + x_{c,c \leftarrow abd} &\leq x_{c}
\end{align}
\begin{align}
\label{eq:d-a} \tag{d-a}  x_{d,a \leftarrow b} + x_{d,a \leftarrow c} + x_{d,a \leftarrow bc} &\leq x_{d}\\
\label{eq:d-b} \tag{d-b}  x_{d,b \leftarrow a} + x_{d,b \leftarrow c} + x_{d,b \leftarrow ac} &\leq x_{d}\\
\label{eq:d-c} \tag{d-c}  x_{d,c \leftarrow a} + x_{d,c \leftarrow a} + x_{d,c \leftarrow ab} &\leq x_{d}\\
\label{eq:d-ab} \tag{d-ab}  x_{d,a \leftarrow b} + x_{d,a \leftarrow bc} + x_{d,b \leftarrow a} + x_{d,b \leftarrow ac} &\leq x_{d}\\
\label{eq:d-ac} \tag{d-ac}  x_{d,a \leftarrow c} + x_{d,a \leftarrow bc} + x_{d,c \leftarrow a} + x_{d,c \leftarrow ab} &\leq x_{d}\\
\label{eq:d-cd} \tag{d-cd}  x_{d,b \leftarrow c} + x_{d,b \leftarrow ac} + x_{d,c \leftarrow b} + x_{d,c \leftarrow ab} &\leq x_{d}\\
  x_{d,a \leftarrow b} + x_{d,a \leftarrow c} + x_{d,a \leftarrow bc} + 
  x_{d,b \leftarrow a} + x_{d,b \leftarrow c} + x_{d,b \leftarrow ac} & \nonumber \\ 
\label{eq:d-abc} \tag{d-abc}  + x_{d,c \leftarrow a} + x_{d,c \leftarrow b} + x_{d,c \leftarrow ab}  &\leq 2x_{d}\\
\label{eq:d-2-abd} \tag{d-2-abd}  x_{d,a \leftarrow bc} + x_{d,b \leftarrow ac} + x_{d,c \leftarrow
    ab}  &\leq x_{d}\\
  x_{d,d \leftarrow a} + x_{d,d \leftarrow b} + x_{d,d \leftarrow c} +
  x_{d,d \leftarrow ab} & \nonumber \\ \label{eq:d-d} \tag{d-d} + x_{d,d \leftarrow ac} +
  x_{d,d \leftarrow bc} + x_{d,d \leftarrow abc} &\leq x_{d}
\end{align}

Using (\ref{eq:reda}--\ref{eq:redd}) it is possible to
eliminate the variables $x_{a,a \leftarrow bcd}$, $x_{b,b \leftarrow
  acd}$, $x_{c,c \leftarrow abd}$, and $x_{d,d \leftarrow abc}$, and
(\ref{eq:reda}--\ref{eq:redd}) from the representation. This
leaves us with a representation of $\fvpolyy{\{a,b,c,d\},{\cal P}_{\{a,b,c,d\}}}$ using $4 +
28 + 4 \times (9+6) = 92$ variables, 25 equations, 36 inequalities of
type (\ref{eq:altfacet}), four lower bounds on the variables $x_{\varj}$,
56 lower bounds (of 0) on the variables
$x_{\vari, \varj \leftarrow \varsubsetj}$ where $|\varsubsetj|<3$, and
four lower bounds (of 0) on the variables $x_{\vari \leftarrow
  \varsubsetj}$ where $|\varsubsetj|=3$. In total we have 100
inequalities.

We have given an explicit extended representation of
$\fvpolyy{\{a,b,c,d\},{\cal P}_{\{a,b,c,d\}}}$. Here is that representation described more
briefly.
\begin{itemize}
\item   $x_{a} + x_{b} + x_{c} + x_{d} = 1$.
\item 24/2 = 12 unique permutations of   $x_{a \leftarrow b} =  x_{a,a \leftarrow b} +  x_{c,a \leftarrow b}  +  x_{d,a \leftarrow b}$.
\item 24/2 = 12 unique permutations of   $x_{a \leftarrow bc} =  x_{a,a \leftarrow bc} +  x_{d,a \leftarrow bc}$.
\item 24/2 = 12 unique permutations of   $x_{a,b \leftarrow c} + x_{a,b
    \leftarrow d} + x_{a,b \leftarrow cd} \leq  x_{a}$.
\item 24/2 = 12 unique permutations of  $x_{a,b \leftarrow c} + x_{a,b \leftarrow cd} + x_{a,c \leftarrow b} + x_{a,c \leftarrow bd}\leq x_{a}$.
\item 24/6 = 4 unique permutations of   $x_{a,b \leftarrow c} + x_{a,b \leftarrow d} + x_{a,b \leftarrow cd}
  +
  x_{a,c \leftarrow b} + x_{a,c \leftarrow d} + x_{a,c \leftarrow bd} 
  + x_{a,d \leftarrow b} + x_{a,d \leftarrow c} + x_{a,d \leftarrow
    cb}  \leq 2x_{a}$.
\item 24/6 = 4 unique permutations of  $x_{a,b \leftarrow cd} + x_{a,c \leftarrow bd} + x_{a,d \leftarrow
    bc}  \leq x_{a}$.
\item 24/6 = 4 unique permutations of  $x_{a,a \leftarrow b} + x_{a,a \leftarrow c} + x_{a,a \leftarrow d} +
  x_{a,a \leftarrow bc}  + x_{a,a \leftarrow bd} +
  x_{a,a \leftarrow cd} + x_{a \leftarrow bcd} \leq x_{a}$.
\item 4 lower bounds on the variables $x_{\varj}$.
\item 56 lower bounds on variables $x_{\vari, \varj
  \leftarrow \varsubsetj}$ where $|\varsubsetj|<3$.
\item 4 lower bounds on variables $x_{\vari \leftarrow
  \varsubsetj}$ where $|\varsubsetj|=3$.
\end{itemize}

The crucial point is that the convex hull of solutions to our extended
representation can be found by simply dropping the integrality
restrictions on variables. (See \cite[p. 71]{conforti14:integ-progr}
for the relevant proof.) If we `project away' the additional variables
from this convex hull we end up with $\fvpolyy{\vertices,\ppsvalone} =
\conv\left( \bigcup_{\varj \in
    \vertices}\fvpolyys{\vertices}{\varj}\right)$.

We now show explicitly that the facet-defining inequalities of $\fvpolyy{\{a,b,c,d\},{\cal P}_{\{a,b,c,d\}}}$ can be derived
by projection from our extended representation. This projection is
done by forming linear combinations of extended representation
facet-defining inequalities
which only contain `normal' family variables $x_{\vari \leftarrow \varsubsetj}$.

For example, consider adding the following inequalities:
(\ref{eq:a-a}), (\ref{eq:a-2-bcd}),
(\ref{eq:b-b}), (\ref{eq:b-2-acd}),
(\ref{eq:c-c}), (\ref{eq:c-ab}), 
(\ref{eq:d-d}) and  (\ref{eq:d-ab}). 
Note that the RHS of this inequality is 
$x_{a} + x_{a} +
x_{b} + x_{b} +
x_{c} + x_{c} +
x_{d} + x_{d} = 2$.
So the result is
\begin{align*}\label{eq:proj4b}
&x_{a,a \leftarrow b} + x_{a,a \leftarrow c} + x_{a,a \leftarrow d} +
  x_{a,a \leftarrow bc} + x_{a,a \leftarrow bd} +
  x_{a,a \leftarrow cd} + x_{a \leftarrow bcd} \\
+ & x_{a,b \leftarrow cd} + x_{a,c \leftarrow bd} + x_{a,d \leftarrow
    bc} \\
+ & x_{b,b \leftarrow a} + x_{b,b \leftarrow c} + x_{b,b \leftarrow d} +
  x_{b,b \leftarrow ac}  + x_{b,b \leftarrow ad} +
  x_{b,b \leftarrow cd} + x_{b \leftarrow acd} \\
+ & x_{b,a \leftarrow cd} + x_{b,c \leftarrow ad} + x_{b,d \leftarrow
  ac} \\
+ & x_{c,c \leftarrow a} + x_{c,c \leftarrow b} + x_{c,c \leftarrow d} +
x_{c,c \leftarrow ab} + x_{c,c \leftarrow ad} + x_{c,c
\leftarrow bd} + x_{c \leftarrow abd} \\
+ & x_{c,a \leftarrow b} + x_{c,a \leftarrow bd} + x_{c,b \leftarrow
  a} + x_{c,b \leftarrow ad} \\
+ & x_{d,d \leftarrow a} + x_{d,d \leftarrow b} + x_{d,d \leftarrow c} +
  x_{d,d \leftarrow ab}  + x_{d,d \leftarrow ac} +
  x_{d,d \leftarrow bc} + x_{d \leftarrow abc} \\
+ & x_{d,a \leftarrow b} + x_{d,a \leftarrow bc} + x_{d,b \leftarrow
  a} + x_{d,b \leftarrow ac} \leq 2.
\end{align*}
Using (\ref{eq:eq1}-\ref{eq:eq24}) we can simplify this to
\begin{align}
&x_{a \leftarrow b} + x_{a,a \leftarrow c} + x_{a,a \leftarrow d} +
  x_{a \leftarrow bc} + x_{a \leftarrow bd} +
  x_{a \leftarrow cd} + x_{a \leftarrow bcd} \nonumber \\
+ & x_{a,b \leftarrow cd} \nonumber \\
+ & x_{b \leftarrow a} + x_{b,b \leftarrow c} + x_{b,b \leftarrow d} +
  x_{b \leftarrow ac}  + x_{b \leftarrow ad} +
  x_{b,b \leftarrow cd} + x_{b \leftarrow acd} \nonumber \\
+ & \nonumber \\
+ & x_{c,c \leftarrow a} + x_{c,c \leftarrow b} + x_{c,c \leftarrow d} +
x_{c,c \leftarrow ab} + x_{c \leftarrow ad} + x_{c
\leftarrow bd} + x_{c \leftarrow abd} \nonumber \\
+ & \nonumber \\
+ & x_{d,d \leftarrow a} + x_{d,d \leftarrow b} + x_{d,d \leftarrow c} +
  x_{d,d \leftarrow ab}  + x_{d \leftarrow ac} +
  x_{d \leftarrow bc} + x_{d \leftarrow abc} \nonumber \\
+ &  \leq 2.
\end{align}
This inequality can then be weakened by adding the lower bounds for
the 14 remaining extended variables (thus removing them) which results
in the 4B facet (\ref{eq:4b}) of $\fvpolyy{\{a,b,c,d\},{\cal P}_{\{a,b,c,d\}}}$.

We now show how each of the facet classes 4B-4J for $\fvpolyy{\{a,b,c,d\},{\cal P}_{\{a,b,c,d\}}}$
listed in Section~\ref{sec:v4} can be derived by projection from the
extended representation. Projection is achieved by multiplying each
facet-defining inequality in the extended representation by a non-negative scalar. Let the
vector of these scalars be denoted $u \geq 0$. In the following list
we only provide positive components of $u$ and do not bother to list
those components of $u$ corresponding to variable lower bounds. (Note
that since these $u$ vectors generate \emph{facet-defining} inequalities of
$\fvpolyy{\{a,b,c,d\},{\cal P}_{\{a,b,c,d\}}}$, they must be \emph{extreme} rays of the relevant projection
cone \cite{balas05:_projec_liftin_exten_formul_integ_combin_optim}.)

\begin{description}
\item [4B facet] 
\ \\
$u_{a-a} = 1, u_{a-2-bcd}=1,
 u_{b-b} = 1, u_{b-2-acd}=1,
 u_{c-c} = 1, u_{c-ab}=1,
 u_{d-d} = 1, u_{d-ab}=1$
\item[4C facet]
\ \\
$u_{a-a}=1, u_{a-2-bcd} =1,
u_{b-b}=1, u_{b-a} = 1,
u_{c-c}=1, u_{c-ad} = 1,
u_{d-d}=1, u_{d-ac} = 1$
\item[4D facet]
\ \\
$u_{a-a}=2, u_{a-b} =1,
u_{b-b}=1, u_{b-ac}=1, u_{b-ad}=1,
u_{c-c}=1, u_{c-abd}=1,
u_{d-d}=1, u_{d-abc}=1$
\item[4E facet]
\ \\
$u_{a-a} = 2, 
u_{b-b} = 1, u_{b-2-acd} = 1,   
u_{c-c} = 1, u_{c-2-abd} = 1,   
u_{d-d} = 1, u_{d-2-abc} = 1$
\item[4F facet]
\ \\
$u_{a-a}=1,u_{a-bcd}=1,
u_{b-b}=1,u_{b-acd}=1,
u_{c-c}=2,u_{c-d}=1,
u_{d-d}=2,u_{d-c}=1$
\item[4G facet]
\ \\
$u_{a-a}=1, u_{a-bcd}=1,
u_{b-b}=1, u_{b-ad}=1, u_{b-cd}=1,
u_{c-c}=2, u_{c-d}=1,
u_{d-d}=2, u_{d-bc}=1$
\item[4H facet]
\ \\
$u_{a-a}=2,
u_{b-b}=1, u_{b-a}=1,
u_{c-c}=1, u_{c-ad}=1,
u_{d-d}=1, u_{d-ac}=1$
\item[4I facet]
\ \\
$u_{a-a}=2, u_{a-bcd}=1, u_{b-b}=2, u_{b-acd}=1, u_{c-c}=2,
u_{c-ad}=1, u_{c-bd} = 1, u_{d-d}=2,
u_{d-ac}=1, u_{d-bc} = 1$
\item[4J facet]
\ \\
$u_{a-a}=2, u_{a-2-bcd}=1, 
u_{b-b}=1, u_{b-ac}=1, u_{b-ad}=1,
u_{c-c}=1, u_{c-ab}=1, u_{c-ad}=1,
u_{d-d}=1, u_{d-ab}=1, u_{d-ac}=1$
\end{description}

We have shown how to generate all facets of $\fvpolyy{\vertices,\ppsvalone}$ for
$|\vertices| = 4$ from the $|\vertices| = 3$ case. This was done by
constructing the desired convex hull using an extended representation
and then projecting away the extraneous variables. Although in this
case we already had the convex hull for $|\vertices| = 4$ (by direct
computation using \texttt{cdd}) it is clear that the same technique
could be used to construct the convex hull for $|\vertices| = 5$ and
above. The difficulty with this approach is identifying which
projections $u \geq 0$ generate facets. It was noted above that we can
restrict attention to $u$ which are extreme rays of the relevant
projection cone. However, in general, not all extreme rays generate
facets, it also necessary that the number of dimensions `lost' when
projecting the entire polytope matches the number lost when projecting
the face whose projection is the putative facet
\cite{balas05:_projec_liftin_exten_formul_integ_combin_optim}.
We do not investigate this here, leaving this issue for future work.
\end{document}